\documentclass{article}
\PassOptionsToPackage{svgnames,dvipsnames}{xcolor} %
\usepackage[svgnames,dvipsnames]{xcolor} %
\PassOptionsToPackage{final,kerning=true,spacing=true,factor=1100,stretch=15,shrink=15}{microtype}
\usepackage[final,kerning=true,spacing=true,factor=1100,stretch=15,shrink=15]{microtype}
\usepackage{amsmath}

\usepackage{amsthm}

\usepackage{float}
\newfloat{Template}{htbp}{loa}
\usepackage[labelfont=bf,textfont=bf]{caption} %
\usepackage{booktabs} %

\usepackage{xspace}

\usepackage{graphicx}
\usepackage[labelfont=bf,textfont=bf]{caption} %
\usepackage{subcaption}
\usepackage{alphalph}

\usepackage{bm}

\usepackage{enumitem}
\setlist[itemize]{label=$\bullet$}

\usepackage{url}

\usepackage{amssymb}
\usepackage{stmaryrd}
\usepackage{scalefnt}
\usepackage{mathtools}
\usepackage{etoolbox}

\usepackage{multirow}
\usepackage[shortcuts]{extdash} %

\usepackage{hyperref}

\newcommand{\eat}[1]{}
\newcommand{\kw}[1]{{\ensuremath {\mathsf{#1}}}\xspace}
\newcommand{\at}[1]{\protect\ensuremath{\mathsf{#1}}\xspace}

\newcommand{\tw}[1]{\text{\fontfamily{lmtt}\selectfont {#1}}\xspace} %

\newcommand{\stitle}[1]{\vspace{0.1ex}\noindent{\bf #1}}
\newcommand{\etitle}[1]{\vspace{0.0ex}\noindent{\underline{\smash{\em #1}}}}
\newcommand{\eetitle}[1]{\vspace{0.0ex}\noindent{{\em #1}}}

\newcommand{\ei}{\end{itemize}\vspace{1ex}}

\newcommand{\ee}{\end{enumerate}\vspace{1ex}}

\newcommand{\eop}{\hspace*{\fill}\mbox{$\Box$}}     %
\newcounter{example}

\newcounter{definition}%
\renewcommand{\thedefinition}{\arabic{definition}}

\newcounter{theorem}%
\renewcommand{\thetheorem}{\arabic{theorem}}
\newcounter{prop}%
\renewcommand{\theprop}{\arabic{theorem}}
\newcounter{lemma}%
\renewcommand{\thelemma}{\arabic{theorem}}
\newcounter{cor}%
\renewcommand{\thecor}{\arabic{theorem}}
\newenvironment{theorem}{\begin{em}
        \refstepcounter{theorem}
        {\vspace{1ex} \noindent\bf  Theorem  \thetheorem:}}{
      \end{em}\eop\vspace{1ex}} %
\newenvironment{lemma}{\begin{em}
        \refstepcounter{theorem}
        {\vspace{1.5ex}\noindent\bf Lemma \thelemma:}}{
      \end{em}\eop\vspace{1.5ex}} %
\newenvironment{cor}{\begin{em}
        \refstepcounter{theorem}
        {\vspace{1.5ex}\noindent\bf Corollary \thecor:}}{
        \end{em}\eop\vspace{1.5ex}} %
\newcounter{arule}
\renewcommand{\thearule}{\arabic{arule}}

\newcounter{claim}
\renewcommand{\theclaim}{\arabic{claim}}

\renewenvironment{proof}{
        \vspace{1ex}
        {\noindent\bf Proof:}}{\eop\vspace{1ex}}

\newcommand{\ie}{\emph{i.e.,}\xspace}
\newcommand{\eg}{\emph{e.g.,}\xspace}
\newcommand{\wrt}{\emph{w.r.t.}\xspace}
\newcommand{\aka}{\emph{a.k.a.}\xspace}

\newcommand{\ak}{\allowbreak}

\newcommand{\Ac}{\mathcal{A}}
\newcommand{\Bc}{\mathcal{B}}
\newcommand{\Cc}{\mathcal{C}}

\newcommand{\Dc}{\mathcal{D}}
\newcommand{\Sc}{\mathcal{S}}
\newcommand{\Pc}{\mathcal{P}}
\newcommand{\Uc}{\mathcal{U}}

\newcommand{\Yc}{\mathcal{Y}}

\makeatletter
\expandafter\def\csname ver@algorithm.sty\endcsname{9999/12/31}
\expandafter\def\csname ver@algorithmic.sty\endcsname{9999/12/31}
\makeatother

\usepackage[linesnumbered, ruled, vlined]{algorithm2e}
\SetAlgoSkip{} %

\usepackage{algpseudocode}

\SetKwRepeat{Do}{do}{while}%
\SetAlFnt{\footnotesize}
\SetAlCapFnt{\footnotesize}
\SetAlCapNameFnt{\footnotesize}
\SetAlCapHSkip{0pt}
\newcommand{\llIf}[2]{{\let\par\relax\lIf{#1}{#2}}}
\newcommand{\llFor}[2]{{\let\par\relax\lFor{#1}{#2}}}
\newcommand{\llForEach}[2]{{\let\par\relax\lForEach{#1}{#2}}}
\newcommand{\llElse}[1]{{\let\par\relax\lElse{#1}}}

\SetCommentSty{mycommfont}
\SetNoFillComment
\SetArgSty{textnormal}

\SetKwProg{proc}{Procedure}{}{}

\makeatletter
\newcommand{\nosemic}{\renewcommand{\@endalgocfline}{\relax}}%
\newcommand{\dosemic}{\renewcommand{\@endalgocfline}{\algocf@endline}}%
\let\oldnl\nl%
\newcommand{\nonl}{\renewcommand{\nl}{\let\nl\oldnl}}%
\makeatother

\SetKwFor{RepTimes}{repeat}{times}{end}

\newcounter{ccc}

\makeatletter
\newcommand{\xMapsto}[2][]{\ext@arrow 0599{\Mapstofill@}{#1}{#2}}
\def\Mapstofill@{\arrowfill@{\Mapstochar\Relbar}\Relbar\Rightarrow}
\makeatother

\newcommand{\mdcp}{\kw{SBEP}}
\newcommand\sbep\mdcp

\newcommand{\mscp}{\kw{DBEP}}
\newcommand\dbep\mscp

\newcommand{\dic}{\kw{dSBC}}
\newcommand\dsbc\dic
\newcommand{\sic}{\kw{sDBC}}
\newcommand\sdbc\sic

\makeatletter

\newcommand*\wthelper[2]{%
        \hbox{\dimen@\accentfontxheight#1%
                \accentfontxheight#11.3\dimen@
                $\m@th#1\widetilde{#2}$%
                \accentfontxheight#1\dimen@
        }%
}

\newcommand*\accentfontxheight[1]{%
        \fontdimen5\ifx#1\displaystyle
                \textfont
        \else\ifx#1\textstyle
                \textfont
        \else\ifx#1\scriptstyle
                \scriptfont
        \else
                \scriptscriptfont
        \fi\fi\fi3
}
\makeatother

\newcommand{\R}{\mathbb{R}}

\newcommand{\Oc}{\mathcal{O}}
\newcommand{\Ec}{\mathbf{E}}
\newcommand{\E}{\mathbb{E}}
\newcommand{\Mc}{\mathcal{M}}

\newcommand{\Lc}{\mathcal{L}}

\newcommand\dist{\at{dist}}

\newcommand{\ksim}{{\normalfont \tw{sim}}}

\newcommand{\arguana}{{\text{\small ArguAna}}}
\newcommand{\nfcorpus}{{\text{\small NFCorpus}}}
\newcommand{\scidocs}{{\text{\small SciDocs}}}
\newcommand{\scifact}{{\text{\small SciFact}}}
\newcommand{\fiqa}{{\text{\small FiQA}}}
\newcommand{\fever}{{\text{\small FEVER}}}
\newcommand{\stackexchange}{{\textit{StackExchangeClustering}}\xspace}
\newcommand{\reddit}{{\textit{RedditClustering}}\xspace}

\newcommand{\linear}{\at{Linear}}
\newcommand{\cca}{\at{CCA}}
\newcommand{\mlp}{\at{MLP}}
\newcommand{\rcsls}{\at{RCSLS}}

\newcommand{\pr}{\at{Proc}}
\newcommand{\ao}{\at{AO}}
\newcommand{\ugw}{\at{UGW}}
\newcommand{\mvh}{\at{GEH}}

\newcommand{\geh}{\mvh}

\newcommand{\gte}{\tw{GTE}}
\newcommand{\openai}{\tw{OpenAI}}
\newcommand{\mistral}{\tw{Mistral}}
\newcommand{\qwen}{\tw{Qwen}}
\newcommand{\kalm}{\tw{KaLM}}
\newcommand{\glove}{\tw{GloVe}}
\newcommand{\fasttext}{\tw{fastText}}

\colorlet{DarkRed}{red!70!black}
\definecolor{uoedarkred}{RGB}{165, 0, 52}
\definecolor{ta3chocolate}{rgb}{0.56078, 0.34902, 0.0078431}	%

\newcommand{\revise}[1]{{\color{blue}{#1}}}
\renewcommand{\revise}[1]{{\color{black}{#1}}}

\usepackage[accepted]{icml2026}

\usepackage[textsize=tiny]{todonotes}

\icmltitlerunning{Vector Linking via Cross-Model Local Isometric Consistency}

\begin{document}

\twocolumn[
\icmltitle{Vector Linking via Cross-Model Local Isometric Consistency}

  \begin{icmlauthorlist}
    \icmlauthor{Ziying Chen}{ed}
    \icmlauthor{Yang Cao}{ed}
    \icmlauthor{He Sun}{ed,siat}
    \icmlauthor{Beining Yang}{ed}
    \icmlauthor{Tianjian Yang}{ed}
  \end{icmlauthorlist}

  \icmlaffiliation{ed}{School of Informatics, University of Edinburgh, Edinburgh, United Kingdom}
  \icmlaffiliation{siat}{Shenzhen Institutes of Advanced Technology, Chinese Academy of Sciences, Shenzhen, China}

  \icmlcorrespondingauthor{Yang Cao}{yang.cao@ed.ac.uk}

  \vskip 0.3in
]

\printAffiliationsAndNotice{}  %

\begin{abstract}
We study {\em Vector Linking}: given two embedding clouds produced by different
black\-/box encoders over partially overlapping datasets, recover cross\-/model
object correspondences using only vectors. Empirically and theoretically, we
show that independently trained contrastive encoders exhibit local geometric consistency: short\-/range distances are approximately preserved up to a
scale factor, while long-range distances are not due to model-specific
distortion.  Building on this, we propose an iterative, reference-based
geometric embedding hashing that recovers vector links from a tiny seed set of
paired anchors. It represents each vector by distances to sampled paired
anchors, proposes candidate links via hash-space matching, and aggregates
evidence across views in a Beta--Bernoulli posterior to bootstrap high\-/confidence
links as new anchors. Experiments across multiple benchmarks and embedding model
pairs demonstrate accurate and robust linking under varying overlap, seed
budgets, and out\-/of\-/domain anchors, with applications to vector
database integration and cross\-/model clustering.
Code is available at \url{https://github.com/DBgroup-Edinburgh/VecLinking}.
\end{abstract}

\section{Introduction}
\label{sec-intro}

Information systems increasingly rely on embedding-based retrieval: large
collections of objects are mapped to vectors and indexed for similarity search.
In practice, however, embedding models evolve quickly, and different systems
often adopt different fine-tuned encoders. As a result, practitioners are left
with multiple vector indices whose representations are not directly comparable,
even when those indices contain many of the same objects. This interoperability
gap hinders unified retrieval, cross-index deduplication, joint clustering, and
vector database integration.

\stitle{Vector Linking.}
We study \emph{vector linking}: recovering which vectors in two embedding clouds
correspond to the same underlying object when the clouds are produced by
different black-box contrastive encoders and overlap only partially. 
Formally, let $\Oc_1$ and $\Oc_2$
be two datasets of objects with \emph{unknown} overlap
$\Omega = \Oc_1 \cap \Oc_2$. Let $f_1$ and $f_2$ be two encoders,
and let $\Ec_1 = f_1(\Oc_1)$ and $\Ec_2 = f_2(\Oc_2)$ be the resulting
embedding sets. We assume access only to $\Ec_{1}$, $\Ec_{2}$,
and a small seed set of paired anchors $S \subseteq M^*$, where $M^* =
\{(f_1(x), f_2(x)) : x \in \Omega\}\subseteq E_{1}\times E_{2}$.
The goal is to recover as many pairs in $M^*$ as possible without access to
the raw objects, model parameters, gradients, or retraining.

This setting differs from standard embedding alignment in two important ways.
First, the overlap is \emph{partial and unknown}: there is no global bijection
between the two embedding sets, and the non-overlapping regions do not simply
behave like outliers. Instead, they can substantially alter the global geometry
seen by each encoder. Second, we target a strict \emph{post-hoc black-box}
regime. Many compatibility and alignment methods assume access to training data,
encoder internals, or training-time intervention; here, only static vectors are
available. Together, these two properties make a single global transformation
unreliable. 

\stitle{Local isometric consistency}. 
Our starting point is a simple but robust finding. When we compare
pairwise distances between shared objects across independently trained
contrastive encoders, short distances remain strongly correlated while
long-range distances decorrelate quickly. Equivalently, small neighborhoods are
far more stable across models than the global arrangement of the embedding
clouds. Theoretically, we show that this pattern is not merely an empirical
coincidence: by analyzing a localized alignment-uniformity surrogate for
contrastive learning, under standard assumptions, we show that independently
trained contrastive encoders can induce locally isometric metrics up to scale.

\stitle{Geometric embedding hashing}.
Motivated by this observation, we propose \emph{Geometric Embedding Hashing}
(GEH). The basic unit of GEH is a \emph{distance-to-anchor signature}: given a
small set of paired anchors, each vector is represented by its distances to
those anchors within its own embedding space. If two vectors correspond to the
same object, and if the chosen anchors lie in their local neighborhoods, then
these relative distance patterns should remain similar up to scale even when the
global shapes of the two embedding clouds differ markedly. GEH therefore
compares normalized, scale-free signatures rather than raw distances.

A single anchor set is not locally informative for every point, so GEH does not
rely on one global hash. Instead, it repeatedly samples many small anchor
subsets, or \emph{views}, matches points independently in each induced hash
space, and treats resulting matches as noisy votes. A Beta-Bernoulli
posterior aggregates evidence across views, and high-confidence matches are
promoted as new anchors for the next round. This multi-view bootstrapping lets
GEH grow a tiny seed set into a large correspondence set while gracefully filtering
spurious collisions caused by model-specific distortion and partial overlap. 
\looseness = -1

We evaluate GEH on multiple BEIR benchmarks and five encoder pairs spanning
both API-based and open-weight models. Across varying overlap ratios, seed
budgets, and out-of-domain seed settings, GEH consistently outperforms eight
linear, nonlinear, and optimal-transport baselines using only 15 to 30 seed
pairs. For instance, with only 15 paired seeds, GEH achieves over 90\% recall on
\fiqa~\cite{fiqa} for \mistral and \openai. We further show that the recovered links
improve downstream tasks including vector database integration and cross-model clustering. 

The results suggest that vector linking is a practical primitive for embedding
interoperability, and that local geometric consistency across contrastive
encoders holds key to tackle it.\looseness = -1

\stitle{Contributions \& organization.}
We contribute as follows.
\begin{itemize}[leftmargin=3ex, itemsep=0.0ex, topsep=0.0ex, partopsep=0.0ex]
\item  We propose {\em vector linking}, the problem of recovering
correspondences between two black-box embedding clouds under partial, unknown
overlap.
\item We establish, both empirically and theoretically, a cross\-/model \emph{local
distance consistency} property, forming the foundation of encoder-invariant
hashing (Section~\ref{sec-local}).
\item We develop a multi-view geometric hashing algorithm with
  posterior-guided bootstrapping that accurately recovers vector links without
  accessing raw objects or model internals, using only tiny seeds
  (Sections~\ref{sec-hash}-\ref{sec-linking}).
\item We demonstrate accurate and robust linking across multiple benchmarks and embedding model pairs (Section~\ref{sec-exp}).
\item We further demonstrate its benefits to vector database integration and
  cross-model clustering (Section~\ref{sec-applications}). 
\end{itemize}

\stitle{Related Work.}
Geometric point set registration under partial overlap has been studied via
hypothesis testing (e.g., RANSAC \cite{10.1145/358669.358692}, TEASER++
\cite{yang2020teaserfastcertifiablepoint}), iterative refinement (e.g., ICP
\cite{121791}, Go-ICP \cite{Yang_2016}), and invariant signatures (geometric
hashing \cite{589995}).  These tools are primarily designed for 3D rigid space
and cannot handle high\-/dimensional heteroscedastic model\-/induced distortion
and unknown overlap that vector linking targets.

Embedding alignment methods for \eg bilingual lexicon induction,
learn a global mapping between spaces (linear/Procrustes, OT/GW)
\cite{mikolov2013Exploiting,xing-etal-2015-normalized,smith2017offline,LampleCRDJ18,Artetxe_2018,Alvarez-MelisJ18,pmlr-v89-grave19a},
often relying on an approximate global isomorphism.
Such global consistency has also been
exploited for domain adaptation
\cite{shen2021backwardcompatiblerepresentationlearning,HuBC0SL22,10.5555/2283516.2283652,wang2018visualdomainadaptationmanifold,ganin2016domainadversarialtrainingneuralnetworks,hoffman2017cycadacycleconsistentadversarialdomain},
which further demand training\-/time access not
available in the black-box vector linking setting.

Unlike embedding alignment that seeks a coupling that makes the {\em entire} spaces
globally comparable, vector linking instead seeks a {\em partial} one-to-one
correspondence relation on the unknown shared support, while leaving vectors
outside the overlap unmatched.
This creates an objective mismatch for global
alignment, as there is no global bijection to recover. Further, 
non-overlapping regions are structured and potentially large, so they do not
behave like removable outliers. Alignment can thus improve global fit on
unmatched regions while worsening correspondences on the overlap. \looseness = -1

Vector linking bootstraps downstream interoperability tasks such as cross-model
vector database integration~\cite{LA2M} and joint
clustering~\cite{enevoldsen2025mmtebmassivemultilingualtext}, which assumes that 
that reliable cross-model anchor pairs are already known. Vector
linking addresses this assumption by recovering correspondences from
black-box vector clouds.

\stitle{Conflict of Interest Disclosure}.
The authors declare no financial conflicts of interest. All authors are
affiliated solely with academic institutions and the embedding models evaluated in this work are independent third\-/party systems.

\section{Foundation of Embedding Hashing}
\label{sec-local}

This section provides
the geometric foundation behind the idea of encoder\-/invariant geometric
hashing. We first establish an empirical
short\-/to\-/long range transition in cross\-/model distance consistency
(Section~\ref{subsec-evidence}). We then provide a localized geometric
explanation for why contrastive encoders tend to preserve local geometry
(Section~\ref{subsec-contrastive}).

\subsection{Emergence of Local Distance Consistency}
\label{subsec-evidence}

We begin by quantifying how pairwise euclidean distances compare across embedding spaces.
Let $\Ec_1$ and $\Ec_2$ be embeddings of the same raw dataset $D$ produced
by two different encoders (\eg \mistral vs. \openai).
We sample pairs $(u,v)$ of objects from $D$, compute $d_{\Ec_1}(u,v)$ and
$d_{\Ec_2}(u,v)$, and bin pairs by $d_{\Ec_1}(u,v)$. We report, per bin, the
Pearson correlation between $d_{\Ec_1}(u,v)$ and $d_{\Ec_2}(u,v)$ in
Fig.~\ref{fig-correlation} across multiple BEIR benchmarks (See \ref{app:datasets} for more).
\looseness = -1

\begin{figure}[t!]
  \begin{center}
  \centerline{\includegraphics[width=0.85\columnwidth]{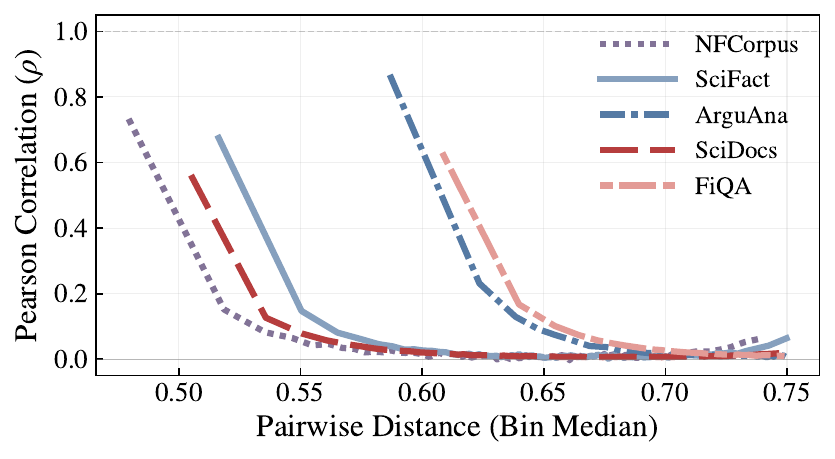}}
    \caption{Consistency (linear correlation) {\sc Vs.} vector distances:
      {\normalfont\footnotesize The x-axis
        shows the pairwise distance in the reference space (Mistral), while
        the y-axis reports the Pearson correlation ($\rho$) of these distances with
        their counterparts in the target space (OpenAI).}}
    \label{fig-correlation}
  \end{center}
\end{figure}

\stitle{Local consistency.}
For short distances, \eg $d_{\Ec_1}(u,\ak v)\ak \lesssim 0.57$ for \arguana, the
correlation is substantially positive with $\rho$ above 0.8.
Further, as shown in Appendix \ref{app:topk}, we also find that
top-$k$ retrieval exhibits strong consistency with small $k$ (\eg
$k\ak<\ak 10$) across embeddings compared to large $k$, which indicates that
nearby pairs under $\Ec_1$ tend to remain nearby under $\Ec_2$.
This seems to suggest that a linear correlation
in the short\-/distance regime has substantially more statistical significance compared
to long distances. 

\stitle{Global Decorrelation ($\rho \approx 0$)}. As distances increase, the
correlation decays rapidly, consistently nearing zero. This collapse indicates
that the scaling factor $\alpha$ is not globally constant.  While the models
agree on the ``shape'' of local neighborhoods, they diverge significantly on the
global arrangement and distribution of data objects. This renders long-range
distances inconsistent across embedding spaces.

This suggests that short distances are consistent across
encoders, and thus a distance-to-anchor vector that consists of short distances
can be a viable choice for encoder-invariant geometric hashing. \revise{
  We have also observed that such correlation is significant
weaker for non-contrastive encoders (see Appendix \ref{app:correlation} for
  more).} To further
confirm this, we still want to check if this is just an empirical coincidence or
something fundamental to the embedding models.

\subsection{A Geometric Justification of Local Isometries}
\label{subsec-contrastive}

We give a geometric explanation for the short-distance consistency of 
Fig.~\ref{fig-correlation}. We show that the phenomenon is inherent to
contrastive encoders rather than an empirical coincidence.

\stitle{Geometric modeling.}
We model data as a random variable $X$ supported on a smooth $d$-dimensional
manifold $\Mc \subset \R^{N}$ with density $p(x)$ \wrt the intrinsic
Riemannian volume measure on $\Mc$.
We denote geodesic distance by $d_{\Mc}(x,y)$, which serves as an intrinsic
notion of semantic dissimilarity between the data objects modeled by $x$ and
$y$.

An embedding model (\aka encoder) is a map $ f:\Mc\ak \to\ak \R^K$
with normalized outputs $f(\Mc) \subset S^{K-1}$.
Let the Jacobian of the encoder $f$ at $x$ be $J_{f}(x) \in
\R^{K\times d}$, where $d$ is the intrinsic dimension of
$\Mc$; it maps tangent vectors from the data manifold to the embedding space.
We denote the metric tensor induced by the encoder $f$
as $G_{f}(x) := J_{f}(x)^{\top}J_{f}(x) \in \R^{d\times d}$,
which characterizes how local distances are distorted by the map.
(A1) We assume that $f$ is twice differentiable and injective, and
that $G_{f}(x)$ is positive definite for all $x\in \Mc$, \ie
$J_{f}(x)$ has full rank $d$.
\looseness = -1

\stitle{Short-range neighborhoods.}
For each $ x\in\Mc $, let $\delta_{\Mc}(x)>0 $ be such that whenever
$d_{\Mc}(x,y)<\delta_{\Mc}(x) $ the shortest geodesic from $x$ to $y$
on $\Mc$ is unique. For such $y$, define the geodesic displacement $v(x,y)\in T_x\Mc$ as the tangent
vector at $x$ pointing toward $y$ along this unique shortest geodesic, normalized so that
$\|v(x,y)\| = d_{\Mc}(x,y)$.

\stitle{Contrastive learning}.
We consider encoders trained via contrastive learning with
InfoNCE-type contrastive loss objectives.
The training signal comes from:
(i) positive pairs, which are two semantic-preserving ``views'' of the same data object,
and (ii) negative pairs, which pair unrelated data objects.
Given $x\in\Mc$, a ``positive view'' $x^+$ is sampled by a stochastic
augmentation. We assume the following.
(A2: local positives) $d_{\Mc}(x,\ak x^+)\ak <\ak \delta_{\Mc}(x)$ almost surely.
(A3: local isotropy) Following \cite{DaoGRSSR19,wang2020understanding}, we assume that the
distribution of $x^{+}$ is centered and isotropic on the local
tangent space. Specifically, fix $x\ak \in\ak \Mc$ and let the geodesic
displacement vector $v\ak =\ak v(x,\ak x^+)\in T_x\Mc$.
We assume $\E[v\ak \mid\ak x]\ak =\ak 0$ and $\E[vv^\top\mid x]\ak =\ak c\,I_d$.
(We discuss relaxations to $c=c(x)$ in Appendix~\ref{app:assumptions}.)
\looseness = -1

\stitle{Global contrastive surrogate}.
We adopt the standard alignment-uniformity perspective on contrastive
learning~\cite{wang2020understanding,ZimmermannSSBB21}:
positives should be mapped close (alignment), while the overall representation
distribution should be spread out to avoid collapse (uniformity).
Let $X$ be the random data point on $\Mc$ and let $X^{+}$ denote its
positive view. Write $Z := f(X)$ for the induced random representation. 

Following~\cite{oord2018cpc,poole2019variational}, for InfoNCE-like
contrastive losses, alignment loss minimizes
$\E\big[\|f(x)\ak -\ak f(x^{+})\|^{2}\big]$, and uniformity can be
modeled by maximizing the entropy of the representation, which we
interpret intrinsically on $f(\Mc)$ as the standard differential
entropy~\cite{cover2006elements}. Specifically, let $q$ denote the
density of $Z$ on $f(\Mc)$ \wrt the induced $d$-dimensional surface
volume on $f(\Mc)$, and define entropy $H(Z) := -\E\big[\log q(Z)\big]$. 
Then we have the implementation-independent surrogate of contrastive
loss:
\[\Lc_{\lambda}(f) := \E\big[\|f(X)-f(X^{+})\|^{2}\big] - \lambda
  H(Z),\]
where $\lambda >0$ is a model-dependent coefficient that balances
alignment and uniformity. 
This surrogate is not identical to InfoNCE, but captures its geometric pressure
toward (i) local alignment of positives and (ii) spread of representations.

\stitle{A localized geometric view}.
As we focus on local geometric properties,  we develop a localized
view of the global contrastive loss surrogate.
Note that $\Lc_{\lambda}(f)$ is an expectation over $X$, it can be
written as an average of per-point contributions. Hence, we can
write $\mathcal{L}_\lambda(f)$ equivalently as 
\[\E\Big[\underbrace{\E\big[\|f(X)-f(X^+)\|^2\mid
    X\big]}_{\varphi_{\text{align}}:\text{local alignment at }X} + \underbrace{\lambda\,\log
    q(f(X))}_{\varphi_{\text{uni}}:\text{local uniformity at }X} \Big].\]

Motivated by this, we define a localized loss at $x\in \Mc$, denoted
by $\Lc_\lambda(x;f)$, such that
$\Lc_{\lambda}(f)=\E[\Lc_{\lambda}(X;f)]$; hence
\[\mathcal{L}_\lambda(x;f)\ak :=\ak \varphi_{\text{align}}(X=x)\ak +\ak \varphi_{\text{uni}}(X=x).\]
Under the encoder assumption (A1),
by area formula and change-of-variables \cite{lee2013smooth} we have
$q(f(x)) = p(x)/\sqrt{\det(G_f(x))}$.
Therefore, up to an $x$-dependent constant not involving $f$, we have
$\Lc_{\lambda}(x;\ak f)\ak =\ak
\varphi_{\text{align}}(X\ak=\ak x)\ak -\ak \frac{\lambda}{2}\log\det(G_{f}(x))\ak +\ak 
\text{const}(x)$.
Further by the short-range data augmentation (A2, A3),
$\varphi_{\text{align}}(X=x) \approx c\cdot \mathrm{tr}(G_f(x))$
(ignore higher-order terms; see Corollary~\ref{cor:align-trace} in Appendix~\ref{app:sec2-proofs}).
Thus the leading-order localized geometric objective is
\vspace{-1ex}
\[\widetilde{\mathcal{L}}_\lambda(x;f) := c\cdot\mathrm{tr}(G_f(x))-\frac{\lambda}{2}\log\det(G_f(x)).\vspace{-1ex}\]

\stitle{Locally optimal encoders}. We say that an encoder $f$ is {\em
  locally optimal at $x\in \Mc$} (\wrt $\lambda$) if its $G_{f}(x)$
minimizes the leading-order localized geometric objective, \ie
$G_f(x)\in\arg\min_{G\succ 0}
\left\{c\cdot\mathrm{tr}(G)-\frac{\lambda}{2}\log\det(G)\right\}$.

Consider a manifold $\Mc$. Then we show the following (see
  Appendix~\ref{app:sec2-proofs} for a full proof).

\begin{theorem}\label{thm-local-iso} 
Let $f_{1}$ and $f_{2}$ be two encoders locally optimal at
point $x\in \Mc$ with parameters $\lambda_{1}$ and $\lambda_{2}$,
respectively. For any $y\in \Mc$ with $d_{\Mc}(x, y) <
\delta_{\Mc}(x)$: 
\[\|f_{1}(x) - f_{1}(y)\| = \kappa \cdot \|f_{2}(x) -
  f_{2}(y)\| + \Oc(d_{\Mc}(x, y)^{2}),\]
where $\kappa = \sqrt{\lambda_{1}/\lambda_{2}}$.
\end{theorem}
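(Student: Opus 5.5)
We first solve the matrix minimization that defines local optimality, and then convert the resulting metric tensor into a statement about distances by a first-order Taylor expansion along the geodesic.

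\textbf{Step 1 (closed-form local optimum).} Fix $x$ and consider $\Phi_\lambda(G) = c\,\mathrm{tr}(G) - \frac{\lambda}{2}\log\det G$ on the cone $G\succ 0$. This function is strictly convex, since $\mathrm{tr}$ is linear and $-\log\det$ is strictly convex. Its gradient is $c I_d - \frac{\lambda}{2}G^{-1}$, and setting this to zero gives the unique minimizer $G^\star = \frac{\lambda}{2c} I_d$. Coercivity holds because $\Phi_\lambda\to\infty$ both when an eigenvalue tends to $0$ and when one tends to $\infty$, so this critical point is the global minimizer. One can also verify it eigenvalue by eigenvalue: $c\mu - \frac{\lambda}{2}\log\mu$ is minimized at $\mu=\lambda/(2c)$. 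Local optimality of $f_i$ at $x$ therefore forces
\[G_{f_i}(x) = \frac{\lambda_i}{2c} I_d, \qquad i=1,2.\]
Both encoders see the same augmentation law, so they share the same $c$ from (A3). Hence $G_{f_1}(x) = \frac{\lambda_1}{\lambda_2}\,G_{f_2}(x)$. In words, $J_{f_i}(x)/\sqrt{\lambda_i/(2c)}$ is an isometric embedding of $T_x\Mc$.

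\textbf{Step 2 (Taylor expansion along the geodesic).} For $y$ with $d_\Mc(x,y)<\delta_\Mc(x)$, write $y=\exp_x(v)$ with $v=v(x,y)$ and $\|v\|=d_\Mc(x,y)$; this is well defined by uniqueness of the shortest geodesic. Since $f_i$ is $C^2$ by (A1), expanding $t\mapsto f_i(\exp_x(tv))$ at $t=0$ gives
\[f_i(y)-f_i(x) = J_{f_i}(x)v + \Oc(\|v\|^2).\]
Taking norms with the reverse triangle inequality yields
\[\|f_i(y)-f_i(x)\| = \|J_{f_i}(x)v\| + \Oc(\|v\|^2).\]
By Step 1, $\|J_{f_i}(x)v\| = \sqrt{v^\top G_{f_i}(x) v} = \sqrt{\lambda_i/(2c)}\,\|v\|$.

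\textbf{Step 3 (combine).} Write $\|f_1(x)-f_1(y)\| = \sqrt{\lambda_1/(2c)}\,d + \Oc(d^2)$, where $d=d_\Mc(x,y)$. Substituting $\sqrt{\lambda_1/(2c)}\,d = \kappa\sqrt{\lambda_2/(2c)}\,d = \kappa\bigl(\|f_2(x)-f_2(y)\| + \Oc(d^2)\bigr)$ gives the claim with $\kappa=\sqrt{\lambda_1/\lambda_2}$. The constant $\kappa$ is fixed, so $\kappa\cdot\Oc(d^2)=\Oc(d^2)$.

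\textbf{Main obstacle.} The delicate step is the uniformity of the $\Oc(\|v\|^2)$ remainder in Step 2 over the whole ball $d_\Mc(x,y)<\delta_\Mc(x)$. This needs a bound on the second derivative of $f_i\circ\exp_x$ on that ball. I would obtain it from twice differentiability together with compactness of the closed geodesic ball; if necessary I would shrink $\delta_\Mc(x)$ slightly so the ball is relatively compact and $\exp_x$ is smooth on it. The remainder is then absorbed into a constant depending on $x$, $f_1$ and $f_2$, which is what the $\Oc(\cdot)$ notation permits.

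A second point to state explicitly is that $c$ is common to both encoders. If the augmentation scales differed, say $c_1\neq c_2$, the same argument would go through with $\kappa=\sqrt{\lambda_1 c_2/(\lambda_2 c_1)}$ in place of $\sqrt{\lambda_1/\lambda_2}$.
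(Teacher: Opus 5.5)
Your proposal is correct and follows the paper's proof essentially step for step. It uses the same first-order condition $cI_d-\frac{\lambda_i}{2}G^{-1}=0$, which forces $G_{f_i}(x)=\frac{\lambda_i}{2c}I_d$; the same first-order expansion along $\exp_x$; and the same rescaling by $\kappa$. Your remark on $c_1\neq c_2$ also matches the paper's point-dependent $\kappa(x)$ in the relaxation of (A3).

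The only local difference is how you pass from the vector expansion to distances. You apply the reverse triangle inequality to $f_i(y)-f_i(x)=J_{f_i}(x)v+\Oc(\|v\|^2)$. The paper instead expands the squared distance as $v^\top G_f(x)v+\Oc(\|v\|^3)$ and takes a square root using $G_f(x)\succ 0$; your route is slightly cleaner.

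Your uniformity concern is moot. The paper's $\Oc(\cdot)$ is taken as $y\to x$. In any case the outputs lie on $S^{K-1}$, so the discrepancy is at most $2+2\kappa$ and can be absorbed into $C\,d_{\Mc}(x,y)^2$ away from $x$ without shrinking $\delta_{\Mc}(x)$.
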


Theorem~\ref{thm-local-iso} is inherently local: it holds only for $y$ within
neighborhood radius $\delta_{\Mc}(x)$, consistent with the empirical decorrelation at
long distances in Fig.~\ref{fig-correlation}. Moreover, the same
analysis extends to a relaxation of (A3) where the local augmentation scale is
point-dependent, \ie $\E[vv^\top \mid x]=c(x)I_d$, yielding a region-dependent
$\kappa$ (see Appendix~\ref{app:assumptions}).

\section{Encoder-Invariant Geometric Hashing}
\label{sec-hash}

Theorem~\ref{thm-local-iso} establishes a local cross\-/model geometric
consistency: under local optimality, two contrastive encoders preserve
short-range distances up to a scale factor. We now translate this local
property into a concrete {\em hashing framework} for vector linking. The goal is
to construct, for each vector, a signature that is (approximately)
encoder\-/invariant over the overlap $\Omega$, so that matching objects collide
(or become nearest neighbors) in a shared hash space.

\stitle{Distance-to-anchor hash}.
A {\em geometric view} of $\Omega$ across $\Ec_{1}$ and
$\Ec_{2}$ is a set $\Ac$ of paired vectors $\{(a_1,\ak a_1'),\ak \dots,\ak (a_k,\ak
a_k')\}\ak \subseteq\ak \Ec_{1}\times \Ec_{2}$, where each pair $(a_j,\ak a_j')$ 
encodes the same overlap object in $\Omega$ and is referred to as a paired
anchor. Fix a distance function $\dist(\cdot, \cdot)$. Given a view $\Ac$, we define the {\em
  distance-to-anchor hash} of $u\in \Ec_{1}$ and $v\in \Ec_{2}$ \wrt $\Ac$ as:
$\mathbf{r}_{\Ac}(u)\ak :=\ak  \big(\dist(u,\ak a_1),\ak \dots,\ak \dist(u,\ak a_k)\big)\ak \in\ak \R^k$ and 
$\mathbf{r}'_{\Ac}(v)\ak :=\ak  \big(\dist(v,\ak a'_1),\ak \dots,\ak \dist(v,\ak a'_k)\big)\ak \in\ak \R^k$,
respectively. 

As Theorem~\ref{thm-local-iso} predicts an unknown scale factor between
encoders, we compare hashes $\mathbf{r}_{\Ac}(u)$ and $\mathbf{r}'_{\Ac}(v)$
using a {\em scale-free} similarity. A simple choice is cosine similarity over
normalization: $\ksim_{\Ac}(u,v) := \langle \widehat{\mathbf{r}}_{\Ac}(u),
\widehat{\mathbf{r}}'_{\Ac}(v)\rangle$, where 
$\widehat{\mathbf{r}}_{\Ac}(u) :=
\frac{\mathbf{r}_{\Ac}(u)}{\|\mathbf{r}_{\Ac}(u)\|_2}$ and 
$\widehat{\mathbf{r}}'_{\Ac}(v) :=
\frac{\mathbf{r}'_{\Ac}(v)}{\|\mathbf{r}'_{\Ac}(v)\|_2}$. 

\eat{%
\stitle{Locality $\Rightarrow$ encoder-invariant hashing}. 
Let $x\ak \in\ak \Omega$ be an overlap object with representations
$u\ak =\ak f_1(x)\ak \in\ak \Ec_1$ and $v\ak =\ak f_2(x)\ak \in\ak \Ec_2$.
Let the anchors in $\Ac$ correspond to overlap objects $x_1,\dots,x_k\ak \in\ak
\Omega$, \ie $a_j\ak =\ak f_1(x_j)$ and $a'_j\ak =\ak f_2(x_j)$.
If each anchor is in the {\em short-range} neighborhood of $x$
(\ie $ d_{\Mc}(x,\ak x_j)\ak <\ak \delta_{\Mc}(x)$; recall Section~\ref{subsec-contrastive}), then applying
Theorem~\ref{thm-local-iso} with $y=x_j$ yields, component-wise,
$\dist(u,a_j) = \kappa\,\dist(v,a'_j) + O\!\big(d_{\Mc}(x,x_j)^2\big)$ for
each $j\ak =\ak 1,\ak\dots,\ak k$. Equivalently, in vector form,
$\mathbf{r}_{\Ac}(u) = \kappa\,\mathbf{r}'_{\Ac}(v) + \boldsymbol{\epsilon}_{\Ac}(x)$,
where $\|\boldsymbol{\epsilon}_{\Ac}(x)\| = O\,(\max_j d_{\Mc}(x,x_j)^2)$.

Hence, when anchors are locally relevant, the two hash vectors are approximately
collinear, and thus their scale-free similarity is close to $1$. This implies
that distance\-/to\-/anchor hashing is {\em encoder-invariant} in the
short-range regime: true matches remain stable under different locally chosen
views, despite unknown global relationship between $\Ec_1$ and $\Ec_2$.
}%

\stitle{Locality $\Rightarrow$ encoder-invariant hashing}. 
Let $x \in \Omega$ have representations $u = f_1(x) \in \Ec_1$ and
$v = f_2(x) \in \Ec_2$. Consider a view
$\Ac=\{(a_1,a'_1),\dots,(a_k,a'_k)\}$ whose anchors correspond to overlap
objects $x_1,\dots,x_k \in \Omega$ (\ie $a_j=f_1(x_j)$ and
$a'_j=f_2(x_j)$). If all anchors are short\-/range for $x$, \ie 
$d_{\Mc}(x,x_j) < \delta_{\Mc}(x)$ for all $j$, then applying
Theorem~\ref{thm-local-iso} with $y=x_j$ yields the componentwise relation
$\dist(u,a_j)=\kappa\,\dist(v,a'_j)+O(d_{\Mc}(x,x_j)^2)$. Hence
\[
\mathbf{r}_{\Ac}(u) = \kappa\,\mathbf{r}'_{\Ac}(v) \;+\; \boldsymbol{\epsilon}_{\Ac}(x),
\|\boldsymbol{\epsilon}_{\Ac}(x)\| = O\!\big(\max_j d_{\Mc}(x,x_j)^2\big),\]
In the ideal case $\boldsymbol{\epsilon}_{\Ac}(x)=0$, the two hashes are exactly
related by a positive scalar, so after $\ell_2$ normalization they coincide and
our scale\-/free similarity satisfies $\ksim_{\Ac}(u,\ak v)=1$. When anchors are
sufficiently close, the second\-/order remainder is small, hence the normalized
hashes remain close and $\ksim_{\Ac}(u,\ak v)$ stays near $1$. Therefore,
distance\-/to\-/anchor hashing is approximately encoder\-/invariant in the short\-/range
regime, providing a geometric basis for vector linking via hash\-/space nearest
neighbors.

The locality requirement above is essential. As shown empirically in
Section~\ref{subsec-evidence}, long-range distances decorrelate across
encoders. Therefore, if a view contains anchors that are far from the candidate
vectors, those hash coordinates become dominated by model-specific distortion
and can overwhelm the signal from locally consistent distances. This implies
that global hashing with one fixed anchor set is unreliable for points that do
not lie near that anchor set, which is precisely the typical case under partial,
unknown overlap. %

\eat{%
\stitle{Localizing hashes}. While distance-to-anchor hashes are
encoder-invariant when they are constructed from anchors in the short-range
neighborhood, it is, however, nontrivial to decide what counts as short-range
distances, as the threshold $\delta_{\Mc}$ is determined by the data manifold
(Section~\ref{subsec-contrastive}).

Rather than explicitly estimating the unknown locality threshold $\delta_{\Mc}$,
we treat locality {\em statistically} by sampling many small, diverse geometric
views. Concretely, we repeatedly draw anchor subsets $\Ac^{(1)}, \dots,
\Ac^{(T)}$ from the available pool of paired anchors (we will discuss how we
extract the paired anchors in Section~\ref{sec-linking}).
Each view $\Ac^{(t)}$ defines its own hash space via the distance\-/to\-/anchor
signatures, and produces a set of candidate links by nearest\-/neighbor matching
under the scale\-/free similarity $\ksim_{\Ac^{(t)}}(\cdot,\cdot)$.
\looseness = -1

Each view contributes one noisy ``vote'' for every candidate link it proposes.
Theorem~\ref{thm-local-iso} implies that a true link $(u,v)$ is expected to be
proposed consistently across subsets of views whose anchors are short\-/range
for that link, while spurious links caused by long\-/range, model\-/specific
distortion are view\-/dependent and rarely accumulate.  Thus
aggregating votes across many local views yields a distortion\-/tolerant signal
for vector linking under partial, unknown overlap.\looseness = -1

An additional advantage of this multi-view hashing is that it does not require the
scale factor in Theorem\ref{thm-local-iso} to be globally constant (or
explicitly estimated). Since we compare per\-/view hashes using a scale\-/free
similarity, any positive multiplicative scaling of all anchor distances within a view cancels out, so the same
voting\-/and\-/aggregation mechanism remains effective even when the proportionality
factor varies across the data manifold, \eg under the relaxed isotropy model of
(A3), \ie $\E[vv^\top\mid x]=c(x)I_d$ (Appendix\ref{app:assumptions}).
}%

\stitle{Localizing hashes via multi-view voting}.
While distance-to-anchor hashes are encoder-invariant when they are constructed
from anchors in the short-range neighborhood, it is, however, nontrivial to
decide what counts as short-range distances, as the threshold $\delta_{\Mc}$ is
unknown and data\-/dependent.\looseness = -1

Rather than explicitly estimating the unknown locality threshold $\delta_{\Mc}$,
we treat locality {\em statistically} by sampling many small, diverse geometric views.
$\Ac^{(1)}, \dots, \Ac^{(T)}$ from the current pool of paired anchors (via bootstrapping, as we will see in Section~\ref{sec-linking}). Each view induces its own hash space and proposes
candidate links by nearest\-/neighbor matching under $\ksim_{\Ac^{(t)}}(\cdot,\cdot)$; the proposed
pairs are treated as votes. True links are supported across many views that happen to include
locally relevant anchors, whereas spurious collisions caused by long-range, model-specific
distortion are view-dependent and rarely accumulate.
Since matching within each view uses a scale-free similarity, this aggregation
remains effective even when the proportionality factor varies over the manifold,
\eg under the relaxed isotropy model of (A3), \ie $\E[vv^\top\mid x]=c(x)I_d$
(Appendix~\ref{app:assumptions}).

\begin{figure}[t!]
  \begin{center}
    \centerline{\includegraphics[width=0.85\columnwidth]{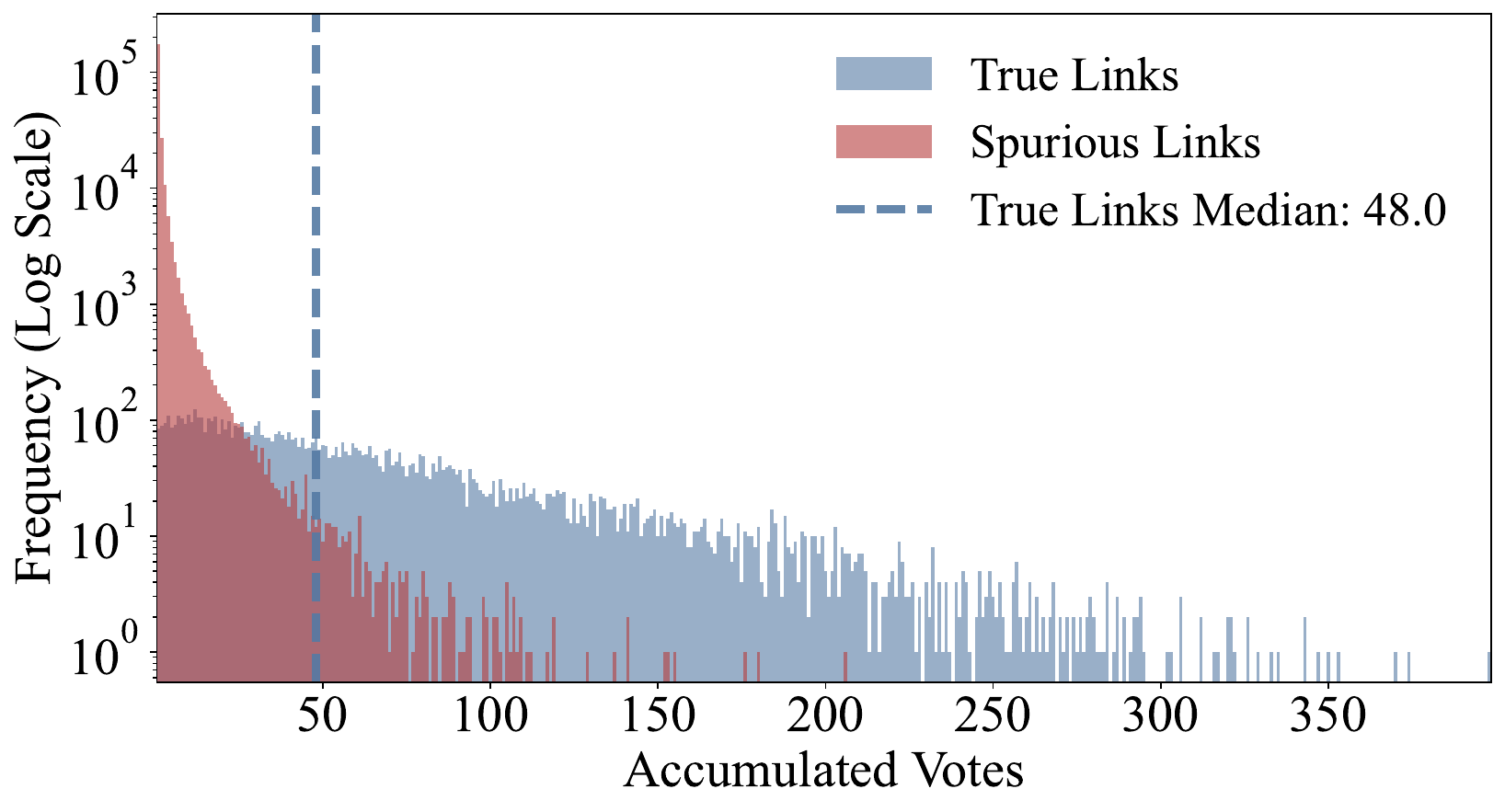}}
    \caption{Signal (true links) {\sc Vs.} Noise (spurious links):
      {\normalfont\footnotesize x-axis is accumulated votes for candidate
        links on \arguana (\gte vs. \openai); y-axis reports pair frequency (logscaled).}}
    \label{fig-vote_hist}
  \end{center}
\vspace{-2ex}  
\end{figure}

We tested this voting mechanism in a one\-/shot diagnostic
on \arguana~encoded by \gte~\cite{li2023towards} and \openai~\cite{openai}. We sampled a fixed pool of 500 ground\-/truth paired anchors
from the overlap and drew 500 random views, each containing 30 anchors. In each
view we computed hashes and collected voted links via hash collisions, then
tallied for each candidate pair the number of views in which it is
proposed. Fig.~\ref{fig-vote_hist} shows the resulting vote histogram on a log
scale. We observe a sharp separation: spurious links (red)
follow a steep exponential decay, with the vast majority receiving negligible
support. In contrast, true links (blue) exhibit a robust distribution with a
median of 48 votes, demonstrating that stable local geometry consistency allows
true links to survive across diverse views.
\looseness = -1

\section{Bootstrapping Multi-View Hashing}
\label{sec-linking}

Section~\ref{sec-hash} shows that distance-to-anchor hashing is reliable only
when a view contains anchors that are locally relevant, and we need many such
views to statistically form the localized hash via voting. We alleviate the high
demand of anchors by an  iterative bootstrapping framework that
starts from a tiny seed set of paired anchors and grows it using multi\-/view
hash collisions with posterior\-/guided promotion.\looseness = -1

\stitle{Framework}. The framework, denoted by \mvh
(\underline{G}eometric \underline{E}mbedding \underline{H}ashing)
and shown in Fig.~\ref{fig-framework}, takes as input embedding clouds $\Ec_{1}$
and $\Ec_{2}$, and a tiny seed set of paired anchors $\Sc\subseteq \Ec_{1}\times
\Ec_{2}$ known via \eg domain knowledge, and outputs a set of inferred links
$\Lc_{T} \subseteq \Ec_{1}\times \Ec_{2}$.

Starting with $\Lc_{0}:= \Sc$, \mvh generates $\Lc_{T}$ through iterations. At
iteration $t$, it derives $\Lc_{t}$ by using inferred links identified at  
iteration $t-1$ in $\Lc_{t-1}$ as hash anchors, 
in three steps:
\looseness = -1

\begin{itemize}[leftmargin=3ex, itemsep=0.0ex, topsep=0.0ex, partopsep=0.0ex]
\item {\em (View generation)} It samples $m_t$ geometric views $\Ac_{t,1},\ak
\dots,\ak \Ac_{t,m_t}$ from $\Lc_{t-1}$, such that each view $\Ac_{t,k}\ak
\subseteq \ak \Lc_{t-1}$ is a subset of paired anchors of fixed size $s_{t}$.

\item {\em (Per-view link proposals)}
For each view $\Ac_{t,k}$, it computes view\-/specific distance\-/to\-/anchor
hashes and proposes a set of candidate links
$\Pc_{t,k}\subseteq \Ec_1\times \Ec_2$ by nearest\-/neighbor matching in
the hash space.

\item {\em (Posterior-guided bootstrapping)}
It aggregates all proposals into a confidence score for each candidate link
and promotes high\-/confidence links as new anchors in $\Lc_t$.
\looseness = -1

\end{itemize}

\begin{figure}[t]
  \begin{center}
    \vspace{-1.5ex}
    \centerline{\includegraphics[width=\columnwidth]{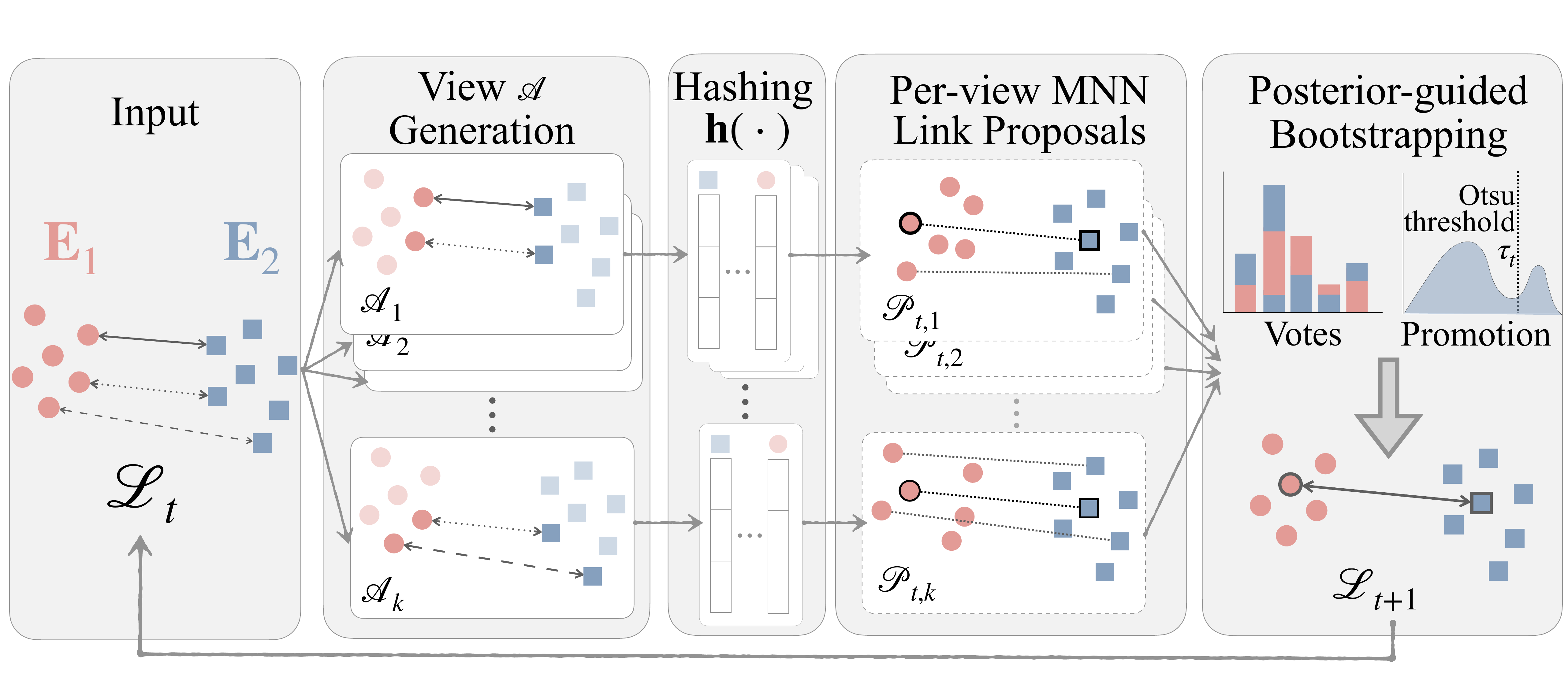}}
    \caption{The geometric embedding hashing (\mvh) framework}
    \label{fig-framework}
    \vspace{-1.0ex}
  \end{center}
\end{figure}

We next instantiate these three steps in full.

\stitle{View generation}.
At iteration $t$, we draw $m_t$ views, $\Ac_{t,1},\ak \dots,\ak \Ac_{t,m_t}\ak \subseteq\ak 
\Lc_{t-1}$, each of size $s_{t}$.  To stabilize early iterations, we
include the seed set $\Sc$ in every view.
This ensures views share a reliable core signal even when $\Lc_{t-1}$ is small.
\looseness = -1

\etitle{View sampling}.
The quality of a view depends on anchor diversity: clustered anchors yield
redundant hash coordinates and poor stability. Hence, we choose
each view by greedy Furthest Point Sampling (FPS)~\cite{GONZALEZ1985293}
on one side of the paired anchors.
Let $\Bc_{t-1}:=\Lc_{t-1}\setminus
\mathcal{S}$, \ie anchors bootstrapped at previous iteration $t-1$.  
To form view $\Ac_{t,k}$, FPS starts from a random element of $\Bc_{t-1}$
and then iteratively adds the paired anchor whose (chosen-side) vector maximizes
its minimum distance to the current subset. This encourages views with widely
separated anchors and improves stability (see
Appendix~\ref{app:well-conditioned-views} for detailed analysis).
\looseness = -1

\etitle{View scheduling}.
As bootstrapping (iteration) progresses, the anchor pool $\Lc_{t-1}$ grows. We increase
view diversity by sampling more views from $\Lc_{t-1}$ while reducing anchors
per view. Let $g_t := \frac{|\Lc_{t-1}|}{|\mathcal{S}|}$.
We define a scaling factor $\mathrm{sf}_t := 1 + c\log g_t$
for some $c\ge 0$; we set the number of views at iteration
$r$ to $m_t := \lceil m_0\,\mathrm{sf}_t \rceil$ and the size of
each view to $s_t := \left\lceil \rho_0\,|\Lc_{t-1}|/\mathrm{sf}_t \right\rceil$, with $\rho_0\in(0,1]$.

Intuitively, this increases the number of views as anchors grow, while making each
view smaller so that it preferentially reflects local geometry. Further, this stabilizes the per-anchor coverage, \ie each anchor appears in roughly a constant
number of views per iteration (see Appendix~\ref{app:view-scheduling}).

\stitle{Per-view link proposal}. 
Given a view $\Ac_{t,k}\ak =\ak \{(a_1,\ak a'_1),\ak \dots,\ak (a_{s_t},\ak
a'_{s_t})\}$, we construct a view-specific hash for each point and propose links
by similarity search in hash space.

\etitle{Kernelized hashes}.
Raw distance\-/to\-/anchor vectors can be dominated by far anchors where
cross\-/model distances are least consistent. We therefore apply a monotone
kernel that downweights large distances.
For $u \in \Ec_1$ (analogously for $v \in \Ec_2$), we use kernelized hash
$\big(\mathbf{h}_{t,k}(u)\big)_j := \exp\!\left(-\frac{\dist(u,a_j)}{\sigma_{t,k}}\right)$, for $j=1,\ak
\dots,\ak s_t$, where $\dist(\cdot,\cdot)$ is the cosine distance between
$\ell_2$-normalized embeddings and $\sigma_{t,k}>0$ is a per\-/view bandwidth by median heuristic: the median of the nonzero pairwise $\ell_2$ distances between hashes within the view.  
This preserves the rank ordering induced by distances while emphasizing the
short-range regime where Theorem~\ref{thm-local-iso} applies.
\looseness = -1

\etitle{Mutual nearest neighbor search in hash space}.
We compare hash vectors by cosine similarity in $\mathbb{R}^{s_t}$. To mitigate
hubness in nearest\-/neighbor search, we use CSLS~\cite{LampleCRDJ18} with
parameter $k_{\mathrm{CSLS}}$.
Let $\mathrm{csls}_{t,k}(u,v)$ denote the resulting view\-/specific similarity score.
We then propose only mutual nearest neighbors (MNN): a pair $(u,v)$ is
proposed if $v$ maximizes $\mathrm{csls}_{t,k}(u,\cdot)$ and $u$ maximizes
$\mathrm{csls}_{t,k}(\cdot,v)$. The set of all proposed pairs is denoted
$\mathcal{P}_{t,k}$.
\revise{
At multi-million scale, computing hashes and scoring $\kw{csls}_\Ac$ for all points 
can be costly, so we restrict hash construction and MNN to a local
$k_{\mathrm{NN}}$ neighborhood around each view's anchors (Appendix~\ref{app:per-view-impl}). 
This restriction is consistent with Section~\ref{subsec-evidence}: we save cost on candidate pairs
that are outside a view's local neighborhood which are
less reliable and contribute little useful voting signal.}

\stitle{Posterior-guided bootstrapping}. 
This step expands $\Lc_{t-1}$ from tiny seeds ($\Lc_{0} = \Sc$) by promoting
only links that are consistently supported across many views. This is exactly
the signal/noise separation observed in Section~\ref{sec-hash}.  

\etitle{Vote counts}.
Storing statistics for all $|\Ec_1|\cdot|\Ec_2|$ pairs is infeasible. We track
only pairs that are proposed at least once. Define the candidate universe up to
iteration $t$ by $\Cc_t := \bigcup_{r=1}^{t} \bigcup_{k=1}^{m_r} \Pc_{r,k}$. 
Each view contributes one binary vote for each $(u,v)\in\mathcal{C}_t$, denoted
by  $Y_{t,k}(u,v) := \mathbb{I}\big[(u,v)\in \mathcal{P}_{t,k}\big]$.
Define the cumulative number of positive votes as
$\nu_{(u,v),t} := \sum_{r=1}^{t}\ \sum_{k=1}^{m_r} Y_{r,k}(u,v)$, 
and the total number of views as $N_{\le t} := \sum_{r=1}^{t} m_r$.

\etitle{Beta--Bernoulli posterior as \revise{link confidence score}}.
We model success probability of $(u,\ak v)\ak\in\ak \Cc$ as $\theta_{(u,v)}$ with
an uninformative $\mathrm{Beta}(1,1)$ prior. By Beta--Bernoulli conjugacy,
\looseness = -1
\[
\theta_{(u,v)} \mid \Yc_{(u,v),0:t}
\sim
\mathrm{Beta}\bigl(1+\nu_{(u,v),t},\;1+N_{\le t}-\nu_{(u,v),t}\bigr),
\]
where $\Yc_{(u,v),0:t}:=\{Y_{r,k}(u,v)\}_{r=0,\dots,t;\,k=1,\dots,m_r}$.
\revise{We use the posterior mean
$\hat{\theta}_{(u,v),t} = (1+\nu_{(u,v),t})/(2+N_{\le t})$
as the link confidence score.}

\etitle{Adaptive link promotion}.
To identify links without tuning a fixed threshold, we compute an
iteration\-/specific threshold $\tau_t$ using Otsu's rule~\cite{4310076} applied
to the histogram of $\{\hat{\theta}_{(u,v),t}\}_{(u,v)\in\mathcal{C}_t}$.  We
promote all pairs with $\hat{\theta}_{(u,v),t}\ak \ge\ak \tau_t$.
Since each vector should match at most one target vector, we enforce one\-/to\-/one
matching by greedily selecting non-conflicting promoted pairs in decreasing
$\hat{\theta}_{(u,v),t}$.
\looseness = -1

As Otsu maximizes between\-/class variance, $\tau_t$ adapts to the
typically bimodal separation between consistently supported links (high
posterior) and transient collisions (low posterior), concentrating $\Lc_t$ 
on stable, consensus-supported links, while filtering out distortion-driven
spurious links.
\looseness = -1

\stitle{Termination \& complexity}. 
\mvh stops when $\Lc_t$ stabilizes, \eg when no new or very few pairs are
promoted for consecutive iterations. Per view,
hash construction costs $O((|\Ec_1|+|\Ec_2|)\,s_t)$, and nearest-neighbor search
is performed in $s_t$ dimensions; total cost scales with the number
$\sum_{t=1}^{T} m_t$ of evaluated views. In practice $m_t$ is small (tens) and per-view cost
is moderate.
(See Appendix~\ref{app:stopping} for details.)
\looseness = -1

\revise{
  For extremely large-scale cases, the local-neighborhood restriction bounds the hash construction to
  $O(s_t^{2}\,k_{\mathrm{NN}})$.
  The cost of identifying each view's local set is amortized by a one-time
  $k$-nearest-neighbor index over $E_1\cup E_2$ of build cost
  $O\bigl(|E_1|+|E_2|\bigr)$, reused across all $T$ iterations and
  $\sum_t m_t$ views. (See Appendix~\ref{app:per-view-impl} for more details.)
}
\looseness = -1

\section{Effectiveness}
\label{sec-exp}

We evaluate the effectiveness of \mvh for vector linking.

\vspace{-1ex}
\subsection{Experimental Setup}

\stitle{Benchmarks}.
We used \revise{6} BEIR~\cite{thakur2021beir} text retrieval benchmarks:
\nfcorpus, \scifact, \arguana, \scidocs, \fiqa, and \fever
(see Table~\ref{tab:beir_datasets} in Appendix~\ref{app:datasets}).
Each benchmark provides a corpus $\Dc$ of documents and built-in
query\-/answer pairs for retrieval performance evaluation.

\stitle{Vector linking setup}.
Given a corpus $\Dc$, we constructed two partially overlapping corpora $\Dc_1,
\Dc_2$ as follows. We sampled an overlap set $\Omega \subset \Dc$ and split the
residual $\Dc \setminus \Omega$ uniformly at random into two disjoint sets
$\Uc_1, \Uc_2$ and set $\Dc_1 := \Omega \cup \mathcal{U}_1$ and $\Dc_2 := \Omega
\cup \mathcal{U}_2$. We controlled the overlap level via $\alpha :=
\frac{|\mathcal{D}_1 \cap \mathcal{D}_2|}{|\mathcal{D}_1 \cup \mathcal{D}_2|} =
\frac{|\Omega|}{|\Omega| + |\mathcal{U}_1| + |\mathcal{U}_2|}$. We embedded
$\Dc_1$  and $\Dc_{2}$ with encoders $f_1$ and $f_2$, respectively, to obtain
the two embedding clouds:
$\Ec_1\ak :=\ak f_1(\mathcal{D}_1)$ and $\Ec_2\ak :=\ak f_2(\mathcal{D}_2)$.
We set the ground\-/truth correspondence set $M^* := \{(f_1(x), f_2(x)) : x \in
\Omega\}$. All methods were given only $\Ec_1, \Ec_2$ and were not told
$\Omega$, nor had access to $f_{i}$ or $\Dc_{i}$.
\looseness = -1

By default, we drew a seed set $\Sc\subset M^*$ by uniformly sampling overlap
items. We report results for three seed sizes
${|\Sc|}\ak \in \ak \{15,\ak
20,\ak 30\}$. We also evaluated out-of-domain (OOD) seeds which are drawn from a
different dataset (Section~\ref{subsec-ood}).
In each case, all methods received the same $\Sc$.

\stitle{Models}.  We used 5 pairs of major embedding models:
(a) \mistral (Mistral-embed \cite{mistral}) {\sc vs.} \openai
(Text-embedding-3-small \cite{openai}), 
(b) \gte (GTE-Qwen2-7B-instruct \cite{li2023towards}) {\sc vs.} \mistral,
(c) \gte\ {\sc vs.} \openai, 
(d) \qwen (Qwen3-Embedding-8B \cite{qwen3embedding}) {\sc vs.} \kalm
(KaLM-Embedding-Gemma3-12B \cite{zhao2025kalmembeddingv2superiortrainingtechniques}),  and
(e) \qwen\ {\sc vs.} \openai.

\begin{table}[t!]
  \centering
\scriptsize
\setlength{\tabcolsep}{0.2ex}
\renewcommand{\arraystretch}{1.2}
\caption{Vector linking at overlap $\alpha$=0.3, seeds $|\Sc|$=15:
  {\normalfont each cell shows {\bf precision/recall/F1}(\%); \textbf{bold}
    indicates best per column.}}
    \label{tab:main_results}
    \begin{tabular}{l|c|c|c|c|c}
    \hline
    \multirow{2}{*}{Method} & Qwen-OpenAI & GTE-OpenAI & GTE-Mistral & Mistral-OpenAI & Qwen-Kalm \\
      & NFCorpus & SciFact & ArguAna & SciDocs & FiQA \\
    \hline
    \linear & 2.8/7.2/4.0 & 2.2/5.0/3.1 & 0.4/0.3/0.3 & 2.2/1.7/1.9 & 0.7/0.8/0.8 \\
    \cca & 46.7/10.6/17.3 & 29.3/5.2/8.8 & 25.2/3.9/6.8 & 11.0/1.4/2.4 & 12.5/0.7/1.4 \\
    \mlp & 36.1/3.3/6.0 & 14.2/1.4/2.5 & 11.0/0.7/1.2 & 10.4/0.3/0.5 & 9.2/0.1/0.3 \\
    \at{RCSLS} & 38.9/3.0/5.6 & 28.6/2.1/3.8 & 26.2/0.2/0.4 & 32.6/0.3/0.6 & 15.8/0.2/0.4 \\
    \pr & 52.5/11.8/19.3 & 37.9/5.7/9.9 & 30.8/4.8/8.4 & 15.6/1.6/2.9 & 20.6/1.3/2.4 \\
    \ugw & \revise{14.8/2.4/4.2} & \revise{15.9/2.5/4.4} & \revise{5.9/0.1/0.2} & \revise{3.8/0.1/0.1} & \revise{2.4/0.0/0.0} \\
    \ao & 22.5/22.5/22.5 & 5.6/5.6/5.6 & 0.6/0.6/0.6 & 0.2/0.2/0.2 & 0.1/0.1/0.1 \\
    \textbf{\mvh} & \textbf{82.1}/\textbf{95.6}/\textbf{88.3} & \textbf{83.2}/\textbf{89.1}/\textbf{86.0} & \textbf{77.1}/\textbf{84.5}/\textbf{80.7} & \textbf{82.8}/\textbf{81.6}/\textbf{82.2} & \textbf{79.8}/\textbf{79.9}/\textbf{79.8} \\
    \hline
    \end{tabular}
\end{table}

\stitle{Baselines}. 
As there is no prior method that directly tackles vector linking with partial
overlap, we adapt embedding alignment methods by incorporating
ideas from \mvh: they first align $\Ec_{1}$ to $\Ec_{2}$ by supervising on
$\Sc$, yielding a shared embedding space with aligned $\Ec_{1}$ and $\Ec_{2}$;
they then use
CSLS based MNN search to identify links as \mvh does for link proposal. Specifically,
we trained 5 alignment methods:

\begin{itemize}[leftmargin=3ex, itemsep=0ex, topsep=0.0ex, partopsep=0.0ex]
\item \linear:  regression with MSE~\cite{mikolov2013Exploiting}.
\item \cca: canonical correlation analysis~\cite{lu-etal-2015-deep}.
\item \mlp: two-layer MLP trained with cosine loss on seeds.
\item \rcsls: RCSLS~\cite{joulin2018loss}, a retrieval-based linear mapping optimized via SGD.
\item \pr: orthogonal Procrustes alignment on seeds (closed-form SVD)~\cite{smith2017offline}.
\revise{\item \ugw: unbalanced Gromov-Wasserstein~\cite{ugw}, quadratic OT on intra-space distance matrices, warm-started from a seed-biased coupling}.

\end{itemize}
We also tested \ao(Anchor Optimization~\cite{ao}), which was given overlap size to adapt to the task.

\stitle{Metrics}.
We measured the output of each method, \ie set of predicted links $M$,
by (a) $\mathrm{Precision} \ak :=\ak \frac{|M\cap
M^*|}{|M|}$, (b) $\mathrm{Recall} := \frac{|M\cap M^*|}{|M^*|}$, and (c)
$\mathrm{F1} :=
\frac{2\,\mathrm{Precision}\cdot\mathrm{Recall}}{\mathrm{Precision}+\mathrm{Recall}}$.
In tables, we report all the metrics, and bold indicates the best. 

Further details can be found in Appendix~\ref{app:datasets}.

\subsection{Performance on Vector Linking}

\stitle{Overall}.
We first evaluated the performance of all methods for linking all 5 pairs of
embedding models across all datasets except \fever
(reserved for scalability test). Table~\ref{tab:main_results} summarizes
their recall, precision, and F1 with only 15 seed anchors in $\Sc$ for an
overlap ratio $\alpha = 30\%$ (see
Tables~\ref{tab:single_nfcorpus_mistral_openai}-\ref{tab:single_fiqa_qwen_kalm}
in Appendix~\ref{app:more_model_pairs} for a complete report).

\begin{table}[t]
\centering
\scriptsize
\caption{Scalability on \fever{} 
{\normalfont\footnotesize on \mistral$\leftrightarrow$\openai, single A100 80\,GB GPU,
overlap $\alpha=0.3$, $|\Sc|=30$. \textbf{Bold} marks the best per column;
runtime is end-to-end wall-clock seconds. (\ugw cannot complete on \fever so it is not reported.)}}
\label{tab:fever_scalability}
\begin{tabular}{lccc}
\toprule
Method & Precision (\%) & Recall (\%) & Runtime (s) \\
\midrule
\linear & 4.13 & 0.00 & 4420 \\
\cca    & 5.22 & 0.61 & \textbf{1613} \\
\mlp    & 1.93 & 0.01 & 4414 \\
\rcsls  & 6.40 & 0.11 & 3348 \\
\pr     & 9.03 & 0.73 & 4494 \\
\textbf{\mvh} & \textbf{93.8} & \textbf{68.9} & 3328 \\
\bottomrule
\end{tabular}
\vspace{-0.5ex}
\end{table}

The results are very encouraging: our method \mvh consistently outperforms all other methods across all cases by a substantial
margin. For example, on \fiqa, we need only 15 seeds to
recover the entire overlap between \qwen and \kalm models, achieving 79.9\% in recall, 79.8\% in precision, and 79.8\% in F1-score, respectively, while the second-best method achieves only 1.3\%, 20.6\%, and 2.4\%. The results for
linking across other model pairs and datasets are similar.

\begin{table*}[!htbp]
  \centering
  \scriptsize
\setlength{\tabcolsep}{1.5ex} %
\renewcommand{\arraystretch}{1.1}%
  \caption{Vector linking on \scifact{} (\mistral{}$\leftrightarrow$\openai{}):
  {\normalfont each cell reports \textbf{precision/recall/F1} (\%). Best values per metric are \textbf{bolded}.}}
  \label{tab:varyAS}
  \begin{tabular}{l|ccc|ccc|ccc}
  \hline
  \multirow{2}{*}{Method} & \multicolumn{3}{c|}{Overlap .15} & \multicolumn{3}{c|}{Overlap .20} & \multicolumn{3}{c}{Overlap .30} \\
   & Seeds 15 & Seeds 20 & Seeds 30 & Seeds 15 & Seeds 20 & Seeds 30 & Seeds 15 & Seeds 20 & Seeds 30 \\
  \hline
  \linear & 4.2/4.4/4.3 & 6.8/7.8/7.3 & 11.0/13.3/12.1 & 4.2/4.5/4.3 & 7.2/8.1/7.6 & 10.6/16.5/12.9 & 2.9/5.2/3.7 & 6.1/11.5/8.0 & 10.9/23.6/14.9 \\
  \cca & 16.7/5.5/8.3 & 21.3/11.5/14.9 & 34.4/29.2/31.6 & 16.8/4.9/7.6 & 28.9/10.0/14.9 & 42.0/26.2/32.3 & 26.3/4.9/8.2 & 33.4/10.0/15.4 & 53.7/26.4/35.4 \\
  \mlp & 15.4/1.6/2.8 & 19.4/4.2/6.9 & 26.0/11.2/15.7 & 15.1/1.9/3.3 & 22.7/5.2/8.5 & 35.8/13.2/19.2 & 21.7/1.3/2.5 & 29.0/3.8/6.7 & 40.3/11.8/18.3 \\
  \rcsls & 46.2/2.3/4.3 & 39.9/4.5/8.1 & 39.7/12.0/18.4 & 47.9/1.8/3.5 & 39.4/3.2/5.9 & 44.3/9.7/15.9 & 43.3/1.4/2.7 & 39.6/2.8/5.2 & 53.3/9.9/16.6 \\
  \pr & 17.2/4.7/7.4 & 26.3/12.0/16.5 & 39.8/29.2/33.7 & 23.4/6.8/10.5 & 35.8/11.5/17.4 & 48.3/27.0/34.6 & 29.3/5.3/8.9 & 41.6/10.5/16.8 & 62.3/29.6/40.1 \\
  \revise{\ugw} & \revise{3.8/0.4/0.7} & \revise{3.7/0.4/0.7} & \revise{3.6/0.4/0.7} & \revise{1.1/0.1/0.2} & \revise{1.1/0.1/0.2} & \revise{1.1/0.1/0.2} & \revise{4.7/0.3/0.5} & \revise{4.7/0.3/0.5} & \revise{3.5/0.2/0.4} \\
  \ao & 4.2/4.2/4.2 & 9.4/9.4/9.4 & 25.4/25.4/25.4 & 2.7/2.7/2.7 & 6.5/6.5/6.5 & 32.0/32.0/32.0 & 6.5/6.5/6.5 & 27.2/27.2/27.2 & 43.9/43.9/43.9 \\
  \textbf{\mvh} & \textbf{63.3}/\textbf{84.0}/\textbf{72.2} & \textbf{63.0}/\textbf{81.2}/\textbf{70.9} & \textbf{62.3}/\textbf{82.3}/\textbf{70.9} & \textbf{73.6}/\textbf{87.0}/\textbf{79.7} & \textbf{73.2}/\textbf{85.3}/\textbf{78.8} & \textbf{72.2}/\textbf{85.8}/\textbf{78.4} & \textbf{84.0}/\textbf{87.7}/\textbf{85.8} & \textbf{83.8}/\textbf{86.5}/\textbf{85.1} & \textbf{83.6}/\textbf{87.0}/\textbf{85.3} \\
  \hline
  \end{tabular}
\end{table*}

\stitle{Varying overlap and seeds}.  We further evaluated the impact of overlap ratio $\alpha$ and number of seed anchors $|\Sc|$.  The results over \scifact~dataset for linking \mistral and \openai models are shown in Table~\ref{tab:varyAS}
(see Appendix~\ref{app:more_model_pairs} for more). \looseness = -1

Our method, \mvh, is particularly robust and stable to the seed anchors; for
instance, with 15 seeds, it already performed as well as it did with 30 seeds,
while all other methods required more seeds to improve performance. All methods
performed better with larger overlap, but the large gap between \mvh and others
remained stable and significant.

\revise{
  \stitle{Scalability}. We also evaluated the scalability of \geh on FEVER (5.4 million
corpus) using \mistral $\leftrightarrow$ \openai embeddings, fixing the overlap ratio
to $\alpha=0.3$, seed budget $|\Sc|=30$. All methods are run on a single
NVIDIA A100 (80\,GB). For \mvh, the per-view local-set restriction described
in Section~\ref{sec-linking}  is activated due to the size of \fever{}.

Table~\ref{tab:fever_scalability} reports the precision, recall,
and end\-/to\-/end wall\-/clock runtime on a single A100. \mvh attains $93.8\%$
precision and $68.9\%$ recall in $3328$\,s. \mvh
remains within the same order of magnitude as the fastest baseline (\cca,
$\sim$$2.1\times$ slower) and is faster than all other baselines
despite its iterative design. The gap is large: relative to \cca,
\mvh{} improves precision from $5.22\%$ to $93.8\%$  and
recall from $0.61\%$ to $68.9\%$, at only $\sim$$2.1\times$
the wall-clock cost.

Note that CSLS+MNN link extraction is shared by all alignment baselines, so the
incremental cost of \mvh{} comes only from evaluating multiple views.
Crucially, each view operates in the low-dimensional distance-to-anchor hash
space induced by a small anchor set, rather than re-matching in the original
embedding space. The iterative procedure thus performs many moderate-cost
hash-space retrievals rather than repeated dense searches over full embedding
clouds. 
}

\eat{
\begin{figure}[t!]
  \begin{center}
    \centerline{\includegraphics[width=\columnwidth]{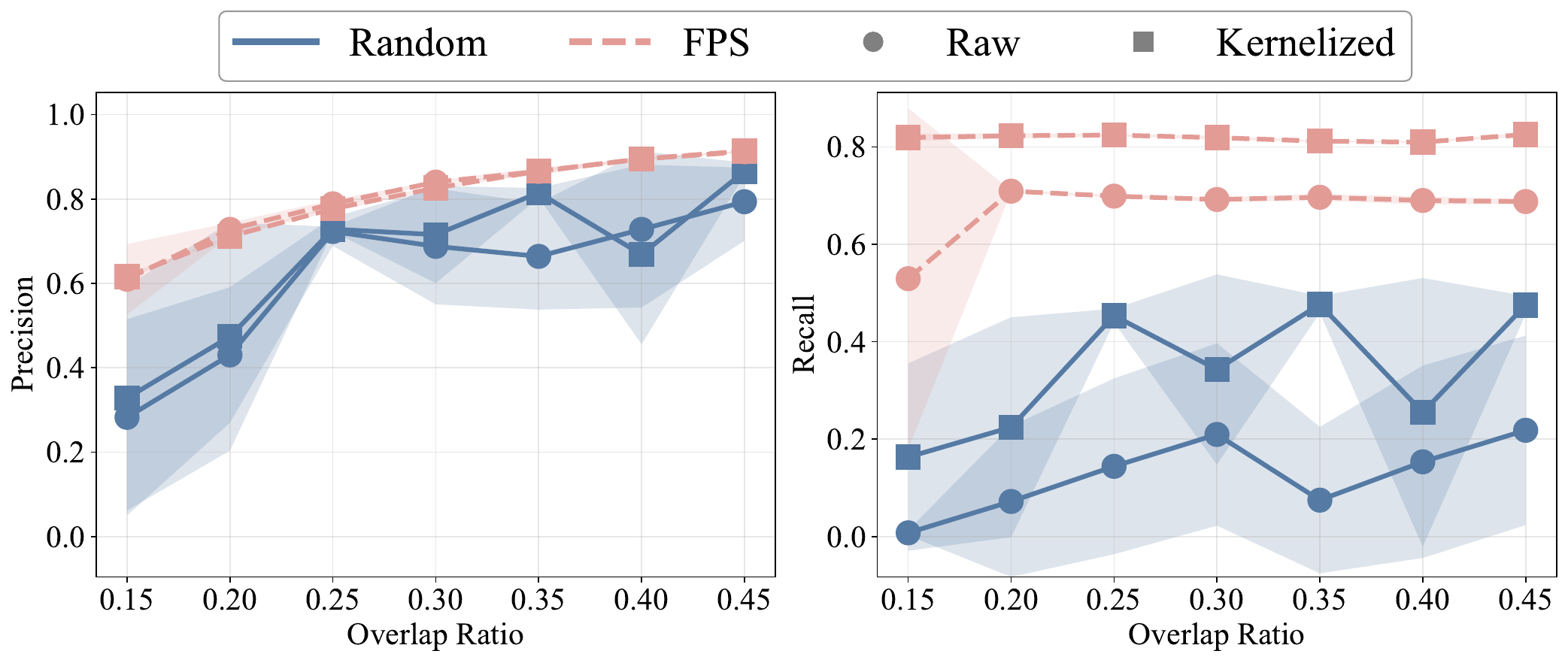}}
    \caption{Impact of view construction and distance encoding:
    {\normalfont\footnotesize on \scidocs\ with \mistral\ {\sc vs.} \openai, we compared the precision (left) and recall (right) of view strategies (FPS, Random), each with Kernelized or Raw distances.Shaded areas show variance ($\pm1$ std).
    }}
    \label{fig:viewrbf}
  \end{center}
  \vspace{-2.0ex}
\end{figure}

\subsection{Ablation Study}
\label{sec:ablation}

We ablate both the view generation strategy and the choice of hash encoding in
\mvh. With 15 seeds as fixed supervision, we vary the overlap ratio from 0.15 to
0.45 and compare two view generation strategies, namely, \emph{Random} (uniformly
sampled) and \emph{FPS} (furthest-point-sampled); each strategy is combined with two
distance encodings: \emph{Raw} (unprocessed distance $\dist$), \emph{Kernelized}
($\exp(-\dist/\sigma)$).  By default \mvh uses FPS with Kernelized
distance (Section~\ref{sec-linking}); \revise{see Appendix~\ref{app:ablation}
for per-component ablation details.}

Fig.~\ref{fig:viewrbf} reports the precision and recall of vector linking on \scidocs~with \mistral and \openai.
Kernelized encoding consistently dominates the Raw variant
for each view strategy, especially in recall, confirming that emphasizing
short\-/range distances improves robustness.
For view constructions, FPS always gives the better performance, particularly in
recall, indicating that dispersed geometric views are preferable in the presence
of cross-model distance distortion due to their better stability
(see analysis in Appendix~\ref{app:well-conditioned-views}).
}

\revise{
\begin{table}[t]
  \centering
  \scriptsize
  \setlength{\tabcolsep}{6pt}
  \caption{
    Ablation of \mvh\ components
  {\normalfont\footnotesize on \scidocs\ with \mistral\ {\sc vs.} \openai, at $\alpha$=0.15 and seeds $|\Sc|$=15: 
  \mvh denotes the complete pipeline: FPS view sampling, Kernelized signature, adaptive view scheduling, multi-view posterior aggregation, and bootstrapping. 
  Each subsequent row removes one component relative to \mvh; the row \mbox{$-$ FPS \& Kernel} removes both per-view components jointly. 
  Each cell reports the seed-level mean $\pm$ standard deviation (\%) of the corresponding metric. 
  A $\dag$ marks cells where the seed-level coefficient of variation ($\sigma/\mu$) exceeds $10\%$.
  }
  }
  \label{tab:scidocs-ablation}
  \begin{tabular}{lccc}
  \toprule
  Variant & Precision (\%) & Recall (\%) & F1 (\%) \\
  \midrule
  \mvh                               &  62.1 $\pm$  1.1     &  81.7 $\pm$  0.7     &  70.5 $\pm$  0.6     \\
  \;\;$-$ Kernel                     &  61.0 $\pm$  8.3\dag &  52.9 $\pm$ 35.0\dag &  51.0 $\pm$ 33.3\dag \\
  \;\;$-$ FPS                        &  32.8 $\pm$ 26.7\dag &  16.3 $\pm$ 19.2\dag &  20.9 $\pm$ 23.3\dag \\
  \;\;$-$ FPS \& Kernel              &  28.3 $\pm$ 23.3\dag &   0.8 $\pm$  0.5\dag &   1.4 $\pm$  1.1\dag \\
  \;\;$-$ Adaptive schedule          &  33.4 $\pm$  0.6     &  61.8 $\pm$  1.6     &  43.4 $\pm$  0.9     \\
  \;\;$-$ Multi-view voting          &  24.0 $\pm$  2.6\dag &  39.4 $\pm$  5.0\dag &  29.8 $\pm$  3.4\dag \\
  \;\;$-$ Bootstrapping              &   1.9 $\pm$  0.6\dag &   2.1 $\pm$  1.2\dag &   1.9 $\pm$  0.8\dag \\
  \bottomrule
  \end{tabular}
\end{table}

\subsection{Ablation Study}
\label{sec:ablation}

\stitle{Staged ablation}.
We ablate the five components of \mvh\ on \scidocs\ (\mistral\ {\sc vs.}\ \openai) with $\alpha\!=\!0.15$ and $|\Sc|\!=\!15$.
For each variant we run $10$ independent trials, each with a fresh random seed
controlling both the draw of $\Sc$ and the internal randomness of view
generation, and report the across-trial mean and standard deviation
($\mu\pm\sigma$).

Each variant is \mvh\ but with the named component swapped for a simpler
default. The replacement defaults are:
\looseness = -1

\begin{itemize}[itemsep=0.0ex,topsep=0.0ex,partopsep=0ex]
\item \textbf{$-$ Kernel}: use the raw distance vector $r_\Ac$ instead of the kernelized hash $h_\Ac$ (Section~\ref{sec-linking}).
  \item \textbf{$-$ FPS}: draw view anchors uniformly at random from $\Lc_{t-1}$ instead of by FPS (Section~\ref{sec-linking}).
  \item \textbf{$-$ FPS \& Kernel}: both per-view defaults applied jointly (raw
    hashes plus random view draws).
  \item \textbf{$-$ View scheduling}: freeze the schedule at iteration zero, $\rho_t\equiv\rho_0$ and $m_t\equiv m_0$ (Section~\ref{sec-linking}).
  \item \textbf{$-$ Multi-view voting}: use single-view MNN proposal, $m_t{=}1$ and $\Ac_{t,1}{=}\Lc_{t-1}$ (Section~\ref{sec-hash}).
  \item \textbf{$-$ Bootstrapping}: run a single iteration on the $\Sc$; no anchor-pool growth (Section~\ref{sec-linking}).
\end{itemize}

Table~\ref{tab:scidocs-ablation} reports the results. Bootstrapping and
multi-view voting account for most of the absolute performance.
Without bootstrapping, only the 15 seeds were available as
anchors and \mvh\ collapsed to $1.9/2.1/1.9$ for Precision(\%)/ Recall(\%)/ F1 (\%); without multi-view voting,
single-view link proposals could not separate true links from
distortion-driven collisions and performance fell to $24.0/39.4/29.8$.

Removing FPS or the kernel signature makes performance highly variable (\eg
recall $\sigma$ rose from $0.7$ to $19$--$35$): these two per-view components
act as variance reducers, FPS by spreading anchors so each view stays
well-conditioned and the kernel by suppressing the long-range distance regime
that decorrelates across encoders (Section~\ref{subsec-evidence}).}

\begin{figure}[t!]
  \begin{center}
    \centerline{\includegraphics[width=\columnwidth]{figures/ablation_acc_recall_n15.pdf}}
    \caption{Impact of view construction and distance encoding:
    {\normalfont\footnotesize on \scidocs\ with \mistral\ {\sc vs.} \openai, we compared the precision (left) and recall (right) of view strategies (FPS, Random), each with Kernelized or Raw distances.Shaded areas show variance ($\pm1$ std).
    }}
    \label{fig:viewrbf}
  \end{center}
  \vspace{-2.0ex}
\end{figure}

\stitle{View generation and hash encoding}.
We further examined the view generation strategy and hash encoding in
\mvh. With 15 fixed seeds, we vary the overlap ratio from 0.15 to
0.45 and compare two view generation strategies, namely, \emph{Random} (uniformly
sampled) and \emph{FPS} (furthest-point-sampled), each combined with two
distance encodings: \emph{Raw} (unprocessed distance $\dist$), \emph{Kernelized}
($\exp(-\dist/\sigma)$).

Fig.~\ref{fig:viewrbf} reports the precision and recall of vector linking on \scidocs~with \mistral and \openai.
Kernelized encoding consistently dominates the Raw variant
for each view strategy, especially in recall, confirming that emphasizing
short\-/range distances improves robustness.
For view constructions, FPS always gives the better performance, particularly in
recall, indicating that dispersed geometric views are preferable in the presence
of cross-model distance distortion due to their better stability
(see analysis in Appendix~\ref{app:well-conditioned-views}).

\subsection{Robustness to Domain Shifts (OOD Analysis)}
\label{subsec-ood}

In practice, exact vectors stored in a private index may be inaccessible; users
can encode a small public corpus with both models and use those embeddings as
references. To simulate this, we replace in-domain anchors with
\emph{out-of-domain (OOD)} anchors drawn from a different dataset.

For each target dataset, we construct \mistral and \openai embeddings, enforce a 30\% overlap, and draw 30 OOD seeds from a separate reference dataset. 
Fig.~\ref{fig:ood-heatmap} reports the precision and recall across all reference-target dataset pairs:
precision remains in 77-87\% range and recall mostly between 80-97\%. Additional results across seed budgets and
overlap ratios are given in Appendix~\ref{app:ood-heatmaps}. This indicates that small OOD corpora are
sufficient to serve as anchors for geometric hashing to link private embedding clouds.
\looseness = -1

\begin{figure}[t!]
\vspace{-1.5ex}  
  \begin{center}
    \centerline{\includegraphics[width=\columnwidth]{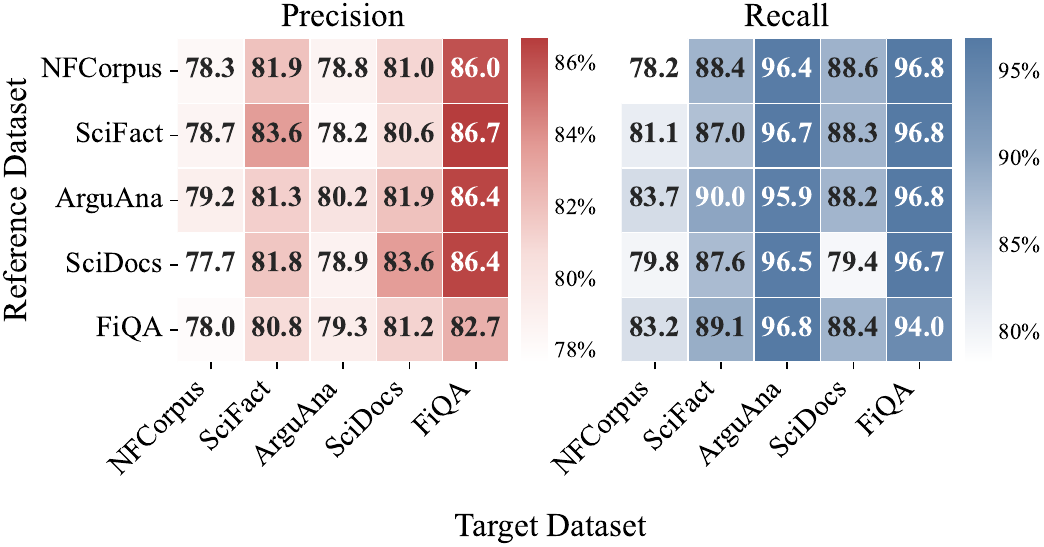}}
    \caption{Out-of-domain anchor transfer:
    {\normalfont\footnotesize
    Precision (left) and recall (right) of our method on five target datasets
    (columns) when supervised seeds are drawn from an out-of-domain reference
    dataset (rows). All runs use 30\% overlap in the target and 30 OOD seeds.}}
    \label{fig:ood-heatmap}
  \end{center}
\vspace{-1.5ex}  
\end{figure}

\section{Applications}
\label{sec-applications}

Finally, we demonstrate applications of vector linking. 

\stitle{Vector database integration}.
We demonstrate its benefit for \emph{vector database integration}, enabling
unified retrieval across vector databases embedded by distinct encoders.

\eetitle{Setup}. We used the integration protocol of \cite{LA2M},
which learns clustered Procrustes over known paired anchors to transform one
vector database and merge it with the target database for unified
querying. Instead of assuming paired anchors are given, we used
\mvh to infer links across databases and then apply the integration
protocol. Following \cite{LA2M}, we evaluated the integrated database 
via the recall@100 and NDCG@100 of benchmark queries.
\looseness = -1

\etitle{Baselines}. We compared against (i) \emph{Random}, random anchor pairing; (ii) \emph{Seed}, mapping learned from seed anchor pairs only; and (iii) \emph{Union} retrieval without cross-space mapping (directly taking the union of two databases). As an optimal reference, we also used a single model to re-embed the full unioned corpus encoded by the two databases, where the retrieval performance is the theoretical upper limit.
\looseness = -1

\etitle{Results}. Using the split in Section~\ref{sec-exp} with answer-free overlap (Appendix~\ref{sec:appendix_la2m}), we evaluate query performance of integrating \mistral and
\openai databases, with overlap ratio $\alpha$ varying from 5\% to 40\% and 30 seeds. 
Figure~\ref{fig:vecdb_integration} reports results over \scifact.
Our method substantially outperforms all baselines on both Recall@100 and NDCG@100, with performance improving as overlap increases, approaching to the theoretical limit
of using \mistral or \openai alone.

\begin{figure}[t!]
\vspace{-1.5ex}  
  \begin{center}
    \centerline{\includegraphics[width=\columnwidth]{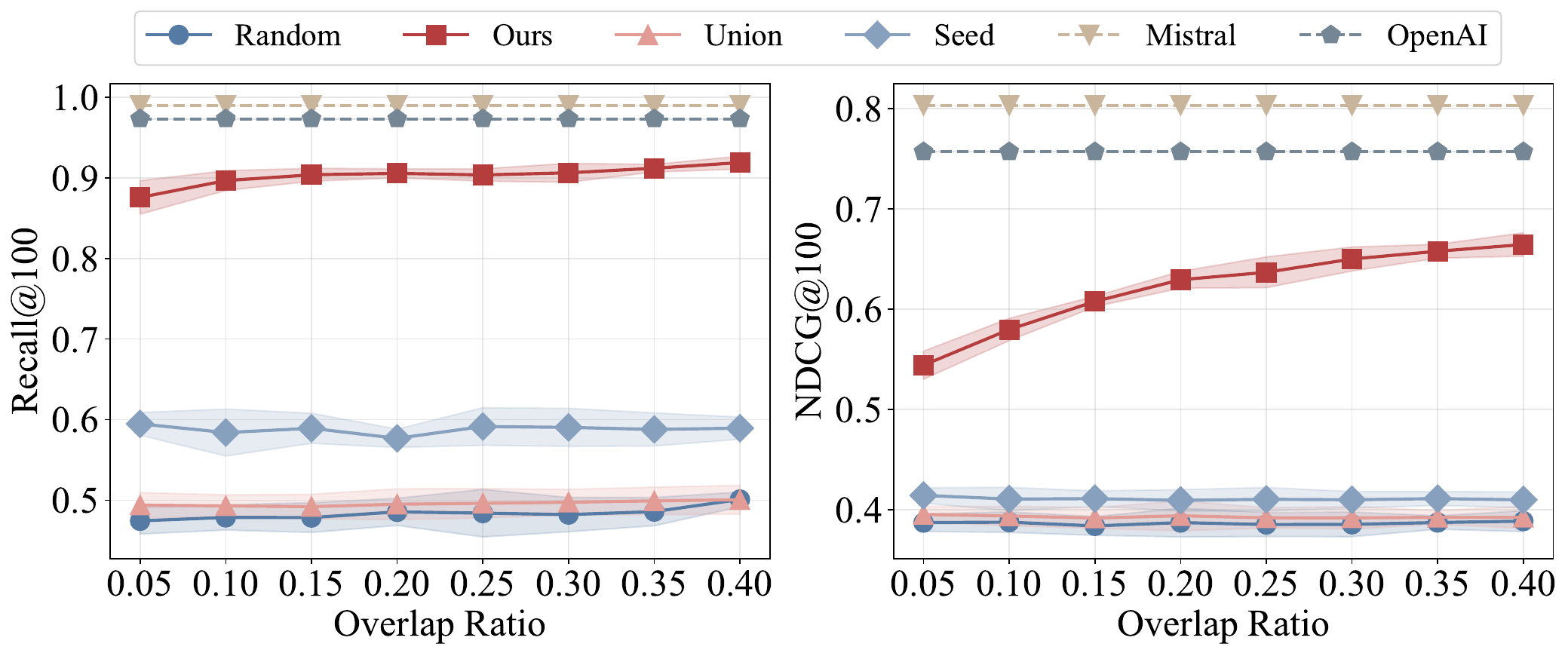}}
    \caption{Integrated vector database retrieval performance:
      \normalfont\footnotesize over \scifact, Recall@100 (left) and NDCG@100
      (right) vs. overlap ratio $\alpha$, where the overlap contains
      \emph{no benchmark answers}. \mistral and \openai are the
      theoretical upper limit of retrieval quality where we embed all objects with one single model.}
\label{fig:vecdb_integration}
\end{center}
\vspace{-1.5ex}
\end{figure}

\stitle{Cross-model clustering}.
We also demonstrate {\em cross\-/model clustering}. Using vector linking, we
stitched the two embedding sets and run clustering detection to recover clusters
spanning across datasets. Our method recovers consistent cross\-/embedding
cluster assignments for $75$--$98$\% of overlapping objects, and achieves
cluster quality within $\approx 1\%$ of the clusters obtained when all objects
are embedded by a single encoder (see Appendix~\ref{app:application} for
details).

\eat{%
\subsection{Cross-Model Clustering}
We introduce cross-model clustering, exploiting our formulation of vector linking to stitch partially overlapping embedding sets into a unified graph that recovers global semantic clusters.
\paragraph{Setup.} We use Qwen\cite{} and KaLM\cite{} to encode partially overlapping subsets of the \stackexchange~ and \reddit~ datasets\cite{geigle2021tweactransformerextendableqa}. We construct independent $k$-NN graphs for each set and fuse them into a joint graph by merging correspondence pairs into unified super-nodes. We then apply Leiden community detection to this joint graph and evaluate with V-measure against ground-truth semantic labels. To quantify how well overlapping points are aligned across models, we also report an Overlap Agreement Rate (OAR): the fraction of overlapping pairs that are assigned to the same cluster in the merged graph.
\paragraph{Results.} 
As shown in table~\ref{tab:clustering_vmeasure}, naive concatenation yields an OAR of $0\%$,  confirming that the two model-specific graphs remain effectively disconnected and overlapping points are assigned to unrelated clusters.  Using only the initial seeds improves connectivity but still produces low V-measures and poor overlap agreement. In contrast, Vector Linking (OURS) achieves V-measures in the range $0.65$-$0.69$, matching or slightly surpassing the strong baselines that cluster the entire corpus using embeddings from a single model, and yielding approximately 90\% overlapped agreements. These results show that our geometric hashing provides a high-fidelity bridge between embedding spaces, aligning the underlying manifold topology well enough to recover semantic clusters. Additional clustering metrics (NMI/ARI) are provided in Appendix~ \ref{sec:res}.
\begin{table}[!htbp]
    \centering
    \footnotesize
    \setlength{\tabcolsep}{2pt}
    \caption{Cross-model clustering performance for Qwen$\leftrightarrow$KaLM embeddings.
    {\normalfont\scriptsize
    Each Concat/Seed/OURS cell reports V-measure / Overlap Agreement Rate (OAR, \%)
    for a given overlap ratio (Ovlp) and seed number (Seed = number of overlapping
    pairs revealed as initial references). Qwen/KaLM columns show single-model
    V-measure (\%) obtained by clustering the full corpus encoded by each model
    separately. Bold indicates the best V-measure among (Concat/Seed/OURS) for each configuration.}}
    \label{tab:clustering_vmeasure}
    \begin{tabular}{lccccc|c|c}
    \hline
    Dataset & Ovlp & Seed & Concat & Seed & OURS & Qwen & KaLM \\
    \hline
    \multirow{4}{*}{Reddit} & \multirow{2}{*}{0.2} & 20 & 52.8/0.0 & 53.4/6.6 & \textbf{62.8/83.2} & \multirow{4}{*}{63.5} & \multirow{4}{*}{65.6} \\
     & & 30 & 52.8/0.0 & 54.0/10.5 & \textbf{63.0/84.6} & & \\
    \cline{2-6}
     & \multirow{2}{*}{0.3} & 20 & 51.7/0.0 & 53.6/4.6 & \textbf{66.5/96.9} & & \\
     & & 30 & 51.7/0.0 & 53.9/7.7 & \textbf{67.1/97.6} & & \\
    \hline
    \multirow{4}{*}{StackEx} & \multirow{2}{*}{0.2} & 20 & 62.2/0.0 & 62.5/7.1 & \textbf{67.0/75.9} & \multirow{4}{*}{68.7} & \multirow{4}{*}{68.8} \\
     & & 30 & 62.2/0.0 & 62.4/9.8 & \textbf{67.4/88.8} & & \\
    \cline{2-6}
     & \multirow{2}{*}{0.3} & 20 & 61.8/0.0 & 61.8/5.9 & \textbf{67.5/78.8} & & \\
     & & 30 & 61.8/0.0 & 62.1/8.3 & \textbf{68.3/91.9} & & \\
    \hline
    \end{tabular}
\end{table}

}%

\section{Conclusion}
\label{sec-conclusion}

We introduced vector linking, recovering cross\-/model correspondences from two
black\-/box embedding clouds under partial, unknown overlap.
Our core observation is that independently trained contrastive encoders exhibit
{\em local} cross\-/model geometric consistency. This motivates
encoder\-/invariant geometric hashing based on distance\-/to\-/anchor signatures, and we instantiate it with a multi\-/view iterative
algorithm that bootstraps a large anchor pool from a tiny seed set that promotes
short\-/range distances. Experiments across multiple benchmarks and model pairs
show robust, high\-/accuracy linking and enable downstream tasks such as vector
database integration and cross\-/model clustering.

Future work includes reducing seed assumptions, extending to multi\-/model
linking, and studying when local consistency holds beyond the current contrastive surrogate.

\clearpage
\section*{Impact Statement}
This paper presents work whose goal is to advance the field of Machine
Learning. There are many potential societal consequences of our work, none
which we feel must be specifically highlighted here.

\section*{Acknowledgements}
We thank the anonymous ICML reviewers and area chair for their constructive feedback.
This work is supported by RAEng RF\textbackslash 201920\textbackslash
19\textbackslash 319 and the Huawei-Edinburgh Joint Lab.

\bibliography{paper}
\bibliographystyle{icml2026}

\clearpage
\onecolumn
\appendix
\renewcommand{\thelemma}{\Alph{theorem}}
\renewcommand{\thecor}{\Alph{theorem}}
\renewcommand{\thetheorem}{\Alph{theorem}}
\setcounter{theorem}{0}

\section{Proofs and Additional Details of Section~\ref{sec-local}}
\label{app:sec2-proofs}

\subsection{Localizing the alignment term $\varphi_{\mathrm{align}}$ in $\Lc_{\lambda}(f)$}

Fix an anchor point $x\in\Mc$. Let $\exp_x:T_x\Mc\to\Mc$ denote the Riemannian
exponential map. %
For any positive view $x^+$ satisfying (A2), let $v:=v(x,x^+)\in T_x\Mc$ be
the geodesic displacement, so $x^+=\exp_x(v)$ and $\|v\|=d_{\Mc}(x,x^+)$.
We write $O(\|v\|^k)$ to denote a scalar remainder term $\epsilon_{k}(v)$ such that
there exist constants $r_0>0$ and $C$ with
$|\epsilon_{k}(v)|\le C\|v\|^k$ for all $\|v\|\le r_0$.
We use $\|\cdot\|$ for the Euclidean norm on vectors and the corresponding
induced operator norms on linear/bilinear maps, and $\langle\cdot,\cdot\rangle$
for the standard inner product on $\R^K$.
For a differentiable map $h$ between finite-dimensional vector spaces, $Dh(u)$
denotes its differential (Jacobian as a linear map) at $u$.

\begin{lemma}\label{lem:align-exp}
Under assumptions (A1)--(A2),
\[\|f(x)-f(x^+)\|^2 \;=\; v^\top G_f(x)\,v \;+\; O(\|v\|^3).\vspace{-2.2em}\]
\end{lemma}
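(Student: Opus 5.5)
We compose $f$ with the Riemannian exponential map and Taylor-expand in the tangent space. Define $g:=f\circ\exp_x$, restricted to a ball $B_{r}(0)\subset T_x\Mc$ with $r\le\delta_{\Mc}(x)$. On this ball $\exp_x$ is smooth; by (A2) the displacement $v=v(x,x^+)$ lies in it and $x^+=\exp_x(v)$. Since $f$ is twice differentiable by (A1), $g$ is twice differentiable. Its derivatives at the origin are $g(0)=f(x)$ and $Dg(0)=J_f(x)\,D\exp_x(0)=J_f(x)$, because $D\exp_x(0)$ is the identity on $T_x\Mc$. Here we identify $T_x\Mc\cong\R^d$ through the orthonormal frame in which $J_f(x)$ and $G_f(x)$ are expressed.

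Next we apply Taylor's theorem with remainder to $g$ at $0$:
\[
f(x^+)-f(x)=g(v)-g(0)=J_f(x)\,v+R(v),
\]
where $R(v)=\tfrac12 D^2g(\xi)[v,v]$ in mean-value or integral form. Choosing a compact closed ball $\overline{B_{r_0}(0)}$ with $r_0<r$, continuity of the second derivative on that ball gives a uniform constant $C_2=\sup\|D^2g\|$. Hence $\|R(v)\|\le \tfrac{C_2}{2}\|v\|^2$ for all $\|v\|\le r_0$.

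Expanding the square gives
\[
\|f(x^+)-f(x)\|^2=\|J_f(x)v\|^2+2\langle J_f(x)v,R(v)\rangle+\|R(v)\|^2 .
\]
The first term is exactly $v^\top J_f(x)^\top J_f(x)v=v^\top G_f(x)v$. By Cauchy–Schwarz, the cross term is bounded by $2\|J_f(x)\|\,\tfrac{C_2}{2}\|v\|^3$. The last term is at most $\tfrac{C_2^2}{4}\|v\|^4\le \tfrac{C_2^2 r_0}{4}\|v\|^3$ on the ball. Together these give a remainder $\epsilon_3(v)$ with $|\epsilon_3(v)|\le C\|v\|^3$ for $\|v\|\le r_0$, which matches the paper's definition of $O(\|v\|^3)$.

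The main obstacle is regularity bookkeeping rather than computation. We need a uniform bound on the second derivative near $0$, and mere twice-differentiability does not guarantee one. We handle this either by reading (A1) as $C^2$, which is the natural reading, or, if only pointwise twice-differentiability is available, by using the Peano remainder $R(v)=\tfrac12 D^2g(0)[v,v]+o(\|v\|^2)$. The Peano form still yields $\|R(v)\|\le C'\|v\|^2$ for small $\|v\|$, and that is all the cross-term bound requires. We also need $D\exp_x(0)=\mathrm{id}$ and smoothness of $\exp_x$ within the injectivity radius, both standard facts cited from \cite{lee2013smooth}.
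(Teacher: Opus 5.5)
Your proposal is correct and follows the paper's proof essentially step for step. You compose $f$ with $\exp_x$, Taylor-expand $g=f\circ\exp_x$ at $0$ using $Dg(0)=J_f(x)$ (since $D(\exp_x)(0)=\mathrm{Id}$), expand the square, and bound the cross term by Cauchy--Schwarz and the $\|R(v)\|^2$ term by $\|v\|^4\le r_0\|v\|^3$. Your added regularity remark is a harmless refinement: you read (A1) as $C^2$, which the paper does tacitly, or else fall back on the Peano remainder. One small point: for $\R^K$-valued $g$, only the integral form of the remainder is valid, not a single-$\xi$ mean-value form, but it yields the same bound.
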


\begin{proof}
Fix $x\in\Mc$. Define $g(u):= f(\exp_x(u)) $ for $u$ in a neighborhood
of $0\in T_x\Mc$. By (A1) and the smoothness of $\exp_x$, the map $ g$ is $C^2 $
in a neighborhood of 0. Hence, by multivariate Taylor's theorem, there exist
constants $r_0>0 $ and $C$ such that for all $ u\in T_x\Mc$ with
$\|u\|\le r_0$, \[g(u)=g(0)+Dg(0)\,u + \rho(u), \qquad \|\rho(u)\|\le C\|u\|^2.\]
By the chain rule, $Dg(0)=Df(x)\circ D(\exp_x)(0)$.
Since $D(\exp_x)(0)=\mathrm{Id}_{T_x\Mc}$ (\eg~\cite{lee2013smooth}), we have $Dg(0)=Df(x)$.
In an orthonormal basis of $T_x\Mc$, $Df(x)$ is represented by $J_f(x)$ and
$G_f(x)=J_f(x)^\top J_f(x)$.
Let $v=v(x,x^+)$ and assume $\|v\|\le r_0$. Since $x^+=\exp_x(v)$, we have
$f(x^+)-f(x)=g(v)-g(0)=J_f(x)\,v+\rho(v)$ with $\|\rho(v)\|\le C\|v\|^2$. Therefore, 
$\displaystyle \|f(x^+)-f(x)\|^2 =\|J_f(x)v+\rho(v)\|^2 = \|J_f(x)v\|^2 + 2\langle J_f(x)v,\rho(v)\rangle + \|\rho(v)\|^2$.
The cross term satisfies
\[
\big|2\langle J_f(x)v,\rho(v)\rangle\big| \le 
2\|J_f(x)v\|\,\|\rho(v)\| \le 2\|J_f(x)\|\,\|v\|\cdot C\|v\|^2 = O(\|v\|^3),
\]
and the last term satisfies $\|\rho(v)\|^2\le C^2\|v\|^4=O(\|v\|^3)$ as
$\|v\|\to 0$. Hence
$\displaystyle  \|f(x^+)-f(x)\|^2 = \|J_f(x)v\|^2 + O(\|v\|^3) = v^\top
J_f(x)^\top J_f(x)v + O(\|v\|^3) = v^\top G_f(x)\,v + O(\|v\|^3)$, which proves the lemma.
\end{proof}

This immediately gives us a rewriting of the alignment term in the localized
surrogate used in Section~\ref{subsec-contrastive}.

\begin{cor}\label{cor:align-trace}
Under (A1)--(A3),
\[
\E[\|f(x)-f(x^+)\|^2\mid x]
=
c\cdot\mathrm{tr}(G_f(x))
+
O\big(\E[\|v\|^3\mid x]\big).
\]
\end{cor}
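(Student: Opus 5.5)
The plan is to take the conditional expectation of the pointwise expansion in Lemma~\ref{lem:align-exp} given $x$, and then use the isotropy assumption (A3) to turn the quadratic form into a trace.

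\textbf{Step 1 (pointwise bound).} For a fixed $x$, Lemma~\ref{lem:align-exp} gives
\[\|f(x)-f(x^+)\|^2 = v^\top G_f(x)v + \epsilon_3(v),\]
with $|\epsilon_3(v)|\le C\|v\|^3$ whenever $\|v\|\le r_0$. By (A2), $\|v\|=d_{\Mc}(x,x^+)<\delta_{\Mc}(x)$ almost surely. Outside the ball $\|v\|\le r_0$, both $\|f(x)-f(x^+)\|^2\le 4$ (outputs lie on $S^{K-1}$) and $v^\top G_f(x)v$ are bounded on the bounded set $\|v\|<\delta_{\Mc}(x)$. So if $\delta_{\Mc}(x)>r_0$, the remainder there is bounded by a constant, which is at most $C'\|v\|^3$ with $C'=\mathrm{const}/r_0^3$. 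Enlarging $C$ if needed, the bound $|\epsilon_3(v)|\le C\|v\|^3$ then holds almost surely, not just on the small ball.

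\textbf{Step 2 (take expectations).} Conditioning on $X=x$ and using linearity of expectation,
\[\E[\|f(x)-f(x^+)\|^2\mid x]=\E[v^\top G_f(x)v\mid x]+\E[\epsilon_3(v)\mid x],\]
and the second term is bounded in absolute value by $C\,\E[\|v\|^3\mid x]$. Integrability is automatic because $v$ is almost surely bounded.

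\textbf{Step 3 (trace identity).} Since $G_f(x)$ is deterministic given $x$, we have $v^\top G v=\mathrm{tr}(G\,vv^\top)$. Linearity of trace and expectation gives $\E[v^\top Gv\mid x]=\mathrm{tr}(G\,\E[vv^\top\mid x])$. By (A3) this equals $\mathrm{tr}(G\cdot cI_d)=c\,\mathrm{tr}(G_f(x))$. The centering condition $\E[v\mid x]=0$ is not needed here; it would only matter if first-order terms were present. This step requires orthonormal coordinates on $T_x\Mc$ so that $\E[vv^\top\mid x]=cI_d$ and $G_f(x)=J_f^\top J_f$ are expressed in the same basis, which is exactly the convention used in the proof of Lemma~\ref{lem:align-exp}.

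\textbf{Main obstacle.} The only delicate point is Step 1. The $O(\|v\|^3)$ remainder from Lemma~\ref{lem:align-exp} is local, valid only for $\|v\|\le r_0$, so a uniform bound over the support of $v$ is needed before taking expectations. The plan handles this with the compactness and boundedness argument above. Alternatively, one can note that the notation $O(\E[\|v\|^3\mid x])$ is meant in the small-augmentation regime where $\delta_{\Mc}(x)\le r_0$, in which case Step 1 is immediate.
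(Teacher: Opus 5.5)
Your proposal is correct and follows the paper's proof: take the conditional expectation of the expansion from Lemma~\ref{lem:align-exp}, bound the remainder by $C\,\E[\|v\|^3\mid x]$, and use $v^\top G_f(x)v=\mathrm{tr}(G_f(x)vv^\top)$ together with (A3) to obtain $c\,\mathrm{tr}(G_f(x))$. Your Step~1 also fills a gap the paper leaves implicit: the paper applies $|\epsilon(v)|\le C\|v\|^3$ inside the expectation even though the lemma only guarantees this for $\|v\|\le r_0$. Your argument closes it, since both terms are bounded on $\|v\|<\delta_{\Mc}(x)$, so the remainder there is at most $(\text{const}/r_0^3)\|v\|^3$.
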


\begin{proof}
By Lemma~\ref{lem:align-exp}, there exist $r_0>0$, $C<\infty$, and a remainder
$\epsilon(v)$ such that for all $\|v\|\le r_0$,
\[
\|f(x)-f(x^+)\|^2 = v^\top G_f(x)v + \epsilon(v),
\qquad
|\epsilon(v)| \le C\|v\|^3.
\]
Taking conditional expectation given $x$ and using linearity,
\[
\E[\|f(x)-f(x^+)\|^2\mid x]
=
\E[v^\top G_f(x)v\mid x] + \E[\epsilon(v)\mid x].
\]
Moreover,
\[
|\E[\epsilon(v)\mid x]|
\le
\E[|\epsilon(v)|\mid x]
\le
C\,\E[\|v\|^3\mid x],
\]
so $\E[\epsilon(v)\mid x]=O(\E[\|v\|^3\mid x])$.

For the quadratic form, note $v^\top G_f(x)v=\mathrm{tr}(G_f(x)vv^\top)$, hence
\[
\E[v^\top G_f(x)v\mid x]
=
\mathrm{tr}\!\left(G_f(x)\,\E[vv^\top\mid x]\right).
\]
Under (A3), $\E[vv^\top\mid x]=cI_d$, so
$\E[v^\top G_f(x)v\mid x]=c\cdot\mathrm{tr}(G_f(x))$. Combining the above yields
the claim.
\end{proof}

We will also use a variant of Lemma~\ref{lem:align-exp}  stated as follows:

\begin{cor}\label{cor:local-dist}
Under (A1), for $y$ in a sufficiently small normal neighborhood of $x$ and
$v=\exp_x^{-1}(y)$, we have
\[\|f(y)-f(x)\| = \sqrt{v^\top G_f(x)v} + O(\|v\|^2).\vspace{-5ex}\]
\end{cor}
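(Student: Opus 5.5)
We follow the proof of Lemma~\ref{lem:align-exp} verbatim up to the point where the squared norm is expanded, and then pass from the squared distance to the distance itself. Fix $x$ and set $g(u):=f(\exp_x(u))$ on a normal neighborhood of $0\in T_x\Mc$. This neighborhood is chosen small enough that $y=\exp_x(v)$ with $v=\exp_x^{-1}(y)$ is well defined and $\|v\|\le r_0$. By (A1), $g$ is $C^2$, and $Dg(0)=Df(x)$ because $D(\exp_x)(0)=\mathrm{Id}$. Taylor's theorem then gives
\[
f(y)-f(x)=J_f(x)\,v+\rho(v),\qquad \|\rho(v)\|\le C\|v\|^2 .
\]

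The key step is to avoid taking a square root of $v^\top G_f(x)v+O(\|v\|^3)$. Doing so would require dividing by the leading term, which can be delicate as $v\to 0$. Instead we work directly with norms and use the reverse triangle inequality:
\[
\big|\,\|f(y)-f(x)\|-\|J_f(x)v\|\,\big|\le \|\rho(v)\|\le C\|v\|^2 .
\]
Since $\|J_f(x)v\|=\sqrt{v^\top J_f(x)^\top J_f(x)v}=\sqrt{v^\top G_f(x)v}$, this gives the claim immediately, with a remainder that is $O(\|v\|^2)$ in the sense fixed at the start of the appendix.

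The only point that needs care is the choice of neighborhood. It must lie within the injectivity radius, so that $\exp_x^{-1}$ is defined, $\|v\|=d_{\Mc}(x,y)$, and this agrees with the geodesic displacement $v(x,y)$ used in the main text. It must also lie inside the ball of radius $r_0$ on which the $C^2$ Taylor bound holds. Both are ensured by shrinking the neighborhood, which is allowed since the statement only concerns $y$ close to $x$.

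If a sharper form is wanted, positive definiteness of $G_f(x)$ from (A1) gives $\sqrt{v^\top G_f(x)v}\ge \sqrt{\lambda_{\min}(G_f(x))}\,\|v\|$. The error is therefore also of relative order $O(\|v\|)$. This is the form needed later to compare two encoders in Theorem~\ref{thm-local-iso}, where each of $G_{f_1}(x)$ and $G_{f_2}(x)$ will be a scalar multiple of $I_d$.
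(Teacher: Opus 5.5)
Your proof is correct, but it takes a different route from the paper's. The paper reuses Lemma~\ref{lem:align-exp} in its squared form, $\|f(y)-f(x)\|^2=a(v)+r(v)$ with $a(v)=v^\top G_f(x)v$ and $|r(v)|\le C\|v\|^3$. It then takes square roots via $|\sqrt{a+r}-\sqrt{a}|=|r|/(\sqrt{a+r}+\sqrt{a})$. To control the denominator it invokes positive definiteness of $G_f(x)$ from (A1): $a(v)\ge m\|v\|^2$, hence $a+r\ge \tfrac{m}{2}\|v\|^2$ for small $\|v\|$, which gives $O(\|v\|^3)/\Theta(\|v\|)=O(\|v\|^2)$.

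You instead stop one step earlier, at the vector-valued expansion $f(y)-f(x)=J_f(x)v+\rho(v)$, and apply the reverse triangle inequality, which gives the $O(\|v\|^2)$ bound in one line.

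What your route buys is that it never uses the full-rank part of (A1), so it holds verbatim for degenerate $J_f(x)$. From the squared statement alone, in a direction where $a(v)=0$, one could only conclude $|\sqrt{a+r}-\sqrt{a}|\le\sqrt{|r|}=O(\|v\|^{3/2})$. That is exactly why the paper needs the lower bound $a(v)\ge m\|v\|^2$. The paper's route, in turn, can treat Lemma~\ref{lem:align-exp} as a black box with only a scalar remainder, at the cost of that extra nondegeneracy hypothesis.

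One small correction to your last paragraph: the proof of Theorem~\ref{thm-local-iso} only uses the absolute form $\|f_i(y)-f_i(x)\|=\sqrt{v^\top G_{f_i}(x)v}+O(\|v\|^2)$. The relative-error refinement is not needed there.
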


\begin{proof}
The proof of Lemma~\ref{lem:align-exp} is purely local and uses only that the
second point lies in a normal neighborhood of $x$. Hence the same argument
applies with $x^+$ replaced by $y$: 
\[\|f(y)-f(x)\|^2 = v^\top G_f(x)v + O(\|v\|^3).\]
Write $a(v):=v^\top G_f(x)v$ and let $r(v)=O(\|v\|^3)$ denote the scalar
remainder so that $\|f(y)-f(x)\|^2=a(v)+r(v)$.
Since $G_f(x)\succ 0$ by (A1), there exists $m>0$ such that
$a(v)\ge m\|v\|^2$ for all $v$.
For sufficiently small $\|v\|$, $a(v)+r(v)\ge \tfrac{m}{2}\|v\|^2$. Hence, for
some constants $C, c'>0$, 
\[
\big|\sqrt{a(v)+r(v)}-\sqrt{a(v)}\big|=  \frac{|r(v)|}{\sqrt{a(v)+r(v)}+\sqrt{a(v)}}\le \frac{C\|v\|^3}{c'\|v\|} = O(\|v\|^2).
\]
This completes the proof of Corollary~\ref{cor:local-dist}. 
\end{proof}

\subsection{Proof of Theorem~\ref{thm-local-iso}}

We give a full proof of Theorem~\ref{thm-local-iso}. The intuition of the proof
is that, for a nearby point $y$ around $x$, the encoder admits a first-order Taylor
approximation along the unique short geodesic from $x$ to $y$. The leading
term is governed by the Jacobian $J_f(x)$, hence by the induced metric
$G_f(x)=J_f(x)^\top J_f(x)$. Local encoder optimality forces $G_f(x)$ to be a scalar
multiple of the identity, which makes local distances proportional across encoders up to a scalar.

Consider any $x\in\Mc$ and let $y\in\Mc$ satisfy $d_{\Mc}(x,y)<\delta_{\Mc}(x)$.
Let $\exp_x:T_x\Mc\to\Mc$ denote the Riemannian exponential map. Since we are
within the normal neighborhood of $x$, there exists a unique
$v\in T_x\Mc$ such that $y=\exp_x(v)$ and $\|v\|=d_{\Mc}(x,y)$
(\cite{lee2013smooth}).
Throughout, all $\Oc(\cdot)$ terms are as $y\to x$ (equivalently $\|v\|\to 0$).

\etitle{(1) local metric minimization}.
For $i\in\{1,2\}$, local optimality at $x$ means that $G_{f_i}(x)$ minimizes
\[\Phi_i(G):=c\,\mathrm{tr}(G)-\frac{\lambda_i}{2}\log\det(G) \qquad \text{over } G\succ 0.\]
Note that $-\log\det(G)$ is strictly convex, so $\Phi_i$ is strictly convex
and thus has a unique minimizer. For any symmetric direction $H$, since
$\frac{d}{dt}\mathrm{tr}(G+tH)\big|_{t=0}=\mathrm{tr}(H)$ and
$\frac{d}{dt}\log\det(G+tH)\big|_{t=0}=\mathrm{tr}(G^{-1}H)$ (Jacobi's formula;
\eg~\cite{matrixcookbook}), we have 
\[\frac{d}{dt}\Phi_i(G+tH)\Big|_{t=0} = \mathrm{tr}\!\left(\Big[cI_d-\frac{\lambda_i}{2}G^{-1}\Big]H\right).\]
At the minimizer this derivative is $0$ for all symmetric $H$.
Since $\mathrm{tr}(AH)=0$ for all symmetric $H$ implies $A=0$ (take $H=A$),
we obtain $cI_d-\frac{\lambda_i}{2}G^{-1}=0$. Therefore,
\[G_{f_i}(x)=\frac{\lambda_i}{2c}I_d, \qquad i\in\{1,2\}.\]

\etitle{(2) Local distance expansion via the induced metric}. 
Note that from Corollary~\ref{cor:local-dist},  for  $i\in \{1, 2\}$
\[
\|f_i(y)-f_i(x)\|  =  \|J_{f_i}(x)v\|+\Oc(\|v\|^2)  = \sqrt{v^\top G_{f_i}(x)v}+\Oc(\|v\|^2).
\]
Substituting $G_{f_i}(x)=\frac{\lambda_i}{2c}I_d$ (from step (1)) and  $\|v\|=d_{\Mc}(x,y)$, we have
\[\|f_i(y)-f_i(x)\| = \sqrt{\frac{\lambda_i}{2c}}\,d_{\Mc}(x,y)+\Oc(d_{\Mc}(x,y)^2).\]

\etitle{Step 3: Compare encoders.}
Let $\kappa:=\sqrt{\lambda_1/\lambda_2}$. Then 
\[\kappa\,\|f_2(y)-f_2(x)\| = \sqrt{\frac{\lambda_1}{2c}}\,d_{\Mc}(x,y)+\Oc(d_{\Mc}(x,y)^2).\]
Thus, by comparing with the $i=1$ expansion, we have
\[\|f_1(y)-f_1(x)\| = \kappa\,\|f_2(y)-f_2(x)\|+\Oc(d_{\Mc}(x,y)^2),\]
which is equivalent to the stated form
$\|f_1(x)-f_1(y)\|=\kappa\,\|f_2(x)-f_2(y)\|+\Oc(d_{\Mc}(x,y)^2)$.

\subsection{Relaxing (A3 $\to$ A3$'$): Point-Dependent Local Augmentation Assumption}
\label{app:assumptions}

Assumption (A3) in Section~\ref{sec-local} simplifies the positive-pair
distribution by $\E[vv^\top\mid x]=cI_d$ with a constant for all $x\in\Mc$.
In practice, the magnitude of a semantic-preserving augmentation may also 
depend on the anchor point (\eg some examples admit larger perturbations than
others). A natural relaxation is to allow the isotropic scale to vary with $x$.
\looseness = -1

\stitle{A relaxed local-isotropy model (A3$'$)}. 
We replace (A3) of Section~\ref{subsec-contrastive} by the following
point-dependent variant, denoted by (A3$'$): Fix $ x\in\Mc $ and let $ v=v(x,x^+)
\in T_x\Mc $ denote the geodesic displacement to a positive view. Assume
$\E[v\mid x]=0$ and  $\E[vv^\top\mid x]=c(x)\,I_d$ for some function $c(\cdot):\Mc\to(0,\infty)$.

\etitle{Effect on localized alignment term} $\varphi_{\mathrm{align}}$. 
By Lemma~\ref{lem:align-exp} and taking conditional expectation, we have
\[\E[\|f(x)\ak -\ak f(x^+)\|^2 \mid\ak  x]\ak 
=\ak  \E[v^\top G_f(x)v\mid x]\ak  +\ak  O\big(\E[\|v\|^3\mid x]\big).\]
Using $v^\top G_f(x)v\ak =\ak \mathrm{tr}(G_f(x)vv^\top)$ yields
$\displaystyle \E[\|f(x)-f(x^+)\|^2\mid x]
=\mathrm{tr}\!\big(G_f(x)\,\E[vv^\top\mid x]\big)
+ O\big(\E[\|v\|^3\mid x]\big)$.

Under (A3$'$), $\E[vv^\top\mid x]=c(x)I_d$, hence
\[\E[\|f(x)-f(x^+)\|^2\mid x] = c(x)\,\mathrm{tr}(G_f(x)) + O\big(\E[\|v\|^3\mid x]\big).\]
Therefore the leading-order localized objective becomes
\[\widetilde{\Lc}_\lambda(x;f) = c(x)\,\mathrm{tr}(G_f(x))
  -\frac{\lambda}{2}\log\det(G_f(x)). \]

\etitle{Local optimum}. 
Since the leading-order local objective depends on $f$ only through $G_{f}(x)$,
we minimize over $G\succ 0$ to characterize the optimal local metric; we then
call $f$ locally optimal at $x$ if its induced metric matches this minimizer. 
Minimizing over $G$ yields the first-order condition
\[c(x)\,I_d - \frac{\lambda}{2}G^{-1} = 0,\]
so the unique minimizer is 
$\displaystyle G_f^\star(x) = \frac{\lambda}{2c(x)}\,I_d$.

Thus the encoder remains locally a scaled isometry on $T_x\Mc$, but the scale factor depends on $x$ through $c(x)$.

\stitle{Implication on Theorem~\ref{thm-local-iso}}. 
If two encoders $f_1,f_2$ satisfy the same analysis but potentially with
different augmentation scales $c_1(x),c_2(x)$ and parameters $\lambda_1,
\lambda_2$, then the local distance expansions become (for $i\in\{1,2\}$): 
\[\|f_i(x)-f_i(y)\| = \sqrt{\frac{\lambda_i}{2c_i(x)}}\,d_{\Mc}(x,y) +
  O\big(d_{\Mc}(x,y)^2\big)\]
for $y$ in the normal neighborhood of $x$. Eliminating $d_{\Mc}(x,y)$
gives a point-dependent scaling relation
\[\|f_1(x)-f_1(y)\| = \kappa(x)\,\|f_2(x)-f_2(y)\|  +
  O\big(d_{\Mc}(x,y)^2\big)\]
with $\displaystyle
\kappa(x)=\sqrt{\frac{\lambda_1\,c_2(x)}{\lambda_2\,c_1(x)}}$. 
In particular, if the two encoders share the same local positive-pair
distribution in the sense that $c_1(x)=c_2(x)$ for all $x$, then $\kappa(x)$ reduces to the constant $\sqrt{\lambda_1/\lambda_2}$ as stated in
Theorem~\ref{thm-local-iso}.

\etitle{Remark}.
Allowing $c=c(x)$ provides a simple mechanism for why a {\em single global}
scale does not fit all points: even when encoders are locally conformal, they
may ``expand'' or ``contract'' neighborhoods by different amounts at different
anchors. This complements the empirical observation that short-range distances
are substantially more consistent than long-range distances, while also
explaining residual variability within the short-range regime.

\subsection{Correlation Analysis}
\label{app:correlation}
We report the Pearson correlation coefficient $\rho$ between pairwise Euclidean distances measured in the reference embedding space and the corresponding distances between the same item pairs in the target space.
\revise{Besides the contrastive encoder pairs,
we also include classic word embedding models \glove~\cite{pennington2014glove} and \fasttext~\cite{bojanowski2016enriching}
trained without a contrastive objective as non-contrastive baselines.
The same two models are reused as the non-contrastive baseline
in the top-$k$ Jaccard analysis of \S\ref{app:topk}.}

\revise{We probe the local\-/consistency / global\-/decorrelation pattern of
Fig.~\ref{fig-correlation} along four axes:
\textit{(i)} encoder pair: six contrastive encoder pairs
(Fig.~\ref{fig:corr}, panels~a--f);
\textit{(ii)} target dimensionality: with \mistral{} as the reference,
varying the target embedding dimensionality of \openai{} on \scifact{}
(Fig.~\ref{fig:corr}, panel~g);
\textit{(iii)} task domain: \mistral{}$\rightarrow$\openai{} on two
clustering benchmarks (Fig.~\ref{fig:corr}, panel~h);
\textit{(iv)} encoder family: contrastive vs.\ non-contrastive on four
retrieval datasets (Fig.~\ref{fig:corr}, panels~i--l), where each panel
overlays \glove{}$\leftrightarrow$\fasttext{} with
\mistral{}$\leftrightarrow$\openai.
For (i)--(iii), $\rho$ is high at short range and decays as the reference
distance grows, indicating that the geometric signal underlying GEH is a
property of the contrastive encoder family. In contrast, the non-contrastive
pair in (iv) already starts at markedly lower $\rho$ at short range, and its
long-range tail does not always decay. This supports our restriction
of GEH's analysis to contrastive encoders (cf.\ \S\ref{subsec-evidence}).}

\begin{figure*}
  \centering

  \begin{subfigure}[t]{0.32\textwidth}
    \centering
    \includegraphics[width=\linewidth]{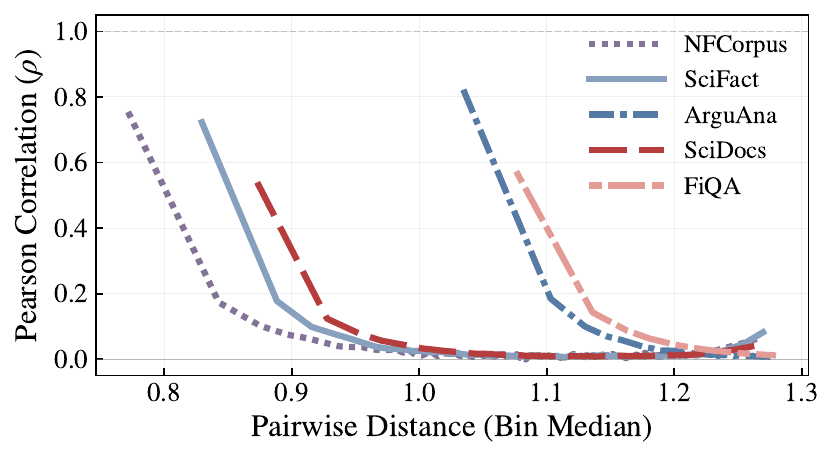}
    \caption{\gte $\rightarrow$ \mistral}
    \label{fig:corr-gte-mistral}
  \end{subfigure}\hfill
  \begin{subfigure}[t]{0.32\textwidth}
    \centering
    \includegraphics[width=\linewidth]{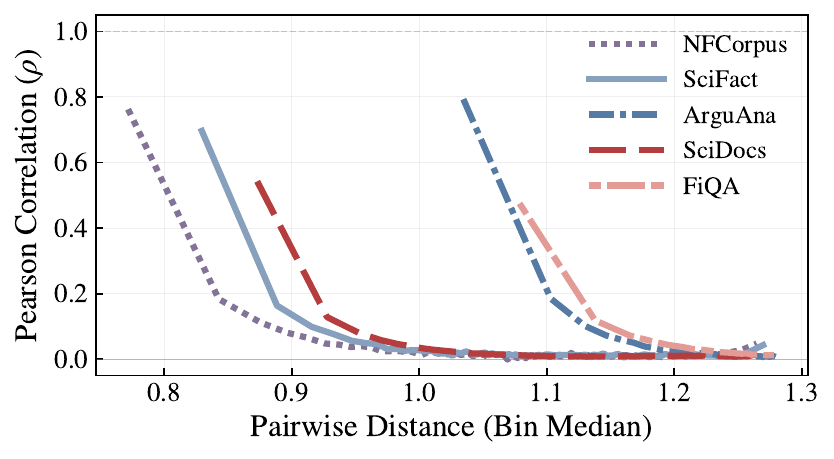}
    \caption{\gte $\rightarrow$ \openai}
    \label{fig:corr-gte-openai}
  \end{subfigure}\hfill
  \begin{subfigure}[t]{0.32\textwidth}
    \centering
    \includegraphics[width=\linewidth]{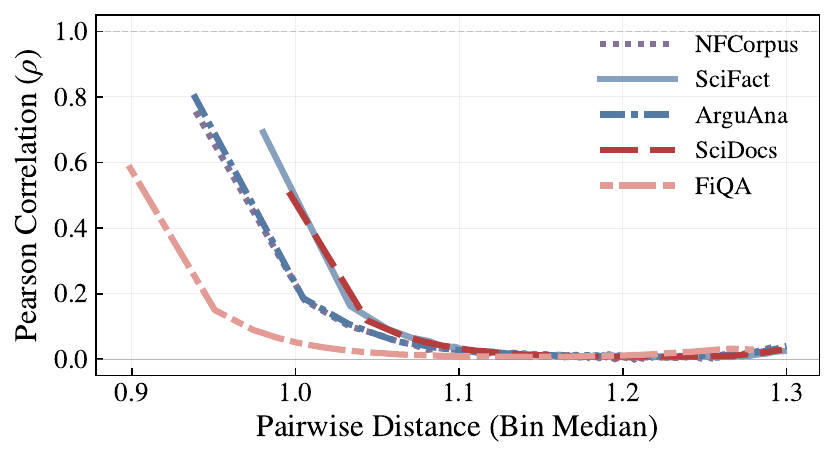}
    \caption{\kalm $\rightarrow$ \mistral}
    \label{fig:corr-kalm-mistral}
  \end{subfigure}

  \begin{subfigure}[t]{0.32\textwidth}
    \centering
    \includegraphics[width=\linewidth]{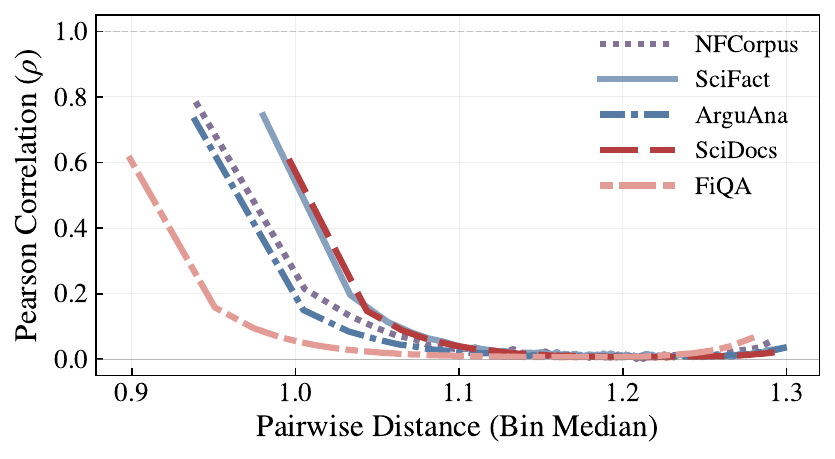}
    \caption{\kalm $\rightarrow$ \qwen}
    \label{fig:corr-qwen-kalm}
  \end{subfigure}\hfill
  \begin{subfigure}[t]{0.32\textwidth}
    \centering
    \includegraphics[width=\linewidth]{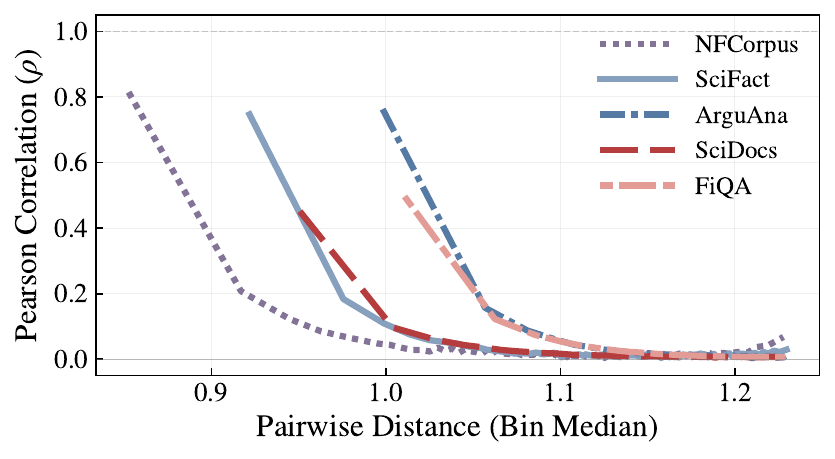}
    \caption{\qwen $\rightarrow$ \openai}
    \label{fig:corr-qwen-openai}
  \end{subfigure}\hfill
  \begin{subfigure}[t]{0.32\textwidth}
    \centering
    \includegraphics[width=\linewidth]{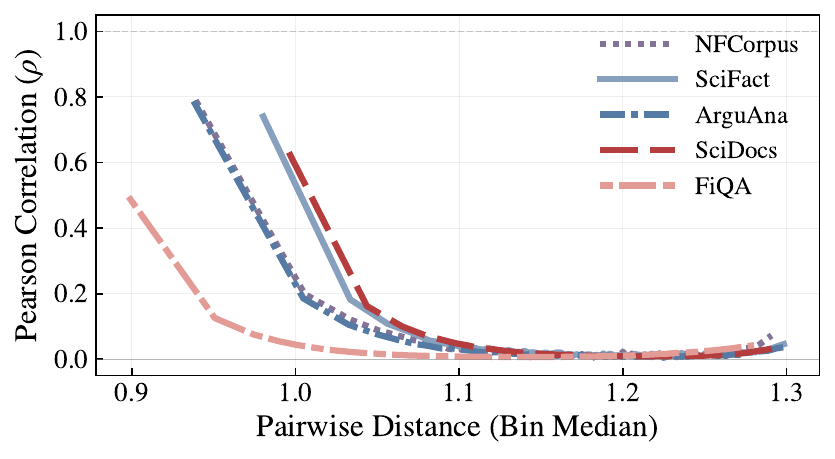}
    \caption{\kalm $\rightarrow$ \openai}
    \label{fig:corr-kalm-openai}
  \end{subfigure}

  \begin{subfigure}[t]{0.32\textwidth}
    \centering
    \includegraphics[width=\linewidth]{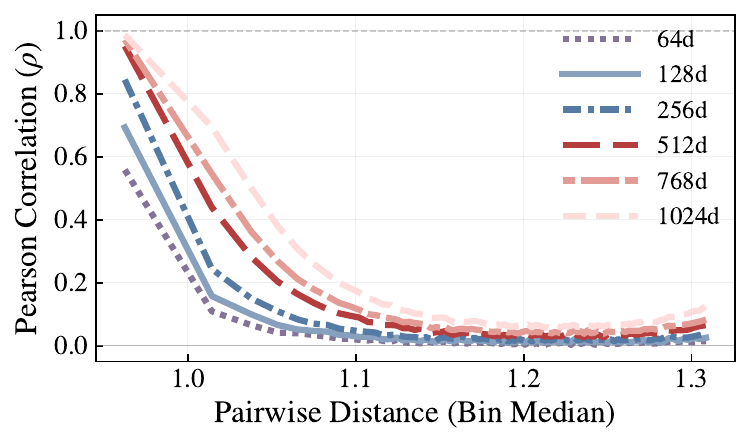}
    \caption{\revise{\openai{} dim sweep, \scifact}}
    \label{fig:corr-dims}
  \end{subfigure}\hfill
  \begin{subfigure}[t]{0.32\textwidth}
    \centering
    \includegraphics[width=\linewidth]{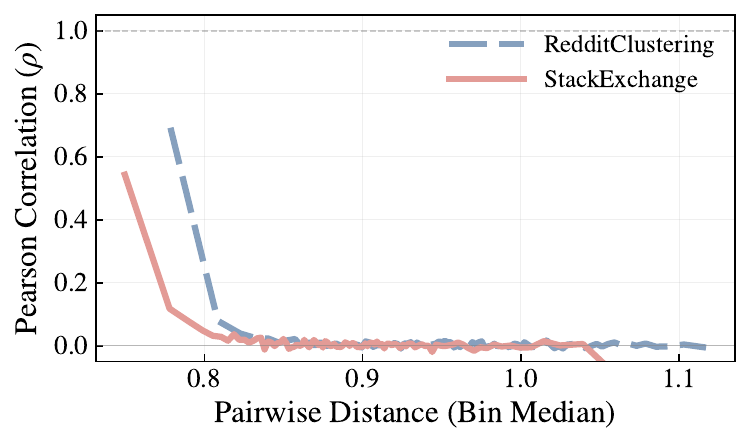}
    \caption{\revise{Clustering tasks}}
    \label{fig:corr-clustering}
  \end{subfigure}\hfill
  \begin{subfigure}[t]{0.32\textwidth}
    \centering
    \includegraphics[width=\linewidth]{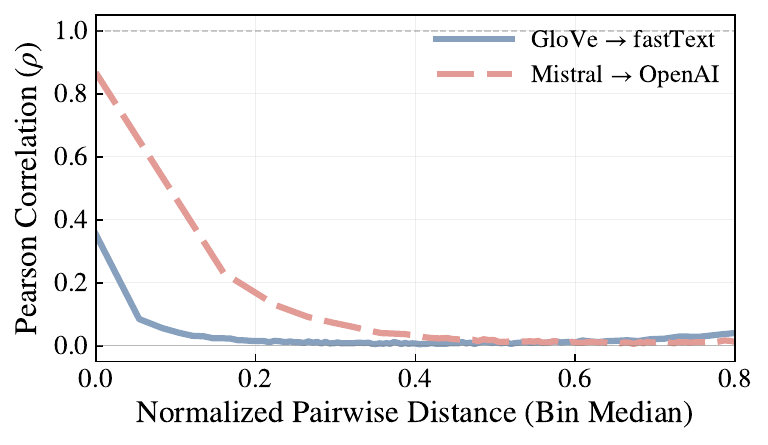}
    \caption{\revise{\arguana}}
    \label{fig:corr-nc-arguana}
  \end{subfigure}

  \begin{subfigure}[t]{0.32\textwidth}
    \centering
    \includegraphics[width=\linewidth]{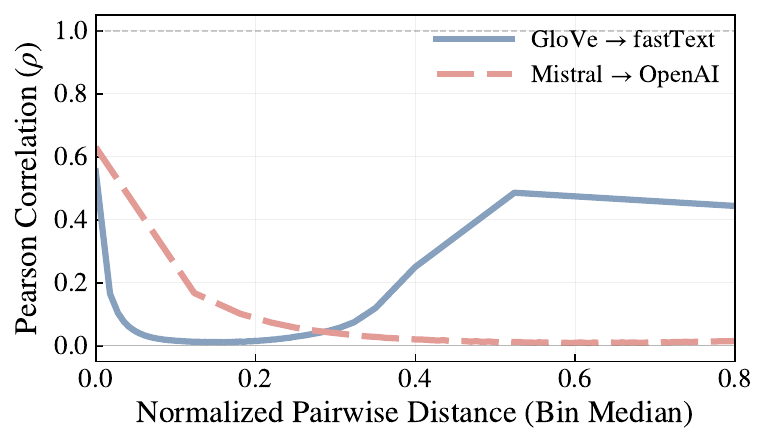}
    \caption{\revise{\fiqa}}
    \label{fig:corr-nc-fiqa}
  \end{subfigure}\hfill
  \begin{subfigure}[t]{0.32\textwidth}
    \centering
    \includegraphics[width=\linewidth]{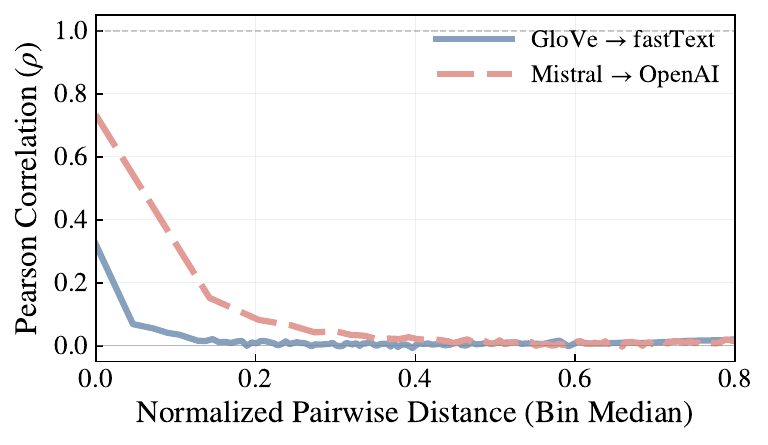}
    \caption{\revise{\nfcorpus}}
    \label{fig:corr-nc-nfcorpus}
  \end{subfigure}\hfill
  \begin{subfigure}[t]{0.32\textwidth}
    \centering
    \includegraphics[width=\linewidth]{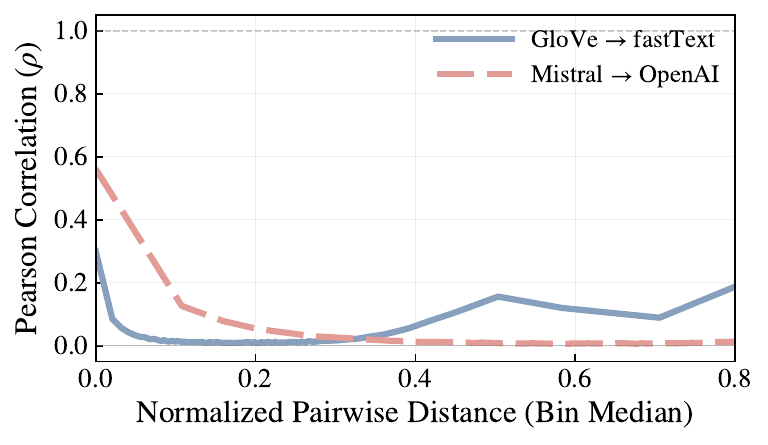}
    \caption{\revise{\scidocs}}
    \label{fig:corr-nc-scidocs}
  \end{subfigure}

  \caption{Distance consistency across embedding spaces.
  {\normalfont\footnotesize
  Each subplot shows Pearson correlation $\rho$ between pairwise distances in the reference space and their counterparts in the target space, binned by the reference distance.
  (a--f) Six contrastive encoder pairs.
  \revise{(g) \mistral{}$\rightarrow$\openai{} on \scifact{} for a sweep of \openai{} dimensionalities.
  (h) \mistral{}$\rightarrow$\openai{} on two clustering benchmarks.
  (i--l) Non-contrastive comparison: each panel overlays \glove{}$\leftrightarrow$\fasttext{} with \mistral{}$\leftrightarrow$\openai{} on the same dataset.}}}
  \label{fig:corr}
\end{figure*}

\subsection{Retrieval Result analysis }
\label{app:topk}
To empirically validate the local geometric consistency of embedding spaces, we analyze the consistency of top-$k$ retrieval results across different embedding models. Given two embedding sets $\Ec_1$ and $\Ec_2$ encoding the same corpus, we perform top-$k$ nearest neighbor retrieval for each query point in both spaces and measure their agreement using the Jaccard index: $J_k = |\mathcal{N}_k^{(1)} \cap \mathcal{N}_k^{(2)}| / |\mathcal{N}_k^{(1)} \cup \mathcal{N}_k^{(2)}|$, where $\mathcal{N}_k^{(i)}$ denotes the set of $k$ nearest neighbors in embedding space $i$. We evaluate this metric across multiple embedding model pairs.

\revise{As shown in Figure~\ref{fig:jaccard}, the behavior splits sharply by encoder family.
For pairs of contrastive encoders (panels a--c), the Jaccard index starts at $J\!\approx\!0.7$--$0.8$ at $k{=}1$ and decays monotonically to a plateau around $0.37$--$0.45$ at $k{=}50$, empirically confirming Theorem~\ref{thm-local-iso}.
In contrast, when at least one side is a non-contrastive encoder (panels d--f), the Jaccard index never enters this short-range high-consistency regime: \glove $\leftrightarrow$ \fasttext (panel d) stays near $J\!\approx\!0.1$--$0.17$ over the entire range of $k$, and a \mistral / \openai query side that yields $J\!\approx\!0.8$ against a contrastive target collapses to $J\!\approx\!0.2$ against a non-contrastive target (panels e--f).}

\begin{figure*}
  \centering

  \begin{subfigure}[t]{0.32\textwidth}
    \centering
    \includegraphics[width=\linewidth]{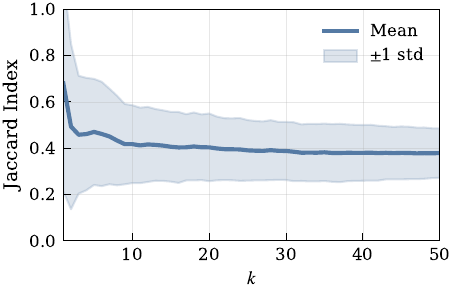}
    \caption{\kalm $\leftrightarrow$ \gte}
    \label{fig:jaccard-kalm-gte}
  \end{subfigure}\hfill
  \begin{subfigure}[t]{0.32\textwidth}
    \centering
    \includegraphics[width=\linewidth]{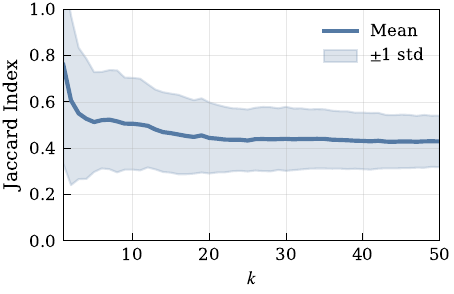}
    \caption{\qwen $\leftrightarrow$  \gte}
    \label{fig:jaccard-qwen-gte}
  \end{subfigure}\hfill
  \begin{subfigure}[t]{0.32\textwidth}
    \centering
    \includegraphics[width=\linewidth]{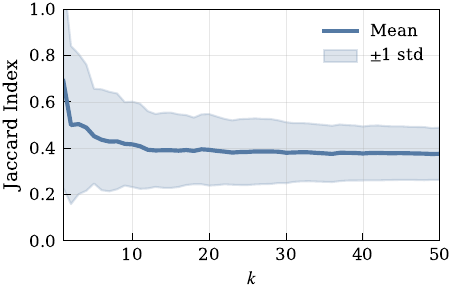}
    \caption{\qwen $\leftrightarrow$ \kalm}
    \label{fig:jaccard-qwen-kalm}
  \end{subfigure}

  \begin{subfigure}[t]{0.32\textwidth}
    \centering
    \includegraphics[width=\linewidth]{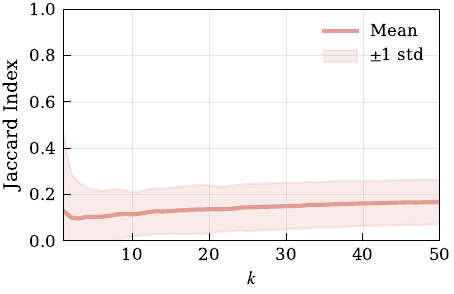}
    \caption{\revise{\glove $\leftrightarrow$  \fasttext}}
    \label{fig:jaccard-glove-fasttext}
  \end{subfigure}\hfill
  \begin{subfigure}[t]{0.32\textwidth}
    \centering
    \includegraphics[width=\linewidth]{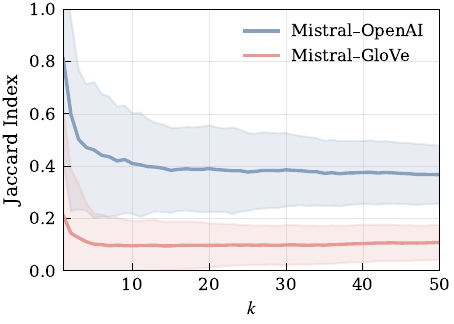}
    \caption{\revise{\mistral $\leftrightarrow$ \openai and \glove}}
    \label{fig:jaccard-mistral-glove-openai}
  \end{subfigure}\hfill
  \begin{subfigure}[t]{0.32\textwidth}
    \centering
    \includegraphics[width=\linewidth]{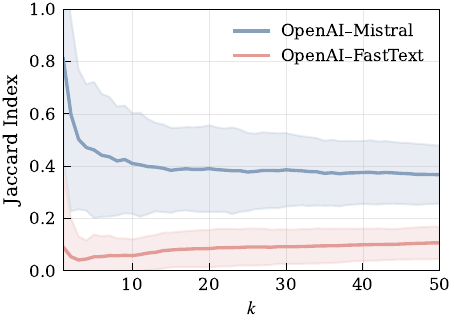}
    \caption{\revise{\openai $\leftrightarrow$ \mistral and \fasttext}}
    \label{fig:jaccard-openai-mistral-fasttext}
  \end{subfigure}

  \caption{Cross-embedding retrieval consistency analysis:{\normalfont\footnotesize \scifact~,  Each panel reports the mean $\pm$ 1 std of the per-query Jaccard index between top-$k$ retrieval results from two embedding spaces over 100 random queries.}}
  \label{fig:jaccard}
\end{figure*}

\section{Details of Section~\ref{sec-linking}}
\label{app:sec4-details}

\subsection{Why FPS for View Sampling (Stability Analysis)}
\label{app:well-conditioned-views}

We justify FPS view sampling for the kernelized distance-to-anchor hash used in the main text.  In a nutshell, we show
that a hash is stable if the anchor-induced distance coordinates
provide diverse directional information, and clustered anchors yield redundant
coordinates and amplify cross-model distance distortions.

\stitle{Local stability via the tangent jacobian} 
Since we employ cosine distance, we operate on the unit hypersphere $\mathbb{S}^{D-1} \subset \mathbb{R}^D$. 
The hashing function $\mathbf{h}_{\mathcal{A}}: \mathbb{S}^{D-1} \to \mathbb{R}^{m_t}$ can be defined by the kernel with cosine distance:
\[
h_{\mathcal{A}}(w)_j = \exp\left(-\frac{1 - \langle w, a'_j \rangle}{\sigma}\right), \quad j=1,\dots,m_t.
\]
We differentiate $h_{\mathcal{A}}(w)_j$ with respect to $w$ and restrict the domain to the tangent space of the sphere, $T_v\mathbb{S}^{D-1} = \{z \in \mathbb{R}^D : \langle z, v \rangle = 0\}$.
The $j$-th row of the Jacobian $J_{\mathcal{A}}(v) \in \mathbb{R}^{m_t \times D}$ is given by the projection of the gradient onto $T_v$:
\[
(J_{\mathcal{A}}(v))_{j,:} = \frac{1}{\sigma} h_j(v) \cdot \underbrace{(a'_j - \langle a'_j, v \rangle v)^\top}_{\mathbf{p}_j^\top}.
\]
Here, $\mathbf{p}_j$ represents the component of the anchor $a'_j$ orthogonal to the query $v$. 

We assume the data lies on a submanifold $\mathcal{M} \subset \mathbb{S}^{D-1}$ of intrinsic dimension $d \ll D$. For the hash to be stable (locally injective) on $\mathcal{M}$, the mapping must distinguish perturbations in any tangent direction.Let $\mathbf{\Pi}_v \in \mathbb{R}^{D \times d}$ be an orthonormal basis for the tangent space $T_v\mathcal{M}$. The Restricted Jacobian $\mathbf{J}_{\mathcal{M}}(v) \in \mathbb{R}^{m_t \times d}$ is defined as:$$\mathbf{J}_{\mathcal{M}}(v) = J_{\mathcal{A}}(v) \cdot \mathbf{\Pi}_v.$$A necessary condition for stability is that $\mathbf{J}_{\mathcal{M}}(v)$ has full column rank, which requires $m_t \ge d$.The robustness of this stability is quantified by the condition number $\kappa = \sigma_{\max} / \sigma_{\min}$ of $\mathbf{J}_{\mathcal{M}}(v)$. If $\kappa$ is large ($\sigma_{\min} \approx 0$), the hash is insensitive to changes along the corresponding singular vector, leading to ambiguity.

\stitle{Geometric Optimality of FPS} The singular values of $\mathbf{J}_{\mathcal{M}}(v)$ are determined by the geometric arrangement of the anchor projections $\{\mathbf{p}_j\}$. Random sampling can select multiple anchors that are locally redundant, making the manifold-tangent components $\{\Pi_v^\top p_j\}$ nearly collinear and leaving some tangent directions weakly sensed. This drives $\sigma_{\min} \to 0$. FPS greedily maximizes minimum pairwise distance, discouraging near-duplicates in each view and promotes larger $\sigma_{\min}$. This  reduce angular redundancy among $\{\Pi_v^\top p_j\}$ and thus improve the conditioning of $J_{\mathcal M}(v)$ in practice. 

Additionally, FPS acts as a covering strategy (a greedy $k$-center heuristic): selected anchors
are spread across the current anchor pool, increasing the chance that a sampled
view contains anchors that are {\em locally relevant} for many different query
points. 

FPS effectively reduce view-specific collisions and make per-view hash-space
matching more stable, which in turn improves the quality of votes aggregated by the bootstrapping procedure in Section~\ref{sec-linking}. Consistent with this
analysis, FPS significantly outperforms random-based anchor selection
in our ablations (Section~\ref{sec:ablation}).

\subsection{Properties of View Scheduling}
\label{app:view-scheduling}

This appendix expands on the scheduling rule used in Section~\ref{sec-linking}
for choosing the number of views $m_t$ and the anchors per view $s_t$ as the
paired-anchor pool $\Lc_{t-1}$ grows. The schedule provides two practical
properties/guarantees: (i) increasing view diversity over iterations, and (ii) stable
per-anchor coverage and computation.

\stitle{Objectives.}
At iteration $t$, let $n_t := |\Lc_{t-1}|$ denote the size of the current
paired-anchor pool. A view contains $s_t$ paired anchors, and we sample $m_t$
views. Define the {\em per-view anchor fraction}
$\rho_t := s_t / n_t$.

The view schedule  decides $m_{t}$ and $\rho_{t}$ (equivalently, $s_{t}$), while
aiming to satisfy two practical objectives:

\eetitle{(O1) Increasing diversity \& locality.}
As $n_t$ grows, we would like to sample more views (so more chances to hit
short-range neighborhoods) while making each view a smaller fraction of the pool
$\Lc_{t-1}$ (so views are more ``local'' and less dominated by far anchors).

\eetitle{(O2) Stable per-anchor coverage and computation.}
If views were sampled uniformly, a given anchor would appear in an expected
$m_t\rho_t$ views per iteration. We would like this quantity to remain roughly
stable over iterations (so anchors are neither over-used nor ignored), and we
would like the total anchor usage $m_t s_t$ per iteration to scale reasonably.

\stitle{Anchor pool growth ratio $g_t$.}
We measure progress by growth ratio $g_t := \frac{|\Lc_{t-1}|}{\max\{|\Sc|,1\}}
\;\ge\; 1$,  where $\Sc$ is the (tiny) initial seed set. This ratio ensures that the
schedule depends on relative growth of the anchor pool rather than its
absolute size, \eg increasing from 10 to 100 anchors and from 100 to
1000 anchors both correspond to $g_t=10$.

\stitle{Coverage-preserving parameterization.}
To satisfy (O2), we parameterize the schedule via a single scaling function
$ \mathrm{sf}(g)\ge 1$ that is set to satisfy the following:
\[m_t \;\approx\; m_0\,\mathrm{sf}(g_t), \qquad \rho_t \;\approx\; \rho_0/\mathrm{sf}(g_t),\]
where $m_0\in\mathbb{N}$ and $\rho_0\in(0,1]$ are base parameters.
Using this view scheduling would assure approximate invariance (ignoring rounding)
$m_t\rho_t \approx m_0\rho_0$, \ie\ {\em a constant expected per\-/anchor participation rate} under uniform sampling.

This would also imply linear scaling of total anchor usage since 
$m_t s_t = m_t\rho_t\,n_t \approx m_0\rho_0\,|\Lc_{t-1}|$.
Thus it increases the number of views while controlling per-iteration work.

\stitle{Determining $\mathrm{sf}(g)$}.
For convenience, we consider $\mathrm{sf}(g)=1+u(g)$, where we use $u(g)$ to
represent the increment induced by the growth of anchor pool. Thus we require
$u(1) = 0$ as $g=1$ corresponds to the initial state $|\Lc_{t-1}|=|\Sc|$ with no
growth occurred.

To decide $u(g)$, we consider a simple principle: increment in schedule should
depend only on the expansion rate of the anchors not the current scale $g$. 
Let $g$ be the current growth ratio \wrt $\Lc_{r-1}$  and let $g'$ denote
the growth ratio after expansion to $\Lc_{r}$.
Define the  expansion ratio $\lambda:=g'/g$. 
Then this principle says, if the anchor pool grows by a factor
$\lambda\ge 1$, we want the increase in schedule (size and number of views) to depend only on $\lambda$
and not on the current growth scale $g$. In terms of $u(g)=\mathrm{sf}(g)-1$, this
boils down to $u(\lambda g)-u(g) = u(\lambda)$ for $\forall g,\lambda\ge 1$;
equivalently, 
\[u(\lambda g)=u(\lambda)+u(g)\qquad \forall g,\lambda\ge 1.\]
By the classical Cauchy additive functional equation (after a log change of
variables), assuming $u$ is continuous, the unique solutions to this are of form $u(g) = c \log g$  for
some constant $c\ge 0$.

Thus we adopt the logarithmic scale factor
\[\mathrm{sf}_t := \mathrm{sf}(g_t) = 1 + c\log g_t.\]
This  grows sublinearly and avoids exploding $T_t$ as the anchor pool
expands, while still increasing view diversity. We empirically verify the robustness of view-scheduling hyperparameters $m_0$ and $\rho_0$ (see Appendix~\ref{app:sensitivity}).

\revise{
\subsection{Per-view Link Proposal at Scale}
\label{app:per-view-impl}

When $\max(|\Ec_1|,|\Ec_2|)\le \tau$ ($\tau = 5\!\times\!10^{5}$), GEH runs the
global path of \S~\ref{sec-linking}.
Otherwise, at iteration $t$, we replace FPS with a $k$-means partition of
$\Lc_{t-1}$ in $E_1$ to ensure diversity of views. 
The number of partitions is
$m_t = \lceil|\Lc_{t-1}|/d_{\max}\rceil$, 
where $d_{\max} = \max(d_{E_1}, d_{E_2})$, which ensures each
view has sufficient anchors for a
dimensionally well-posed local neighborhood.
Each anchor in $\Lc_{t-1}$ is included in its
$\rho$ nearest clusters ($\rho{=}2$ in our experiments) which matches with the
constant per-anchor participation rate in \S~\ref{app:view-scheduling}.

Within each view $\Ac_{t,k}$, each paired anchor $(a, a') \in \Ac_{t,k}$ contributes its own local                                                                                                           
neighborhood: let                                                                                                                                                              
$\mathrm{NN}_{k_{\mathrm{NN}}}(a, E_1)$ denote the                                                                                                                             
$k_{\mathrm{NN}}$ nearest ambient neighbors of $a$ in $E_1$, and                                                                                                               
analogously $\mathrm{NN}_{k_{\mathrm{NN}}}(a', E_2)$. The view's local                                                                                                         
sets are the unions $S_{\Ac_{t,k}}^{(1)} \;:=\; \bigcup_{(a,a')\in\Ac_{t,k}} \mathrm{NN}_{k_{\mathrm{NN}}}(a, E_1)$
and analogously $S_{\Ac_{t,k}}^{(2)}$.
We then build the distance-to-anchor signatures of \S\ref{sec-hash} only for
points in $S_{\Ac_{t,k}}^{(1)}\cup S_{\Ac_{t,k}}^{(2)}$, compute
$\kw{csls}_{\Ac_{t,k}}$ within
$S_{\Ac_{t,k}}^{(1)}\!\times\!S_{\Ac_{t,k}}^{(2)}$, and emit MNN
proposals $\Pc_{t,k}$ within that local bipartite set.\looseness = -1

The local sets are identified via two $k$-NN indices over
$E_1$ and $E_2$ separately, built once at the start of \mvh (cost
$O(|E_1|+|E_2|)$) and reused across all
$T$ iterations and $\sum_t m_t$ views. Each per-view local-set lookup
is then a single $k_{\mathrm{NN}}$-NN query against this static
index.}

\section{Details of Section~\ref{sec-exp}}
\label{app:exp}

\subsection{Experimental Setup}\label{app:datasets}

\subsubsection{Datasets}
\label{app:beir}
We conduct main experiments on five benchmark datasets from BEIR~\cite{thakur2021beir}, covering biomedical retrieval, financial analysis, citation prediction, argument mining, and fact verification. Below we briefly describe each dataset.

\scifact~\cite{scifact} is a scientific fact-checking benchmark. Queries consist of short scientific claims, while documents are abstracts of scientific papers. The goal is to retrieve supporting or refuting evidence for each claim.
 
\nfcorpus~\cite{nfcorpus} focuses on health-related information retrieval. Queries come from user-generated content such as blog posts, Q\&A threads, and video transcripts, and the corpus is built from medical articles in PubMed.

\arguana~\cite{arguana} addresses argument retrieval. Given an argument as a query, the task is to identify the most relevant counterarguments from a collection of argument pairs mined from online debate portals.

\scidocs~\cite{scidocs} is a citation prediction dataset derived from scientific publications. Queries are scientific papers, and the task is to retrieve related works among a large held-out collection.

\fiqa~\cite{fiqa} comes from the financial domain. The queries are investment-related questions posted on StackExchange, while the corpus contains financial articles and answers from the same platform.

\fever~\cite{fever} is a fact-verification dataset. The queries are short factual claims, and the corpus introductory sections of Wikipedia pages.

Table~\ref{tab:beir_datasets} provides dataset statistics, including query counts, corpus sizes, and the average number of relevant documents per query.

\begin{table*}[t]
\caption{BEIR dataset statistics: \normalfont\footnotesize{\#test queries ($Q$), test corpus size ($|C|$), and avg.\ relevant docs/query ($R$).}}
\label{tab:beir_datasets}
\centering
\begin{tabular}{lrrr}
\toprule
\textbf{Dataset} & $Q$ & $|C|$ & $R$ \\
\midrule
\scifact   & 300   & 5{,}183   & 1.1 \\
\nfcorpus & 323 & 3{,}633  & 38.2 \\
\arguana   & 1{,}406 & 8{,}674  & 1.0 \\
\scidocs   & 1{,}000 & 25{,}657 & 4.9 \\
\fiqa & 648   & 57{,}638  & 2.6 \\
\fever &6{,}666 &  5{,}416{,}568  & 1.2 \\
\bottomrule
\end{tabular}
\end{table*}

\subsubsection{Embedding Models}\label{app:models}

We generate embeddings using a mix of proprietary API services and open-weight embedding models. 

\stitle{\mistral}  We use Mistral's commercial Embeddings API with the mistral-embed model\cite{mistral}. Mistral's model family is        
  designed for efficient inference; we use the hosted embeddings endpoint as provided by Mistral.   
  
\stitle{\openai} We use OpenAI's Embeddings API with text-embedding-3-small\cite{openai}. We embed queries and documents using the same    
  embedding model and endpoint (i.e., no separate query/document encoders or prompt format is required by the API).                          

\stitle{\gte} We use gte-Qwen2-7B-instruct\cite{li2023towards}, a 7B-parameter text embedding model in the General Text Embeddings (GTE)   
  family, trained on top of the \texttt{Qwen2-7B} backbone. We follow the recommended usage: queries are encoded with a query-specific       
  prompt, whereas documents are encoded without instructions.                 

\stitle{\qwen} We use Qwen3-Embedding-8B~\cite{qwen3embedding}, an instruction-aware embedding model built on the Qwen3 foundation models. 
   We follow the recommended asymmetric encoding: queries are encoded with a query-specific prompt, while documents are encoded unchanged.   

\stitle{\kalm}   We use KaLM-Embedding-Gemma3-12B~\cite{zhao2025kalmembeddingv2superiortrainingtechniques}, a 12B-parameter embedding model
   from Tencent built on the Gemma3 foundation. The model uses symmetric encoding for both queries and documents, with L2-normalized output 
  embeddings.  

Table~\ref{tab:main_embed_dims} summarizes the embedding dimensionality of the models used in
our main experiments.

\begin{table}[t]
\centering
\caption{Embedding output dimensionality for the models used in our main experiments.\normalfont\footnotesize{Abbr.\ denotes the model shorthand used throughout the paper.}}
\label{tab:main_embed_dims}
\small
\begin{tabular}{lcr}
\toprule
Model & Abbr. & Dim \\
\midrule
KaLM-Gemma3-12B-2511 & \kalm & 3840 \\
Qwen3-Embedding-8B & \qwen & 4096 \\
text-embedding-3-small & \openai & 1536 \\
mistral-embed & \mistral & 1024 \\
gte-Qwen2-7B-instruct & \gte & 3584 \\
\bottomrule
\end{tabular}
\end{table}

\subsubsection{Baseline Methods}\label{app:baselines}

For all alignment baselines (\linear, \cca, \mlp, \pr, \rcsls), we align $\Ec_1$ with $\Ec_2$ space and infer links using CSLS cosine similarity-based mutual nearest neighbors.

\stitle{Linear transformation (\linear)}\label{app:linear}
We performs alignment by learning a linear map from $\Ec_1$to $\Ec_2$. Given seed \revise{links} $\{(a_i,b_i)\}_{i\in\mathcal{S}}$, where $a_i\in\mathbb{R}^{d_s}$ and $b_i\in\mathbb{R}^{d_t}$ are embeddings of the same item in the source and target spaces, respectively, we learn a bias-free linear transformation $W\in\mathbb{R}^{d_t\times d_s}$ by minimizing mean squared error $\mathcal{L}(W)=\frac{1}{|\mathcal{S}|}\sum_{i\in\mathcal{S}}\|Wa_i-b_i\|_2^2$. We optimize $W$ with Adam (learning rate $10^{-3}$) for 100 epochs.

\stitle{Canonical Correlation Analysis (\cca).} We standardize each space independently, fit CCA on the seed pairs to learn one linear projection per space that maximizes correlation between projected seed embeddings. 

\stitle{Multi-Layer Perceptron (\mlp).}
We train a single-hidden-layer MLP mapping from the source embedding dimension to the target embedding dimension, with hidden width 512 and ReLU activations. We train on seed pairs using a cosine loss, optimization uses Adam with learning rate $10^{-2}$ and weight decay $10^{-5}$ for 100 training epochs.

\stitle{Procrustes (\pr).}
We align the two embedding spaces by solving the orthogonal Procrustes problem on the seed links. Let $\mathbf{X},\mathbf{Y}\in\mathbb{R}^{n\times d}$ denote the corresponding seed embeddings (rows are paired items). We estimate an orthogonal map
$\mathbf{R}^\star=\arg\min_{\mathbf{R}\in\mathbb{R}^{d\times d}}\ \|\mathbf{X}\mathbf{R}-\mathbf{Y}\|_F^2
\quad \text{s.t.}\quad \mathbf{R}^\top\mathbf{R}=\mathbf{I}$.
Let $\mathbf{A}=\mathbf{X}^\top \mathbf{Y}$ and compute its SVD $\mathbf{A}=\mathbf{U}\boldsymbol{\Sigma}\mathbf{V}^\top$. A closed-form optimum is given by
$\mathbf{R}^\star=\mathbf{U}\mathbf{V}^\top$. At inference time, we align embedding $\mathbf{x}$ via $\mathbf{x}\mathbf{R}^\star$.

\stitle{Relaxed Cross-domain Similarity Local Scaling (\rcsls)} We implement RCSLS~\cite{joulin2018loss}, which directly optimizes a linear map to improve CSLS-based retrieval and mitigate hubness.
We initialize with the Procrustes solution and optimize the RCSLS objective on the seed pairs using gradient-based optimization, projecting back to the orthogonal group after each update. 

\revise{\stitle{Unbalanced Gromov-Wasserstein (\ugw).}
We implement Unbalanced Gromov-Wasserstein~\cite{ugw} with POT's log-domain Sinkhorn solver on intra-view cosine distance matrices (normalized by their mean), warm-started by a seed-biased coupling on the supervised pairs. Links are read from the resulting transport plan via mutual argmax.
}

\stitle{Bootstrapping parallel anchors (\ao).}
We implement AO~\cite{ao} to discover links via relative representations and Sinkhorn OT. We
$\ell_2$-normalize embeddings and follow the original optimization schedule (250 steps; one Sinkhorn iteration per step).We set the anchor budget to the true overlap size, $K=\alpha|\Dc|$ (i.e., AO is given the overlap cardinality). All other baselines and \mvh does not assume knowledge of $\alpha$.

\subsubsection{Computational Resources}                 
\label{app:compute}
All experiments were run on a Kubernetes cluster. Each run was allocated a single
compute node with an AMD EPYC 7713P (64 cores), 896\,GB RAM, and one NVIDIA A100
GPU (80\,GB), running Ubuntu 22.04.5 LTS.           

\subsection{Implementation Details and Extended Analysis of \mvh}
\label{app:stopping}

\subsubsection{Stopping Criterion}
We monitor the \emph{mutual-NN ratio}, defined as the fraction of points that participate in at least one mutual nearest-neighbor (MNN) pair in an iteration. Let $\mathcal{P}_t$ be the set of MNN pairs returned at iteration $t$, and let
\[
U_t := \{u\in \Ec_1:\exists v,(u,v)\in\mathcal{P}_t\}\ \cup\ \{v\in \Ec_2:\exists u,(u,v)\in\mathcal{P}_t\}
\]
where $M_t := |U_t|, N := \max\{|\Ec_1|,|\Ec_2|\}$.
We define $\mathrm{MNN\_ratio}_t := M_t/N$ and terminate bootstrapping if any of the following holds:
(i) $M_t=0$ (no mutual pairs);
(ii) after burn-in $T_{\min}$ (default $10$), the ratio stabilizes, i.e.,
\[
\max_{i\in\{t-T_{\min}+1,\dots,t\}}\left|\mathrm{MNN\_ratio}_i-\mathrm{MNN\_ratio}_{i-1}\right| < 0.01;
\]
or (iii) a maximum of 100 iterations is reached.

\subsubsection{Additional Posterior/Precision--Proximity Analyses Across Datasets}
\label{app:iter_vis}
We report $\mathcal{L}_{1}$(link set from the first iteration) under 20\% ground-truth overlap with 15 seed pairs on \mistral and \openai embeddings. Predicted links are grouped into 30 quantile bins by their minimum cosine distance (in the corresponding embedding space) to the anchors that voted for them; we plot per-bin precision and mean posterior. As shown in Figure~\ref{fig:posterior}, links supported by anchors at smaller distances are more accurate, consistent with our Theorem~\ref{thm-local-iso}, and the mean posterior closely tracks empirical precision across bins, indicating that the confidence score is well-calibrated.
\begin{figure*}[t]
  \centering

  \begin{subfigure}[t]{0.48\textwidth}
    \centering
    \includegraphics[width=\linewidth]{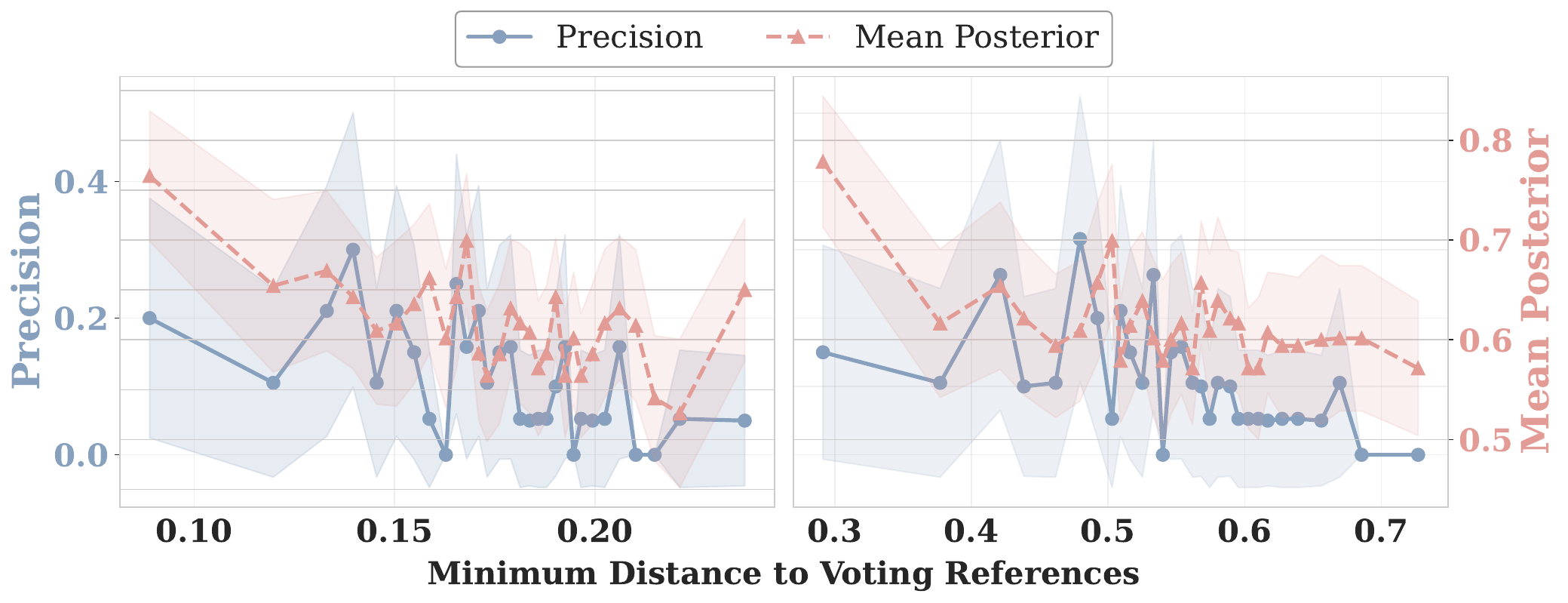}
    \caption{\nfcorpus}
    \label{fig:post-nfcorpus}
  \end{subfigure}\hfill
  \begin{subfigure}[t]{0.48\textwidth}
    \centering
    \includegraphics[width=\linewidth]{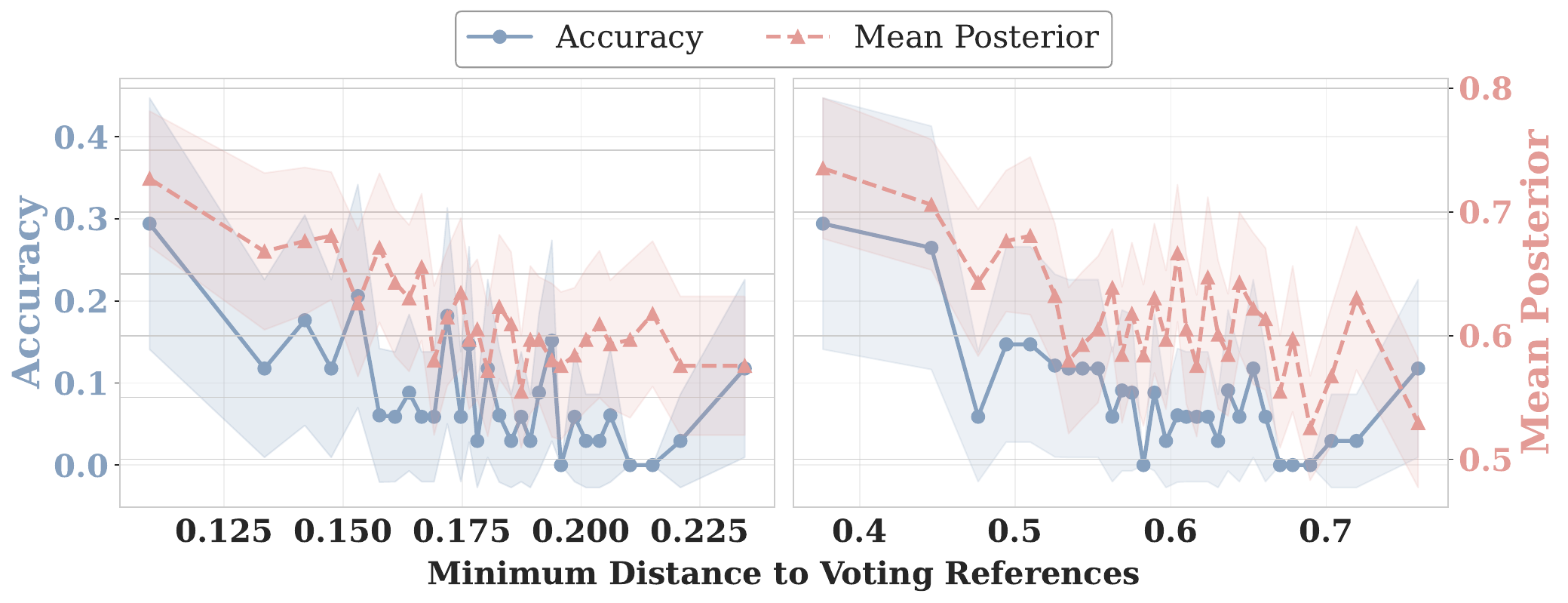}
    \caption{\scifact}
    \label{fig:post-scifact}
  \end{subfigure}

  \vspace{0.6em}

  \begin{subfigure}[t]{0.48\textwidth}
    \centering
    \includegraphics[width=\linewidth]{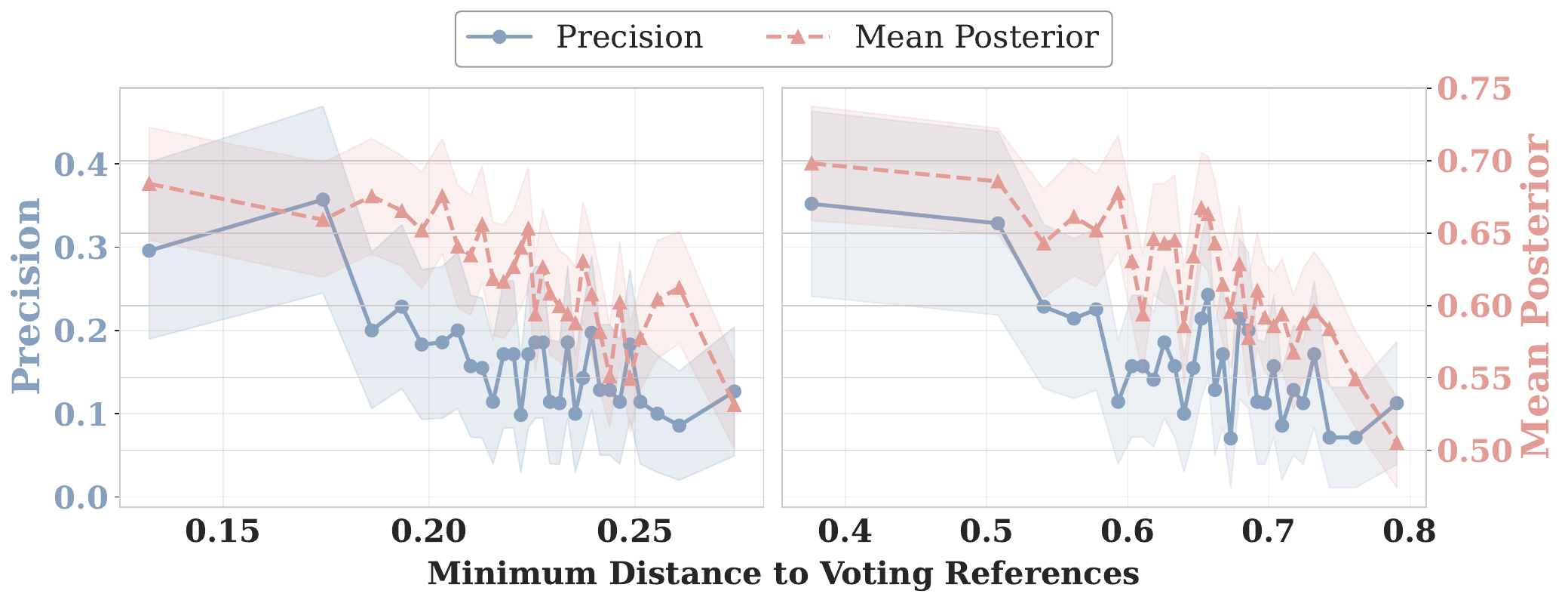}
    \caption{\arguana}
    \label{fig:post-arguana}
  \end{subfigure}\hfill
  \begin{subfigure}[t]{0.48\textwidth}
    \centering
    \includegraphics[width=\linewidth]{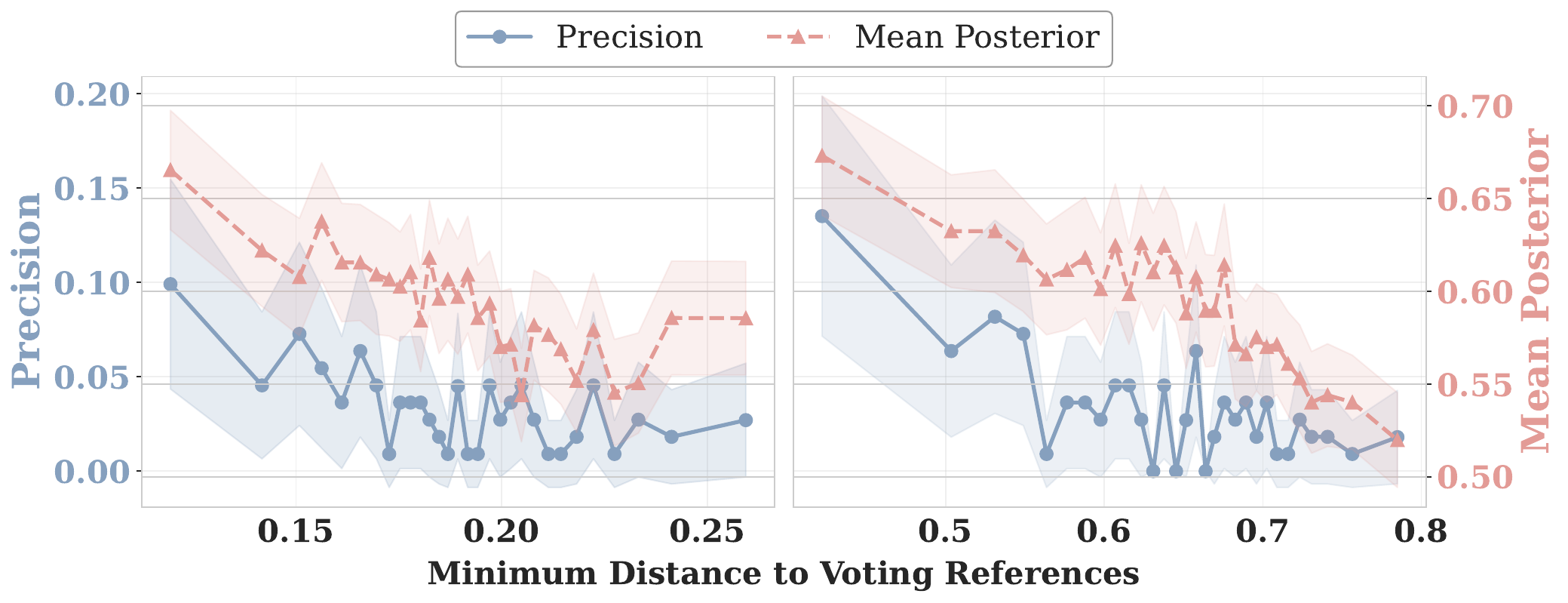}
    \caption{\scidocs}
    \label{fig:post-fourth}
  \end{subfigure}

  \caption{Posterior/Precision vs.\ anchor proximity.\normalfont\footnotesize {
    For \mistral$\leftrightarrow$\openai{} linking at $\alpha=0.2$ overlap with $|\Sc|=15$ seeds, we bin predicted links by their minimum distance
    to the anchors that voted for them (30 quantile bins) and plot per-bin empirical precision and mean posterior confidence.}}
  \label{fig:posterior}
\end{figure*}
\subsubsection{Robustness to $\Sc$ Initialization}

We analyze the sensitivity of \mvh to the structural properties of the initial supervision set $\Sc$. Since \mvh relies on a small set of anchors sampled from the overlap region between two embedding sets, we test whether different strategies for selecting these seed pairs materially affect final performance.

\stitle{Setup.} We evaluate between \qwen and \openai embeddings across five datasets. We hold the overlap ratio $\alpha$ and number of seed anchors $|\Sc|$ constant while varying the sampling strategy used to select$|\Sc|$ from the overlap.

We compare 4 different strategies:

\begin{itemize}[leftmargin=3ex, itemsep=0ex, topsep=0.0ex, partopsep=0.0ex]
\item \textbf{Nearest}: Randomly choose one anchor and take its $k-1$ nearest neighbors (in \qwen space). This produces a localized supervision pattern.
\item \textbf{Random}: Seeds are sampled uniformly without replacement. This is the default strategy employed in our main experiments, requiring no prior knowledge of the overlap manifold.
\item \textbf{FPS}: Greedily build a seed set by repeatedly selecting the candidate that maximizes its minimum cosine distance to previously selected seeds. This yields a highly diverse seed set and has a “spread-out” supervision.
\item \textbf{Centroids}: We cluster the overlapping vectors (in \qwen space) using $k$-means (where  $k=|\Sc|$) and select the vectors nearest to the centroids. This ensures seed anchors are representative of the distribution.
\end{itemize}
\revise{Table~\ref{tab:init_strategies_per_setup} reports F1 across all 45 (dataset, $\alpha$, $|\Sc|$) configurations. Performance is stable across strategies (the largest max$-$min spread is 3.0\,pp), confirming that \mvh is robust to the choice of seed anchors.}

\begin{table*}[!htbp]
  \centering
  \footnotesize
  \setlength{\tabcolsep}{6pt}
  \renewcommand{\arraystretch}{1.1}
  \caption{\revise{F1 ($\%$) by seed-initialization strategy {\normalfont\footnotesize on \qwen$\leftrightarrow$\openai{} linking across five datasets, three overlap ratios $\alpha\in\{0.15,0.20,0.30\}$, and three seed budgets $|\Sc|\in\{15,20,30\}$. Bold marks the best strategy within each row. The \textit{Max Gap} column reports the spread (max$-$min) across strategies in percentage points (pp); the bolded value is the largest spread observed across all 45 setups.}}}
  \label{tab:init_strategies_per_setup}
  \begin{tabular*}{\linewidth}{@{\extracolsep{\fill}}llc ccccc@{}}
  \toprule
  Dataset & $\alpha$ & $|\Sc|$ & Nearest & Random & FPS & Centroids & \textit{Max Gap} \\
  \midrule
  \multirow{9}{*}{\fiqa}
    & \multirow{3}{*}{0.15} & 15 & \textbf{62.5} & 61.4          & 60.9          & 61.5          & 1.6 \\
    &                       & 20 & \textbf{62.6} & 61.3          & 61.3          & 61.2          & 1.4 \\
    &                       & 30 & 63.0          & 62.2          & 62.1          & \textbf{63.0} & 0.9 \\
    \cmidrule(lr){2-8}
    & \multirow{3}{*}{0.20} & 15 & \textbf{68.4} & 67.5          & 67.3          & 67.0          & 1.4 \\
    &                       & 20 & \textbf{69.0} & 68.1          & 67.2          & 67.8          & 1.8 \\
    &                       & 30 & \textbf{68.8} & 67.8          & 68.4          & 67.6          & 1.2 \\
    \cmidrule(lr){2-8}
    & \multirow{3}{*}{0.30} & 15 & \textbf{75.5} & 74.2          & 74.2          & 74.3          & 1.3 \\
    &                       & 20 & \textbf{75.3} & 75.1          & 74.4          & 74.9          & 0.9 \\
    &                       & 30 & \textbf{75.6} & 74.0          & 74.6          & 75.4          & 1.6 \\
  \midrule
  \multirow{9}{*}{\scidocs}
    & \multirow{3}{*}{0.15} & 15 & \textbf{75.2} & 74.6          & 73.6          & 74.3          & 1.6 \\
    &                       & 20 & 73.8          & \textbf{75.4} & 74.6          & 75.2          & 1.6 \\
    &                       & 30 & 74.0          & 74.7          & 74.5          & \textbf{74.9} & 0.9 \\
    \cmidrule(lr){2-8}
    & \multirow{3}{*}{0.20} & 15 & 80.1          & 80.3          & 79.5          & \textbf{80.5} & 1.0 \\
    &                       & 20 & 79.7          & \textbf{80.3} & 80.2          & 80.2          & 0.6 \\
    &                       & 30 & 80.1          & \textbf{80.6} & 80.4          & 79.9          & 0.7 \\
    \cmidrule(lr){2-8}
    & \multirow{3}{*}{0.30} & 15 & 86.9          & \textbf{87.2} & 86.7          & 86.6          & 0.6 \\
    &                       & 20 & 86.7          & \textbf{87.0} & 86.9          & 86.7          & 0.3 \\
    &                       & 30 & 87.0          & \textbf{87.2} & 86.9          & 86.6          & 0.6 \\
  \midrule
  \multirow{9}{*}{\arguana}
    & \multirow{3}{*}{0.15} & 15 & 67.1          & 67.6          & 67.1          & \textbf{68.7} & 1.6 \\
    &                       & 20 & 67.9          & 68.0          & 67.6          & \textbf{68.3} & 0.7 \\
    &                       & 30 & \textbf{68.0} & 67.3          & 67.1          & 67.7          & 0.9 \\
    \cmidrule(lr){2-8}
    & \multirow{3}{*}{0.20} & 15 & 74.2          & 74.9          & 74.5          & \textbf{74.9} & 0.7 \\
    &                       & 20 & 74.1          & \textbf{75.3} & 74.3          & 74.8          & 1.2 \\
    &                       & 30 & 74.1          & 74.6          & 73.9          & \textbf{75.0} & 1.1 \\
    \cmidrule(lr){2-8}
    & \multirow{3}{*}{0.30} & 15 & 82.3          & 82.1          & 82.8          & \textbf{83.1} & 1.0 \\
    &                       & 20 & 82.4          & 82.0          & 82.4          & \textbf{82.9} & 0.9 \\
    &                       & 30 & \textbf{82.4} & 82.0          & 82.2          & 82.3          & 0.4 \\
  \midrule
  \multirow{9}{*}{\scifact}
    & \multirow{3}{*}{0.15} & 15 & 74.7          & 76.5          & \textbf{77.7} & 77.5          & \textbf{3.0} \\
    &                       & 20 & 76.3          & \textbf{77.4} & 76.7          & 76.3          & 1.1 \\
    &                       & 30 & 75.3          & \textbf{75.4} & 75.1          & 74.4          & 1.0 \\
    \cmidrule(lr){2-8}
    & \multirow{3}{*}{0.20} & 15 & 81.8          & 83.1          & 82.4          & \textbf{83.2} & 1.4 \\
    &                       & 20 & 82.1          & 83.1          & \textbf{83.5} & 82.5          & 1.4 \\
    &                       & 30 & 82.3          & 81.8          & \textbf{82.4} & 81.4          & 1.0 \\
    \cmidrule(lr){2-8}
    & \multirow{3}{*}{0.30} & 15 & 89.8          & 89.9          & 89.6          & \textbf{90.0} & 0.4 \\
    &                       & 20 & \textbf{90.0} & 90.0          & 89.4          & 90.0          & 0.6 \\
    &                       & 30 & 89.4          & \textbf{89.7} & 89.2          & 89.3          & 0.5 \\
  \midrule
  \multirow{9}{*}{\nfcorpus}
    & \multirow{3}{*}{0.15} & 15 & 73.1          & \textbf{74.4} & 74.0          & 73.9          & 1.3 \\
    &                       & 20 & \textbf{73.3} & 73.0          & 73.3          & 72.8          & 0.5 \\
    &                       & 30 & 71.4          & \textbf{72.4} & 71.8          & 71.0          & 1.4 \\
    \cmidrule(lr){2-8}
    & \multirow{3}{*}{0.20} & 15 & \textbf{80.2} & 79.4          & 79.7          & 79.7          & 0.8 \\
    &                       & 20 & 79.3          & 79.6          & \textbf{80.0} & 79.4          & 0.7 \\
    &                       & 30 & 77.8          & \textbf{79.5} & 78.9          & 78.1          & 1.7 \\
    \cmidrule(lr){2-8}
    & \multirow{3}{*}{0.30} & 15 & 87.7          & \textbf{88.3} & 87.9          & 87.0          & 1.3 \\
    &                       & 20 & 87.7          & \textbf{88.3} & 87.5          & 87.7          & 0.8 \\
    &                       & 30 & \textbf{88.1} & 87.4          & 87.4          & 87.2          & 0.9 \\
  \bottomrule
  \end{tabular*}
\end{table*}

\eat{

\subsubsection{Per-Component Ablation Details}
\label{app:ablation}

We ablate the five components of \mvh\ on \scidocs\ (\mistral\ {\sc vs.}\ \openai) with $\alpha\!=\!0.15$ and $|\Sc|\!=\!15$.
For each variant we run $10$ independent trials, each with a fresh random seed controlling both the draw of $\Sc$ and the internal randomness of view generation, and report the across-trial mean and standard deviation ($\mu\pm\sigma$). Every removal degrades both precision and recall, so no single component is dispensable.%

Each variant is \mvh\ but with the named component swapped for a simpler default. The replacement defaults are:
\begin{itemize}[leftmargin=3ex,itemsep=0.2ex,topsep=0.4ex,partopsep=0ex]
  \item \textbf{$-$ Kernel}: use the raw distance vector $r_\Ac$ instead of the kernelized signature $h_\Ac$ (Section~\ref{sec-linking}).
  \item \textbf{$-$ FPS}: draw view anchors uniformly at random from $\Lc_{t-1}$ instead of by FPS (Section~\ref{sec-linking}).
  \item \textbf{$-$ FPS \& Kernel}: both per-view defaults applied jointly (raw signatures plus random view draws).
  \item \textbf{$-$ View scheduling}: freeze the schedule at iteration zero, $\rho_t\equiv\rho_0$ and $m_t\equiv m_0$ (Section~\ref{sec-linking}).
  \item \textbf{$-$ Multi-view voting}: use single-view MNN proposal, $m_t{=}1$ and $\Ac_{t,1}{=}\Lc_{t-1}$ (Section~\ref{sec-hash}).
  \item \textbf{$-$ Bootstrapping}: run a single iteration on the $\Sc$; no anchor-pool growth (Section~\ref{sec-linking}).
\end{itemize}

Table~\ref{tab:scidocs-ablation} reports the results. Bootstrapping and multi-view voting account for most of the absolute
performance. Without bootstrapping, only the 15 seeds are available as
anchors and \mvh\ collapses to $1.9/2.1/1.9$; without multi-view voting,
single-view link proposals cannot separate true links from
distortion-driven collisions and performance falls to $24.0/39.4/29.8$.

Removing FPS or the kernel signature makes performance highly variable (e.g., recall $\sigma$ rises from $0.7$ to $19$--$35$): these two per-view components act as variance reducers, FPS by spreading anchors so each view stays well-conditioned and the kernel by suppressing the long-range distance regime that decorrelates across encoders (Section~\ref{subsec-evidence}).

\begin{table}[t]
  \centering
  \small
  \setlength{\tabcolsep}{6pt}
  \caption{
    Ablation of \mvh\ components
  {\normalfont\footnotesize on \scidocs\ with \mistral\ {\sc vs.} \openai, at $\alpha$=0.15 and seeds $|\Sc|$=15: 
  \mvh denotes the complete pipeline: FPS view sampling, Kernelized signature, adaptive view scheduling, multi-view posterior aggregation, and bootstrapping. 
  Each subsequent row removes one component relative to \mvh; the row \mbox{$-$ FPS \& Kernel} removes both per-view components jointly. 
  Each cell reports the seed-level mean $\pm$ standard deviation (\%) of the corresponding metric. 
  A $\dag$ marks cells where the seed-level coefficient of variation ($\sigma/\mu$) exceeds $10\%$.
  }
  }
  \label{tab:scidocs-ablation}
  \begin{tabular}{lccc}
  \toprule
  Variant & Precision (\%) & Recall (\%) & F1 (\%) \\
  \midrule
  \mvh                               &  62.1 $\pm$  1.1     &  81.7 $\pm$  0.7     &  70.5 $\pm$  0.6     \\
  \;\;$-$ Kernel                     &  61.0 $\pm$  8.3\dag &  52.9 $\pm$ 35.0\dag &  51.0 $\pm$ 33.3\dag \\
  \;\;$-$ FPS                        &  32.8 $\pm$ 26.7\dag &  16.3 $\pm$ 19.2\dag &  20.9 $\pm$ 23.3\dag \\
  \;\;$-$ FPS \& Kernel              &  28.3 $\pm$ 23.3\dag &   0.8 $\pm$  0.5\dag &   1.4 $\pm$  1.1\dag \\
  \;\;$-$ Adaptive schedule          &  33.4 $\pm$  0.6     &  61.8 $\pm$  1.6     &  43.4 $\pm$  0.9     \\
  \;\;$-$ Multi-view voting          &  24.0 $\pm$  2.6\dag &  39.4 $\pm$  5.0\dag &  29.8 $\pm$  3.4\dag \\
  \;\;$-$ Bootstrapping              &   1.9 $\pm$  0.6\dag &   2.1 $\pm$  1.2\dag &   1.9 $\pm$  0.8\dag \\
  \bottomrule
  \end{tabular}
\end{table}
}
\subsubsection{Sensitivity to $c$, $\rho_0$, and $k_{\mathrm{CSLS}}$}\label{app:sensitivity}
We evaluate robustness on \scifact{} and \nfcorpus{} using \mistral$\leftrightarrow$\openai{} embeddings, fixing the overlap ratio to $\alpha=0.3$ and the seed budget to $|\Sc|=15$. We sweep three hyperparameters: the view-growth constant $c$ in $\mathrm{sf}(g)=1+c\log g$, the base per-view anchor fraction $\rho_0$, and the CSLS neighborhood size $k_{\mathrm{CSLS}}$ used for MNN retrieval. When varying $\rho_0$, we set $m_0=\lceil 2/\rho_0\rceil$ to keep the expected per-anchor coverage approximately constant ($m_0\rho_0\approx 2$, up to rounding). For each sweep, we report the mean F1 across the two datasets and define the stable range as configurations achieving at least $0.97\times$ the best mean F1 in that sweep. As shown in Figure~\ref{fig-sensitivity}, performance is stable over broad intervals: $c\in[0.2,1]$, $k_{\mathrm{CSLS}}\in[4,50]$, and $\rho_0\in[0.2,0.5]$. We use $c=0.3$, $\rho_0=0.4$, and $k_{\mathrm{CSLS}}=50$ in all experiments.
\begin{figure*}
\centering
\centerline{\includegraphics[width=0.9\linewidth]{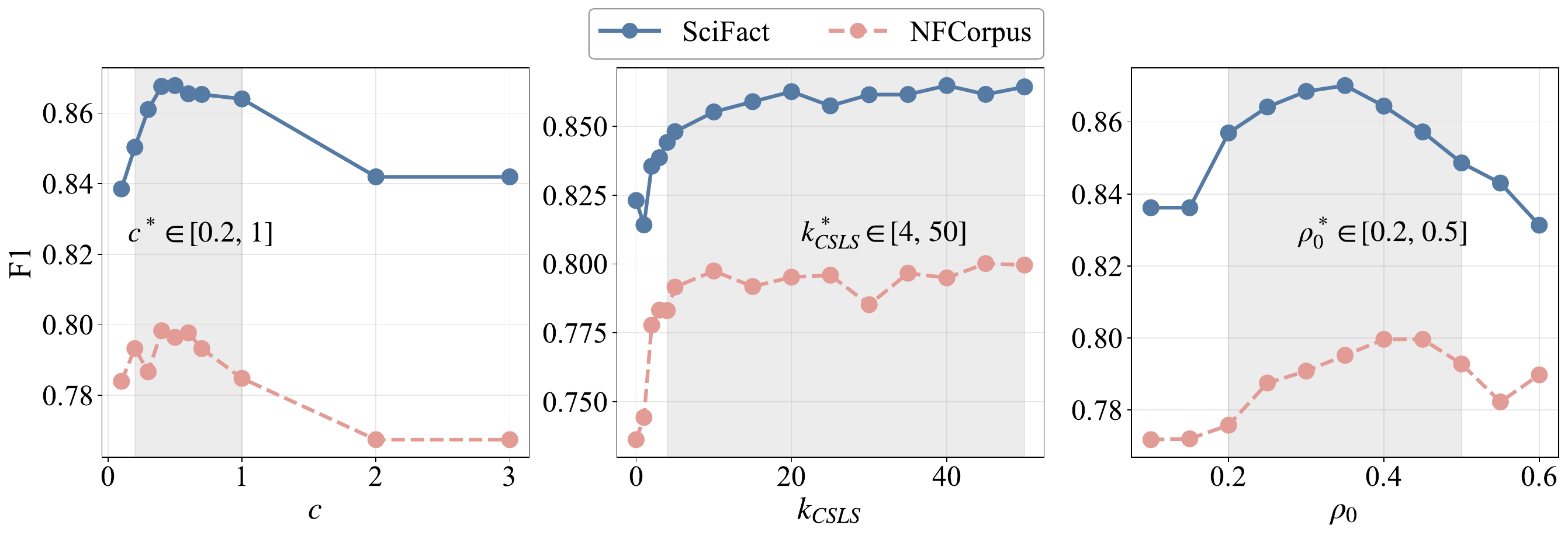}}
\caption{\textbf{Sensitivity to view scheduling and CSLS hyperparameters.}\normalfont\footnotesize{
F1 on \scifact{} and \nfcorpus{} for \mistral$\leftrightarrow$\openai{} linking with overlap ratio $\alpha=0.3$ and $|\Sc|=15$ seeds.
We vary (left) the logarithmic growth constant $c$ in $\mathrm{sf}(g)=1+c\log g$, (middle) the CSLS neighborhood size $k_{\mathrm{CSLS}}$,
and (right) the base per-view anchor fraction $\rho_0$. The shaded gray region denotes the near-optimal range achieving at least $97\%$ of the peak F1 for each sweep.}}
\label{fig-sensitivity}
\end{figure*}

\subsection{Additional Results of \mvh}

\subsubsection{Additional Results on Vector Linking}\label{app:more_model_pairs}
We report the complete experimental grid over $5$ model pairs and $5$ datasets, across $3$ overlap ratios and $3$ seed budgets.
Across this grid, \mvh achieves the best performance among all methods in the vast majority of settings.

\begin{table*}[!htbp]
    \centering
    \scriptsize
\setlength{\tabcolsep}{1.5ex} %
\renewcommand{\arraystretch}{1.1}%
    \caption{Vector linking on \nfcorpus{} (\gte{}$\leftrightarrow$\mistral{}):
    {\normalfont each cell reports \textbf{precision/recall/F1} (\%). Best values per metric are \textbf{bolded}.}}
    \label{tab:single_nfcorpus_gte_mistral}

\end{table*}

\begin{table*}[!htbp]
    \centering
    \scriptsize
\setlength{\tabcolsep}{1.5ex} %
\renewcommand{\arraystretch}{1.1}%
    \caption{Vector linking on \scidocs{} (\gte{}$\leftrightarrow$\mistral{}):
    {\normalfont each cell reports \textbf{precision/recall/F1} (\%). Best values per metric are \textbf{bolded}.}}
    \label{tab:single_scidocs_gte_mistral}
    %
\end{table*}

\begin{table*}[!htbp]
    \centering
    \scriptsize
\setlength{\tabcolsep}{1.5ex} %
\renewcommand{\arraystretch}{1.1}%
    \caption{Vector linking on \arguana{} (\gte{}$\leftrightarrow$\mistral{}):
    {\normalfont each cell reports \textbf{precision/recall/F1} (\%). Best values per metric are \textbf{bolded}.}}
    \label{tab:single_arguana_gte_mistral}
    %
\end{table*}

\begin{table*}[!htbp]
    \centering
    \scriptsize
\setlength{\tabcolsep}{1.5ex} %
\renewcommand{\arraystretch}{1.1}%
    \caption{Vector linking on \scifact{} (\gte{}$\leftrightarrow$\mistral{}):
    {\normalfont each cell reports \textbf{precision/recall/F1} (\%). Best values per metric are \textbf{bolded}.}}
    \label{tab:single_scifact_gte_mistral}
    %
\end{table*}

\begin{table*}[!htbp]
    \centering
    \scriptsize
\setlength{\tabcolsep}{1.5ex} %
\renewcommand{\arraystretch}{1.1}%
    \caption{Vector linking on \fiqa{} (\gte{}$\leftrightarrow$\mistral{}):
    {\normalfont each cell reports \textbf{precision/recall/F1} (\%). Best values per metric are \textbf{bolded}.}}
    \label{tab:single_fiqa_gte_mistral}
    %
\end{table*}

\begin{table*}[!htbp]
    \centering
    \scriptsize
\setlength{\tabcolsep}{1.5ex} %
\renewcommand{\arraystretch}{1.1}%
    \caption{Vector linking on \nfcorpus{} (\gte{}$\leftrightarrow$\openai{}):
    {\normalfont each cell reports \textbf{precision/recall/F1} (\%). Best values per metric are \textbf{bolded}.}}
    \label{tab:single_nfcorpus_gte_openai}
    %
\end{table*}

\begin{table*}[!htbp]
    \centering
    \scriptsize
\setlength{\tabcolsep}{1.5ex} %
\renewcommand{\arraystretch}{1.1}%
    \caption{Vector linking on \scidocs{} (\gte{}$\leftrightarrow$\openai{}):
    {\normalfont each cell reports \textbf{precision/recall/F1} (\%). Best values per metric are \textbf{bolded}.}}
    \label{tab:single_scidocs_gte_openai}
    %
\end{table*}

\begin{table*}[!htbp]
    \centering
    \scriptsize
\setlength{\tabcolsep}{1.5ex} %
\renewcommand{\arraystretch}{1.1}%
    \caption{Vector linking on \arguana{} (\gte{}$\leftrightarrow$\openai{}):
    {\normalfont each cell reports \textbf{precision/recall/F1} (\%). Best values per metric are \textbf{bolded}.}}
    \label{tab:single_arguana_gte_openai}
    %
\end{table*}

\begin{table*}[!htbp]
    \centering
    \scriptsize
\setlength{\tabcolsep}{1.5ex} %
\renewcommand{\arraystretch}{1.1}%
    \caption{Vector linking on \scifact{} (\gte{}$\leftrightarrow$\openai{}):
    {\normalfont each cell reports \textbf{precision/recall/F1} (\%). Best values per metric are \textbf{bolded}.}}
    \label{tab:single_scifact_gte_openai}
    %
\end{table*}

\begin{table*}[!htbp]
    \centering
    \scriptsize
\setlength{\tabcolsep}{1.5ex} %
\renewcommand{\arraystretch}{1.1}%
    \caption{Vector linking on \fiqa{} (\gte{}$\leftrightarrow$\openai{}):
    {\normalfont each cell reports \textbf{precision/recall/F1} (\%). Best values per metric are \textbf{bolded}.}}
    \label{tab:single_fiqa_gte_openai}
    %
\end{table*}

\begin{table*}[!htbp]
    \centering
    \scriptsize
\setlength{\tabcolsep}{1.5ex} %
\renewcommand{\arraystretch}{1.1}%
    \caption{Vector linking on \nfcorpus{} (\mistral{}$\leftrightarrow$\openai{}):
    {\normalfont each cell reports \textbf{precision/recall/F1} (\%). Best values per metric are \textbf{bolded}.}}
    \label{tab:single_nfcorpus_mistral_openai}
    %
\end{table*}

\begin{table*}[!htbp]
    \centering
    \scriptsize
\setlength{\tabcolsep}{1.5ex} %
\renewcommand{\arraystretch}{1.1}%
    \caption{Vector linking on \scidocs{} (\mistral{}$\leftrightarrow$\openai{}):
    {\normalfont each cell reports \textbf{precision/recall/F1} (\%). Best values per metric are \textbf{bolded}.}}
    \label{tab:single_scidocs_mistral_openai}
    %
\end{table*}

\begin{table*}[!htbp]
    \centering
    \scriptsize
\setlength{\tabcolsep}{1.5ex} %
\renewcommand{\arraystretch}{1.1}%
    \caption{Vector linking on \arguana{} (\mistral{}$\leftrightarrow$\openai{}):
    {\normalfont each cell reports \textbf{precision/recall/F1} (\%). Best values per metric are \textbf{bolded}.}}
    \label{tab:single_arguana_mistral_openai}
    %
\end{table*}

\begin{table*}[!htbp]
    \centering
    \scriptsize
\setlength{\tabcolsep}{1.5ex} %
\renewcommand{\arraystretch}{1.1}%
    \caption{Vector linking on \fiqa{} (\mistral{}$\leftrightarrow$\openai{}):
    {\normalfont each cell reports \textbf{precision/recall/F1} (\%). Best values per metric are \textbf{bolded}.}}
    \label{tab:single_fiqa_mistral_openai}
    %
\end{table*}

\begin{table*}[!htbp]
    \centering
    \scriptsize
\setlength{\tabcolsep}{1.5ex} %
\renewcommand{\arraystretch}{1.1}%
    \caption{Vector linking on \nfcorpus{} (\qwen{}$\leftrightarrow$\kalm{}):
    {\normalfont each cell reports \textbf{precision/recall/F1} (\%). Best values per metric are \textbf{bolded}.}}
    \label{tab:single_nfcorpus_qwen_kalm}
    %
\end{table*}

\begin{table*}[!htbp]
    \centering
    \scriptsize
\setlength{\tabcolsep}{1.5ex} %
\renewcommand{\arraystretch}{1.1}%
    \caption{Vector linking on \scidocs{} (\qwen{}$\leftrightarrow$\kalm{}):
    {\normalfont each cell reports \textbf{precision/recall/F1} (\%). Best values per metric are \textbf{bolded}.}}
    \label{tab:single_scidocs_qwen_kalm}
    %
\end{table*}

\begin{table*}[!htbp]
    \centering
    \scriptsize
\setlength{\tabcolsep}{1.5ex} %
\renewcommand{\arraystretch}{1.1}%
    \caption{Vector linking on \arguana{} (\qwen{}$\leftrightarrow$\kalm{}):
    {\normalfont each cell reports \textbf{precision/recall/F1} (\%). Best values per metric are \textbf{bolded}.}}
    \label{tab:single_arguana_qwen_kalm}
    %
\end{table*}

\begin{table*}[!htbp]
    \centering
    \scriptsize
\setlength{\tabcolsep}{1.5ex} %
\renewcommand{\arraystretch}{1.1}%
    \caption{Vector linking on \scifact{} (\qwen{}$\leftrightarrow$\kalm{}):
    {\normalfont each cell reports \textbf{precision/recall/F1} (\%). Best values per metric are \textbf{bolded}.}}
    \label{tab:single_scifact_qwen_kalm}
    %
\end{table*}

\begin{table*}[!htbp]
    \centering
    \scriptsize
\setlength{\tabcolsep}{1.5ex} %
\renewcommand{\arraystretch}{1.1}%
    \caption{Vector linking on \fiqa{} (\qwen{}$\leftrightarrow$\kalm{}):
    {\normalfont each cell reports \textbf{precision/recall/F1} (\%). Best values per metric are \textbf{bolded}.}}
    \label{tab:single_fiqa_qwen_kalm}
    %
\end{table*}

\begin{table*}[!htbp]
    \centering
    \scriptsize
\setlength{\tabcolsep}{1.5ex} %
\renewcommand{\arraystretch}{1.1}%
    \caption{Vector linking on \nfcorpus{} (\qwen{}$\leftrightarrow$\openai{}):
    {\normalfont each cell reports \textbf{precision/recall/F1} (\%). Best values per metric are \textbf{bolded}.}}
    \label{tab:single_nfcorpus_qwen_openai}
    %
\end{table*}

\begin{table*}[!htbp]
    \centering
    \scriptsize
\setlength{\tabcolsep}{1.5ex} %
\renewcommand{\arraystretch}{1.1}%
    \caption{Vector linking on \scidocs{} (\qwen{}$\leftrightarrow$\openai{}):
    {\normalfont each cell reports \textbf{precision/recall/F1} (\%). Best values per metric are \textbf{bolded}.}}
    \label{tab:single_scidocs_qwen_openai}
    %
\end{table*}

\begin{table*}[!htbp]
    \centering
    \scriptsize
\setlength{\tabcolsep}{1.5ex} %
\renewcommand{\arraystretch}{1.1}%
    \caption{Vector linking on \arguana{} (\qwen{}$\leftrightarrow$\openai{}):
    {\normalfont each cell reports \textbf{precision/recall/F1} (\%). Best values per metric are \textbf{bolded}.}}
    \label{tab:single_arguana_qwen_openai}
    %
\end{table*}

\begin{table*}[!htbp]
    \centering
    \scriptsize
\setlength{\tabcolsep}{1.5ex} %
\renewcommand{\arraystretch}{1.1}%
    \caption{Vector linking on \scifact{} (\qwen{}$\leftrightarrow$\openai{}):
    {\normalfont each cell reports \textbf{precision/recall/F1} (\%). Best values per metric are \textbf{bolded}.}}
    \label{tab:single_scifact_qwen_openai}
    %
\end{table*}

\begin{table*}[!htbp]
    \centering
    \scriptsize
\setlength{\tabcolsep}{1.5ex} %
\renewcommand{\arraystretch}{1.1}%
    \caption{Vector linking on \fiqa{} (\qwen{}$\leftrightarrow$\openai{}):
    {\normalfont each cell reports \textbf{precision/recall/F1} (\%). Best values per metric are \textbf{bolded}.}}
    \label{tab:single_fiqa_qwen_openai}
    %
\end{table*}

\subsubsection{Additional Results on Out-of-Domain Anchors}\label{app:ood-heatmaps}
Figure~\ref{fig:ood-heatmap-appendix} extends the main-text OOD analysis (Fig.~\ref{fig:ood-heatmap}) by sweeping the seed budget $n\in\{15,20,30\}$ and the target overlap ratio $\alpha\in\{0.15,0.2,0.3\}$. Overall, most reference–target pairs preserve strong precision and recall under OOD seeding; the few degraded cases align with our Theorem~\ref{thm-local-iso}, which predicts that links supported primarily by long-range anchors are less reliable.

\begin{figure*}
  \centering
  \begin{subfigure}[t]{0.49\textwidth}
    \centering
    \includegraphics[width=\linewidth]{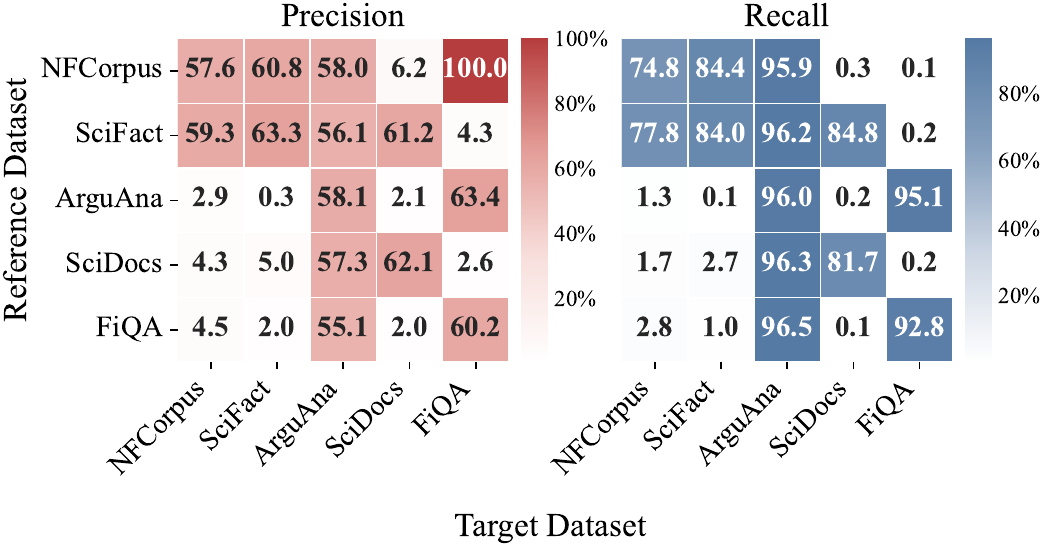}
    \caption{$n{=}15$, $o{=}0.15$}
    \label{fig:ood-n15-o015}
  \end{subfigure}\hfill
  \begin{subfigure}[t]{0.49\textwidth}
    \centering
    \includegraphics[width=\linewidth]{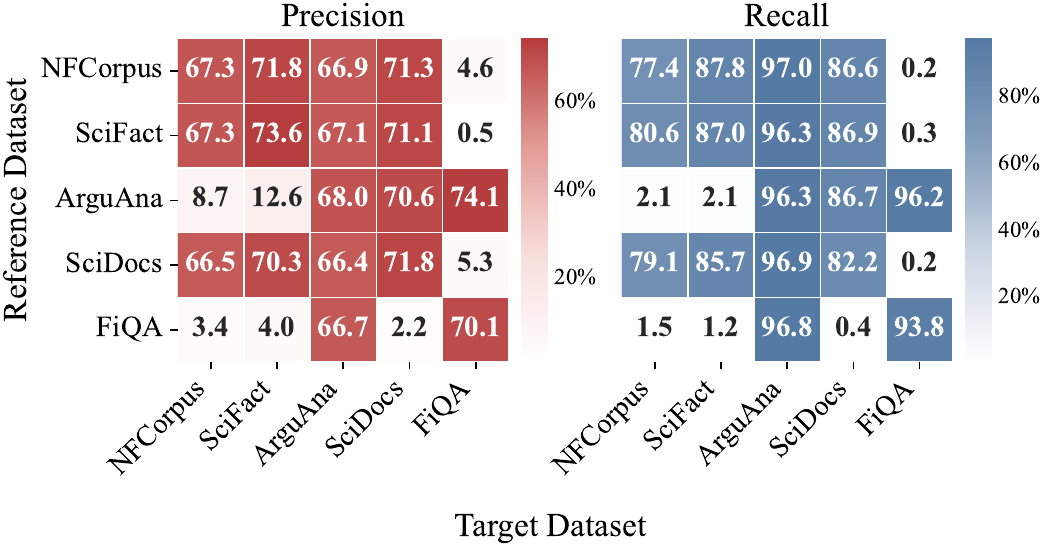}
    \caption{$n{=}15$, $o{=}0.2$}
    \label{fig:ood-n15-o02}
  \end{subfigure}

  \begin{subfigure}[t]{0.49\textwidth}
    \centering
    \includegraphics[width=\linewidth]{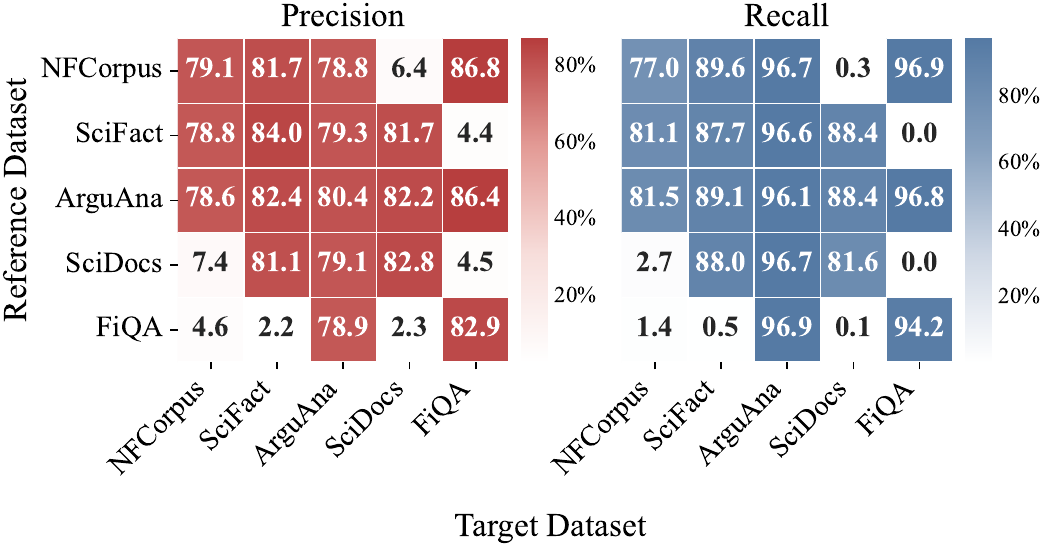}
    \caption{$n{=}15$, $o{=}0.3$}
    \label{fig:ood-n15-o03}
  \end{subfigure}\hfill
  \begin{subfigure}[t]{0.49\textwidth}
    \centering
    \includegraphics[width=\linewidth]{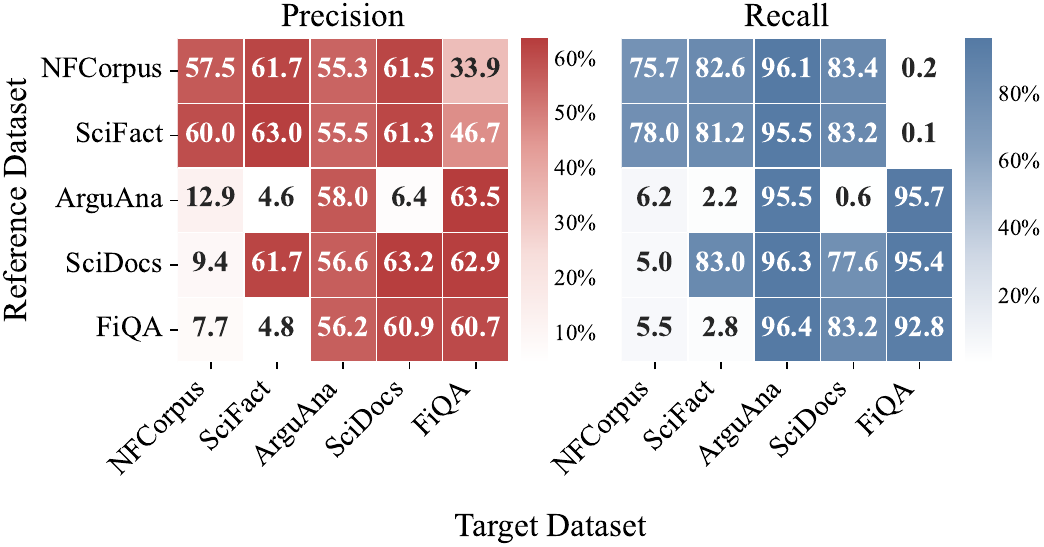}
    \caption{$n{=}20$, $o{=}0.15$}
    \label{fig:ood-n20-o015}
  \end{subfigure}

  \begin{subfigure}[t]{0.49\textwidth}
    \centering
    \includegraphics[width=\linewidth]{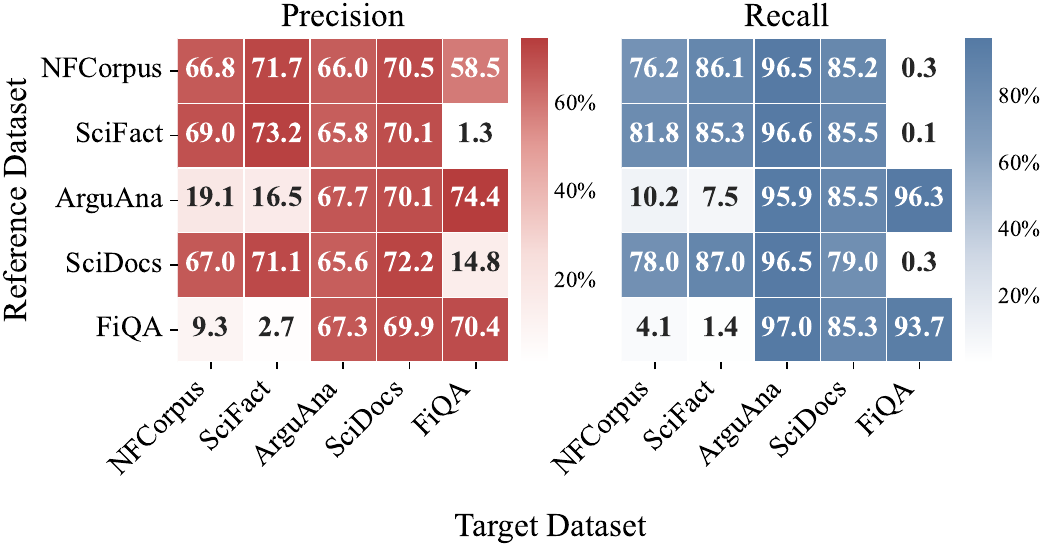}
    \caption{$n{=}20$, $o{=}0.2$}
    \label{fig:ood-n20-o02}
  \end{subfigure}\hfill
  \begin{subfigure}[t]{0.49\textwidth}
    \centering
    \includegraphics[width=\linewidth]{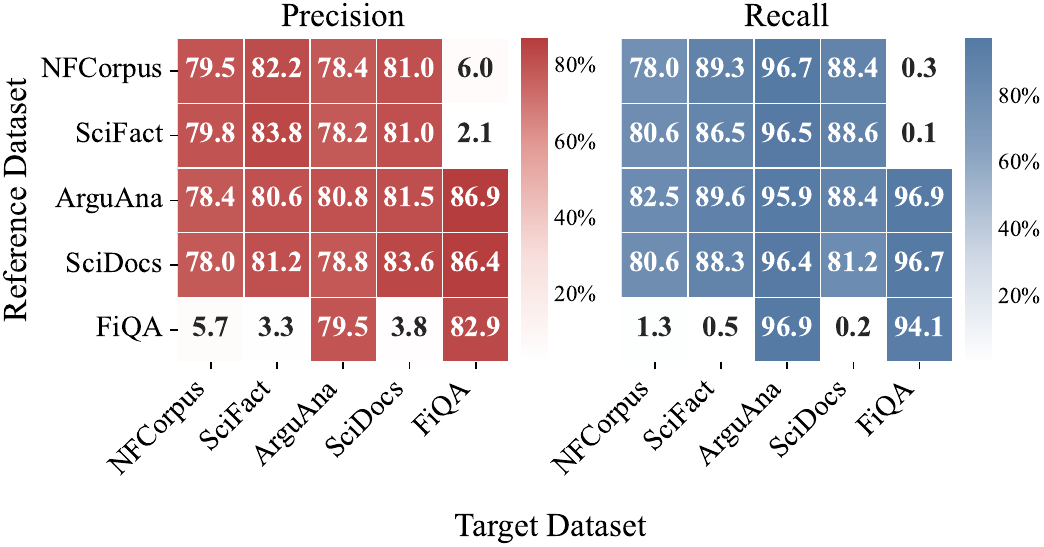}
    \caption{$n{=}20$, $o{=}0.3$}
    \label{fig:ood-n20-o03}
  \end{subfigure}

  \begin{subfigure}[t]{0.49\textwidth}
    \centering
    \includegraphics[width=\linewidth]{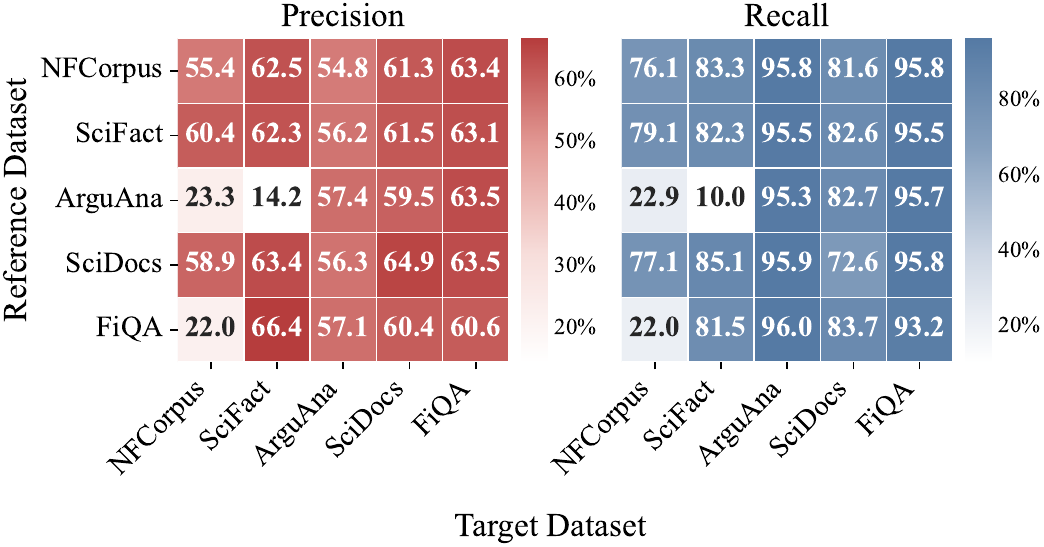}
    \caption{$n{=}30$, $o{=}0.15$}
    \label{fig:ood-n30-o015}
  \end{subfigure}\hfill
  \begin{subfigure}[t]{0.49\textwidth}
    \centering
    \includegraphics[width=\linewidth]{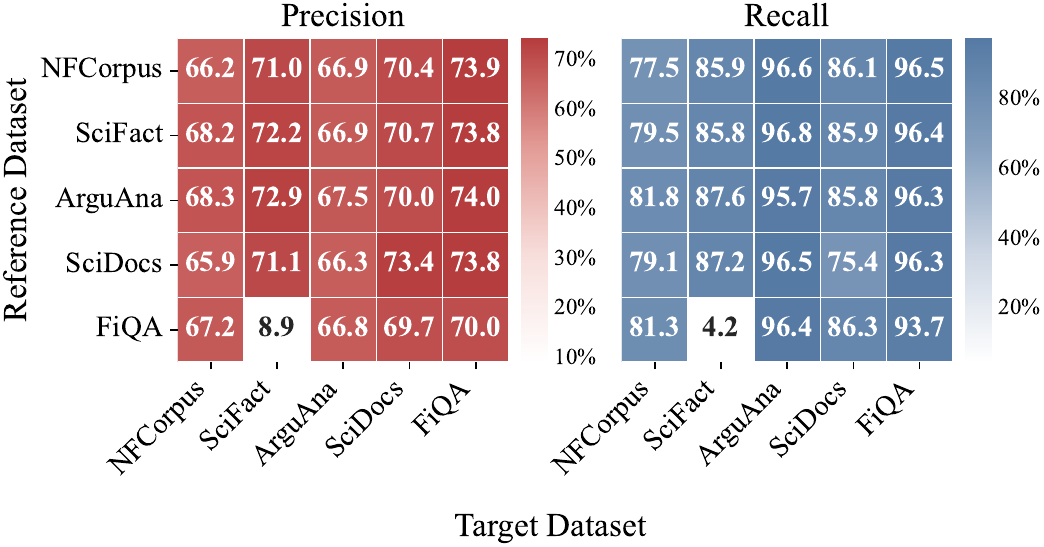}
    \caption{$n{=}30$, $o{=}0.2$}
    \label{fig:ood-n30-o02}
  \end{subfigure}

  \caption{Out-of-domain reference transfer (additional settings):{\normalfont\footnotesize
  Accuracy (left) and recall (right) on five target datasets (columns) when seeds are drawn from an out-of-domain reference dataset (rows).
  Each panel varies the number of seeds $n$ and target overlap $o$. The main text reports the case $n{=}30$, $o{=}0.3$ (Fig.~\ref{fig:ood-heatmap}).}}
  \label{fig:ood-heatmap-appendix}
\end{figure*}

\eat{
\subsection{Scalability on FEVER}
\label{app:scalability}

\stitle{Setup.} We test scalability on FEVER~\cite{thakur2021beir}
using \mistral $\leftrightarrow$ \openai embeddings, fixing the overlap ratio
to $\alpha=0.3$, seed budget $|\Sc|=30$). All methods are run on a single
NVIDIA A100 (80\,GB). For \mvh, the per-view local-set restriction described
in Appendix~\ref{app:per-view-impl} is automatically activated because
$\max(|\Ec_1|,|\Ec_2|)>\tau$.

\stitle{Results.} Table~\ref{tab:fever_scalability} reports precision, recall,
and end-to-end wall-clock runtime on a single A100. \mvh attains $93.8\%$
precision and $68.9\%$ recall in $3328$\,s. \mvh
remains within the same order of magnitude as the fastest baseline (\cca,
$\sim$$2.1\times$ slower) and is faster than \pr/TPS/\mlp/\linear/\rcsls{}
despite its iterative design. The performance gap is large: relative to \cca,
\mvh{} improves precision from $5.22\%$ to $93.8\%$  and
recall from $0.61\%$ to $68.9\%$, at only $\sim$$2.1\times$
the wall-clock cost.

\begin{table}[t]
\centering
\small
\caption{Scalability on \fever{} 
{\normalfont\footnotesize on \mistral$\leftrightarrow$\openai, single A100 80\,GB GPU,
overlap $\alpha=0.3$, $|\Sc|=30$. \textbf{Bold} marks the best per column;
runtime is end-to-end wall-clock seconds.}}
\label{tab:fever_scalability}
\begin{tabular}{lccc}
\toprule
Method & Precision (\%) & Recall (\%) & Runtime (s) \\
\midrule
\linear & 4.13 & 0.00 & 4420 \\
\cca    & 5.22 & 0.61 & \textbf{1613} \\
\mlp    & 1.93 & 0.01 & 4414 \\
\rcsls  & 6.40 & 0.11 & 3348 \\
\pr     & 9.03 & 0.73 & 4494 \\
TPS     & 9.06 & 0.73 & 4517 \\
\textbf{\mvh} & \textbf{93.8} & \textbf{68.9} & 3328 \\
\bottomrule
\end{tabular}
\end{table}

CSLS+MNN link extraction is shared by all alignment baselines, so the
incremental cost of \mvh{} comes only from evaluating multiple views.
Crucially, each view operates in the low-dimensional distance-to-anchor hash
space induced by a small anchor set, rather than re-matching in the original
embedding space. The iterative procedure thus performs many moderate-cost
hash-space retrievals rather than repeated dense searches over full embedding
clouds. 
}

\section{Details of Section~\ref{sec-applications}}
\label{app:application}
\subsection{Implementation Details of Applications}
\subsubsection{Vector Database Integration}
\label{sec:appendix_la2m}

We follow the evaluation protocol of \citet{LA2M}: for each benchmark corpus $O$, we place all benchmark answer documents
only in $O_1 \cup O_2$ (i.e., $O_\cap$ contains no answer documents). We then build two vector databases
$D_1 = \mathrm{emb}_1(O_1 \cup O_\cap)$ and $D_2 = \mathrm{emb}_2(O_2 \cup O_\cap)$, and evaluate retrieval using queries
encoded by $\mathrm{emb}_2$, i.e., we query the integrated database with $\mathrm{emb}_2(q)$. 

Given vector links $(X,Y)$ induced by $O_\cap$ (vectors in $D_1$ and $D_2$ that encode the same items),
we compute an integration mapping $T$ from the vector space of $D_1$ to that of $D_2$ using the local-isometry-based
framework of \citet{LA2M}, and return the integrated database $T(D_1)\cup D_2$. Let $Q$ be the benchmark query set, and for each query $q\in Q$ let $\mathrm{ans}_q$ denote its ground-truth relevant set.
Let $\mathrm{top}\text{-}k(q)$ be the top-$k$ results returned by searching the integrated database with $\mathrm{emb}_2(q)$.
We report:
\[
\mathrm{Recall@}k \;=\; \frac{1}{|Q|}\sum_{q\in Q}\frac{|\mathrm{ans}_q \cap \mathrm{top}\text{-}k(q)|}{|\mathrm{ans}_q|}.
\]
For rank-sensitive evaluation, we also report NDCG:
\[
\mathrm{NDCG@}k \;=\; \frac{\mathrm{DCG@}k}{\mathrm{IDCG@}k},
\qquad
\mathrm{DCG@}k \;=\; \sum_{i=1}^{k}\frac{\mathrm{rel}_i}{\log_2(i+1)},
\]
where $\mathrm{rel}_i$ is the graded relevance of the item at rank $i$ and $\mathrm{IDCG@}k$ is the DCG of the ideal ranking.

We use a FAISS GPU index with inner-product search over $\ell_2$-normalized embeddings (equivalently cosine similarity).

\subsubsection{Global Cross-Model Clustering}\label{app:global_xmodel_cluster}
We evaluate cross-model clustering using two clustering benchmarks from MTEB \cite{enevoldsen2025mmtebmassivemultilingualtext}.
Both datasets consist of short \emph{titles} and provide gold cluster labels (e.g., subreddit or StackExchange community). 
Dataset statistics are summarized in Table~\ref{tab:clustering_datasets_stats}.
\begin{table*}
\centering
\caption{Cross-model clustering datasets from MTEB (test split metadata).\normalfont\footnotesize{Lengths are measured in characters per title.}}
\label{tab:clustering_datasets_stats}
\begin{tabular}{lrrrr}
\toprule
Dataset & \#Titles & \#Clusters & Avg. len. & Min/Max len. \\
\midrule
RedditClustering.v2~\cite{geigle2021tweactransformerextendableqa} & 2048 & 50  & 65.49 & 18 / 299 \\
StackExchangeClustering.v2~\cite{geigle2021tweactransformerextendableqa} & 2048 & 121 & 57.51 & 19 / 148 \\
\bottomrule
\end{tabular}
\end{table*}

For a dataset with texts $\{t_\ell\}_{\ell=1}^N$ and gold labels $\{y_\ell\}$, we generate two embedding sets
$\Ec_1 =\{e^{(1)}_\ell\}$ and $\Ec_2 =\{e^{(2)}_\ell\}$ using two embedding models (e.g., \qwen{} and \kalm{}). We then create
a partial-overlap partition by selecting index sets $\mathcal{I}_1,\mathcal{I}_2\subseteq[N]$ such that
$\mathcal{I}_\cap=\mathcal{I}_1\cap\mathcal{I}_2$ contains the shared items and $\mathcal{I}_1\setminus\mathcal{I}_2$,
$\mathcal{I}_2\setminus\mathcal{I}_1$ are model-specific items. The ground-truth correspondence set is
$\mathcal{P}^\star=\{(\ell,\ell): \ell\in\mathcal{I}_\cap\}$; predicted correspondences $\hat{\mathcal{P}}$ are produced by a linking method.

\stitle{Graph construction.}
For each embedding space, we build a $k$-NN graph $G_1$ on $\{e^{(1)}_\ell:\ell\in\mathcal{I}_1\}$ and $G_2$ on
$\{e^{(2)}_\ell:\ell\in\mathcal{I}_2\}$ (cosine similarity). We choose $k$ adaptively to ensure connectivity, by increasing $k$ until the largest connected component covers at least $95\%$ of nodes. We then form a unified graph $G$ by merging each correspondence pair $(\ell_1,\ell_2)\in\hat{\mathcal{P}}$ into a single super-node that inherits the incident edges from both $G_1$ and $G_2$. When no correspondences are provided, $G$ is simply the disjoint union of $G_1$ and $G_2$.

\stitle{Clustering.}
We apply Leiden community detection on $G$. To make comparisons fair across methods, we use the same graph-based clustering pipeline throughout
and tune the Leiden resolution by binary search to match the known number of gold clusters in the evaluated set. 

We report : (i) \textbf{Full-space} single-model clustering on each complete embedding space independently (\qwen and \kalm), serving as optimal references; (ii) \textbf{Concat}, which zero-pads embeddings to a common dimension and concatenates them without using any cross-space correspondences; (iii) \textbf{Seed}, which stitches the two $k$-NN graphs by node-merging using only ground-truth seed correspondences; and (iv) \textbf{Ours}, which performs the same node-merging procedure using predicted correspondences $\hat{\mathcal{P}}$ from \mvh.
                                                                                                    
\subsection{Experimental Results on Cross Model Clustering}
We report cross-model clustering results for \qwen$\leftrightarrow$\kalm at overlap ratios $\alpha\in\{0.2,0.3\}$ and seed budgets $n\in\{20,30\}$. We evaluate clustering quality using V-measure, NMI, and ARI, and additionally report
\emph{Overlap Agreement Rate} (OAR), defined as the fraction of overlapped items whose two embeddings (one from each space) are assigned to the
same community in the unified clustering. OAR is not reported for naive concatenation since it produces a disjoint union of the two graphs and
does not induce cross-space communities. As shown in Table~\ref{tab:clustering_unified}, using only seed correspondences yields limited cross-space connectivity and suboptimal global coherence,
whereas using \mvh{} to stitch the graphs achieves high cross-space coupling (OAR $=75$--$98\%$) and recovers cluster quality within $\approx1\%$
of single-space performance.
\begin{table}[!htbp]
    \centering
    \footnotesize
    \setlength{\tabcolsep}{3pt}
    \caption{Cross-model clustering performance for Qwen$\leftrightarrow$KaLM embeddings.\normalfont\footnotesize{
    Each cell reports V-measure / NMI / ARI (\%).
    OAR = Overlap Agreement Rate (\%).
    \textbf{Bold} indicates best per metric among Concat/Seed/OURS for each configuration.}}
    \label{tab:clustering_unified}
\begin{tabular}{@{}lccccccccc@{}}
\toprule
\multirow{2}{*}{Dataset} &
  \multicolumn{1}{c}{\multirow{2}{*}{Overlap}} &
  \multirow{2}{*}{Seed} &
  \multicolumn{3}{c}{V / NMI / ARI} &
  \multicolumn{2}{c}{OAR} &
  \multicolumn{2}{c}{V / NMI / ARI} \\
 & \multicolumn{1}{c}{} &    & Concat         & Seed           & OURS                                      & Seed & OURS          & Qwen & KaLM \\ \midrule
\multirow{4}{*}[-0.6ex]{Reddit} &
  \multirow{2}{*}{0.2} &
  20 &
  52.8/52.8/20.7 &
  53.4/53.4/20.8 &
  \textbf{62.8}/\textbf{62.8}/\textbf{37.6} &
  6.6 &
  \textbf{83.2} &
  \multirow{4}{*}[-0.6ex]{63.5/63.5/39.8} &
  \multirow{4}{*}[-0.6ex]{65.6/65.6/42.7} \\
 &                      & 30 & 52.8/52.8/20.7 & 54.0/54.0/22.3 & \textbf{63.0}/\textbf{63.0}/\textbf{38.0} & 10.5 & \textbf{84.6} &      &      \\ \cmidrule(lr){2-8}
 & \multirow{2}{*}{0.3} & 20 & 51.7/51.7/19.2 & 53.6/53.6/21.6 & \textbf{66.5}/\textbf{66.5}/\textbf{44.6} & 4.6  & \textbf{96.9} &      &      \\
 &                      & 30 & 51.7/51.7/19.2 & 53.9/53.9/22.1 & \textbf{67.1}/\textbf{67.1}/\textbf{45.6} & 7.7  & \textbf{97.6} &      &      \\ \midrule
\multirow{4}{*}[-0.6ex]{StackEx} &
  \multirow{2}{*}{0.2} &
  20 &
  62.2/62.2/19.2 &
  62.5/62.5/20.2 &
  \textbf{67.0}/\textbf{67.0}/\textbf{28.9} &
  7.1 &
  \textbf{75.9} &
  \multirow{4}{*}[-0.6ex]{68.7/68.7/30.7} &
  \multirow{4}{*}[-0.6ex]{68.8/68.8/31.1} \\
 &                      & 30 & 62.2/62.2/19.2 & 62.4/62.4/20.2 & \textbf{67.4}/\textbf{67.4}/\textbf{30.2} & 9.8  & \textbf{88.8} &      &      \\ \cmidrule(lr){2-8}
 & \multirow{2}{*}{0.3} & 20 & 61.8/61.8/18.9 & 61.8/61.8/19.6 & \textbf{67.5}/\textbf{67.5}/\textbf{31.2} & 5.9  & \textbf{78.8} &      &      \\
 &                      & 30 & 61.8/61.8/18.9 & 62.1/62.1/20.1 & \textbf{68.3}/\textbf{68.3}/\textbf{32.7} & 8.3  & \textbf{91.9} &      &      \\ \bottomrule
\end{tabular}
\end{table}

\end{document}